
\documentclass[journal]{IEEEtran}

\usepackage[utf8]{inputenc}
\usepackage{blindtext}
\usepackage{tabularx}

\usepackage{amsthm}

\usepackage{mathtools}
\usepackage{amssymb}
\usepackage{cite}
\usepackage[usenames,dvipsnames]{xcolor}
\usepackage{tikz}
\usepackage{pgfplots}
\usepackage{svg}
\usepackage{adjustbox}
\usepackage{comment}
\usepackage{standalone}
\usepackage{calc}
\usepackage[acronym]{glossaries}
\usepackage{ifthen}
\newboolean{proofs}
\usepackage[hyphens]{url}

\usepackage[
 colorlinks,
    linkcolor={blue!50!black},
    citecolor={blue!50!black},
    urlcolor={blue!80!black}]{hyperref}
\usepackage{bookmark}
\hypersetup{
  urlcolor=RoyalBlue, linkcolor=black, citecolor=black, %
}
\usepackage{multicol}
\usepackage{todonotes}
\presetkeys{todonotes}{inline}{}

\usepackage[capitalize]{cleveref}
\Crefname{equation}{Eq.}{Eqs.}
\Crefname{figure}{Fig.}{Figs.}
\Crefname{tabular}{Tab.}{Tabs.}
\Crefname{table}{Tab.}{Tabs.}
\Crefname{definition}{Def.}{Defs.}
\Crefname{section}{Sec.}{Sects.}
\Crefname{subsection}{Sec.}{Sects.}
\Crefname{theorem}{Thm.}{Thms.}
\Crefname{condition}{Cond.}{Conds.}

\usepackage{graphicx}
\usepackage{booktabs}
\usepackage{threeparttable}
\usepackage{multirow}
\usepackage[ruled,vlined]{algorithm2e}
\usepackage{tikz, pgfplots, pgfplotstable}
\usetikzlibrary{
    positioning,
  	cd,
  	shapes,
  	math,
  	decorations.markings,
  	decorations.pathmorphing,
	decorations.pathreplacing,
  	arrows.meta,
  	calc,
  	fit,
  	quotes,
   intersections,
   hobby,
   patterns,
   decorations.pathmorphing, decorations.markings, shadows,shapes, cd,arrows.meta, fit,quotes
  }
\tikzcdset{arrow style=tikz, diagrams={>=To}}

\tikzset{
   tick/.style={postaction={
      decorate,
      decoration={markings, mark=at position 0.5 with {\draw[-] (0,.4ex) -- (0,-.4ex);}}}
   }
}
\pgfplotsset{compat=1.15}

\tikzstyle{block} = [draw, rectangle, minimum height=2em, minimum width=3em,thick]

\tikzstyle{blockdot} = [block, dotted,rounded corners=4, inner sep=-2pt]

\tikzstyle{blockfill} = [block,rounded corners=4, inner sep=-2pt,fill=blue!5!white]

\tikzstyle{every node}=[font=\footnotesize]

\makeatletter
\let\MYcaption\@makecaption
\makeatother
\usepackage[font=footnotesize]{subcaption}
\makeatletter
\let\@makecaption\MYcaption
\makeatother

\theoremstyle{definition}
\newtheorem{theorem}{Theorem}

\newtheorem{lemma}[theorem]{Lemma}

\newtheorem{remark}[theorem]{Remark}
\newtheorem{definition}[theorem]{Definition}

\makeatletter
\def\@opargbegintheorem#1#2#3{\trivlist
   \item[]{\bfseries #1\ #2\ (#3)} \itshape}
\makeatother

\DeclareFontFamily{U}{mathx}{\hyphenchar\font45}
\DeclareFontShape{U}{mathx}{m}{n}{
      <5> <6> <7> <8> <9> <10>
      <10.95> <12> <14.4> <17.28> <20.74> <24.88>
      mathx10
      }{}
\DeclareSymbolFont{mathx}{U}{mathx}{m}{n}
\DeclareFontSubstitution{U}{mathx}{m}{n}
\DeclareMathAccent{\widecheck}{0}{mathx}{"71}

\newacronym{abk:av}{AV}{autonomous vehicle}
\newcommand{\agent}{\mathcal{A}}

\newcommand{\agentset}{\mathbb{A}}

\newcommand{\appearance}{\mathrm{appear}}
\newcommand{\appearanceset}{\mathrm{AP}}

\newcommand{\body}{\mathcal{B}}

\newcommand{\confspace}{\mathcal{Q}}
\newcommand{\confspacen}[1]{\mathcal{Q}_{#1}}

\newcommand{\config}[1]{q_{#1}}

\newcommand{\confspacenagent}[1]{Q_{0#1}}
\newcommand{\configagent}[1]{q_{0#1}}
\newcommand{\class}[1]{\mathcal{C}_{#1}}
\newcommand{\classinstance}[1]{C_{#1}}
\newcommand{\classinstanceset}[1]{\mathbb{C}_{#1}}
\newcommand{\controlinputs}[1]{\mathcal{U}_{#1}}
\newcommand{\controlinput}[1]{u_{#1}}

\newcommand{\sethree}{\mathrm{SE(3)}}
\newcommand{\setwo}{\mathrm{SE(2)}}

\newcommand{\sensreq}{\mathrm{PR}}

\newacronym{av}{AV}{Autonomous Vehicle}
\newacronym{ad}{AD}{Autonomous Driving}
\newacronym{algname}{CODEI}{Co-design of Embodied Intelligence}
\newcommand{\algname}{CODEI}

\definecolor{darkgreen}{rgb}{0.0, 0.7, 0.0}
\definecolor{darkred}{rgb}{0.7, 0.0, 0.0}

\newcommand{\dynamics}[1]{\mathrm{dyn}_{#1}}

\newcommand{\environment}{\mathrm{env}}
\newcommand{\environmentset}{\mathbb{E}}
\newcommand{\emtpyfunc}{\mathrm{emp}}
\newcommand{\edited}[1]{{\color{black}#1}}

\newacronym{fnr}{FNR}{False Negative Rate}
\newacronym{fpr}{FPR}{False Positive Rate}
\newacronym{fn}{FN}{False Negative}
\newacronym{fp}{FP}{False Positive}
\newacronym{tp}{TP}{True Positive}
\newacronym{fov}{FoV}{Field of View}

\newacronym{ilp}{ILP}{Integer Linear Programming}

\newcommand{\kclass}{\mathrm{K}_{\text{class}}}

\newcommand{\lvsp}{\mathrm{L}_{\text{mpp}}}

\newcommand{\mounts}{\mathrm{mp}}
\newcommand{\mountsset}{\mathrm{MP}}
\newcommand{\mountso}{\mathrm{mo}}
\newcommand{\mountsoset}{\mathrm{MO}}

\newcommand{\multipolygon}{\mu}
\newcommand{\menv}{\mathrm{M}_{\text{env}}}
\newcommand{\mountedpercpset}{\mathrm{MPP}}

\newcommand{\op}{^{\mathrm{op}}}

\newcommand{\powerset}[1]{\mathtt{POW}(#1)}

\def\prov{\mathsf{prov}}
\newcommand{\prior}[1]{\mathcal{P}_{#1}}

\newcommand{\percpset}{\mathrm{PP}}
\newcommand{\pcp}{\mathrm{pcp}}
\newcommand{\priorcheck}{\mathrm{priorcheck}}
\newcommand{\ppp}{\mathrm{ppp}}

\newcommand{\query}{\psi}
\newcommand{\queryspace}{\Psi}

\newcommand{\collision}{\mathrm{collision}}
\newcommand{\compress}{\mathrm{compress}}

\newcommand{\R}[1]{{\color{dpred}#1}}

\def\req{\mathsf{req}}
\newcommand{\reals}{\mathbb{R}}
\newcommand{\naturaln}{\mathbb{N}}
\newcommand{\robot}{\mathcal{R}}

\newcommand{\state}[1]{x_{#1}}
\newcommand{\statespace}{\mathcal{X}}
\newcommand{\threedshape}{\mathrm{SH}}

\newcommand{\shape}[1]{\mathrm{sh}_{#1}}

\newcommand{\scenario}{\mathcal{S}}
\newcommand{\scenarioinstance}{S}
\newcommand{\sensorp}{\mathrm{pp}}
\newcommand{\sensorpset}{\mathrm{PP}}

\ifPDFTeX
\newcommand{\tickar}{\begin{tikzcd}[baseline=-0.5ex,cramped,sep=small,ampersand replacement=\&]{}\ar[r,tick]\&{}\end{tikzcd}}
\else
\newcommand{\tickar}{\nrightarrow}
\fi

\newcommand{\tup}[1]{\langle#1\rangle}
\newcommand{\task}{\mathcal{T}}
\newcommand{\taskspace}{\mathbb{T}}
\newcommand{\taskqueries}{\mathrm{tq}}

\newcommand{\trajectorymap}[1]{\overline{\mathrm{u}}_{#1}}
\newcommand{\trajectorymapspace}[1]{\overline{\mathrm{U}}_{#1}}

\newcommand{\virtsensorp}{\mathrm{mpp}}

\newcommand{\vspccoverage}{\mathrm{mppcc}}
\newcommand{\vspccoveragecat}{\mathrm{MPPC}}

\newcommand{\workspace}{\mathcal{W}}

\newcommand{\xpos}[1]{x_{#1}}

\newcommand{\ypos}[1]{y_{#1}}

\usepackage[acronym]{glossaries}
\newacronym{abk:dp}{DP}{design problem}
\newacronym{abk:dpi}{DPI}{design problem with implementation}
\newacronym{abk:mdpi}{MDPI}{monotone design problem with implementation}
\newacronym{abk:poset}{poset}{partially ordered set}
\definecolor{dpred}{rgb}{0.7, 0.0, 0.0}
\newcommand{\setOfResources}[1]{\R{\mathcal{R}_{#1}}}

\definecolor{dpgreen}{rgb}{0.0, 0.5, 0.0}
\newcommand{\F}[1]{{\color{dpgreen}#1}}
\newcommand{\setOfFunctionalitiesOp}[1]{\F{\mathcal{F}_{#1}}^{\mathrm{op}}}
\newcommand{\setOfFunctionalities}[1]{\F{\mathcal{F}_{#1}}}

\newcommand{\setOfImplementations}[1]{\mathcal{I}_{#1}}
\newacronym{abk:cdp}{CDP}{co-design problem}
\newacronym{abk:cdpi}{CDPI}{co-design problem with implementation}
\newcommand{\power}[1]{\mathcal{P}(#1)}

\usetikzlibrary{positioning,intersections, hobby, patterns, calc, decorations.pathmorphing, decorations.markings, shadows,shapes, cd,arrows.meta, fit,quotes}

\tikzset{
   tick/.style={postaction={
      decorate,
      decoration={markings, mark=at position 0.5 with {\draw[-] (0,.4ex) -- (0,-.4ex);}}}
   }
}
\tikzstyle{block} = [draw, rectangle, minimum height=2em, minimum width=3em,font=\bfseries,rounded corners,thick]
\tikzstyle{block} = [draw, rectangle, minimum height=2em, minimum width=3em]
\tikzstyle{block1} = [draw, rectangle, minimum height=1.5em, minimum width=2.5em]
\tikzstyle{blockDyn} = [draw, rectangle, minimum height=2.5em, minimum width=3.5em, align=center, inner sep=10pt, thick, fill=white, copy shadow={draw=black,fill=black,opacity=1,shadow xshift=0.5ex,shadow yshift=-0.5ex}]
\tikzstyle{blockAlg} = [draw, rectangle, minimum height=1.5em, minimum width=2.5em, align=center, inner sep=10pt, thick]
\tikzstyle{sum} = [draw,circle]

\tikzstyle{nodePre} = [circle, draw,inner sep=1pt,node contents={$\preceq$},thick]
\tikzstyle{nodePreEmpty} = [circle, draw,inner sep=1pt,thick]
\tikzstyle{nodePos} = [circle, draw,inner sep=1pt,node contents={$\posceq$},thick]
\tikzstyle{nodeProd} = [rectangle, draw,inner sep=4pt,node contents={$\times$},rounded corners,thick]
\tikzstyle{nodeSum} = [rectangle, draw,inner sep=4pt,node contents={$\mathbf{+}$},rounded corners,thick]

\definecolor{red}{rgb}{0.75, 0.0, 0.0}

\tikzset{fcname/.store in =\fcname, fcname={}}
\tikzset{funame/.store in =\funame, funame={}}
\tikzset{rcname/.store in =\rcname, rcname={}}
\tikzset{runame/.store in =\runame, runame={}}
\tikzset{whereres/.store in =\whereres, whereres=0.5}
\tikzset{wherefun/.store in =\wherefun, wherefun=0.5}
\tikzset{relres/.store in =\relres, relres={above}}
\tikzset{relfun/.store in =\relfun, relfun={above}}
\tikzset{posres/.store in =\posres, posres=1}
\tikzset{posfun/.store in =\posfun, posfun=1}
\tikzset{loos/.store in =\loos, loos=2}
\tikzset{feedback/.store in =\feedback, feedback=0}
\tikzset{
   DP/.style={%
      label/.style={
         font=\everymath\expandafter{\the\everymath\scriptstyle},
         inner sep=5pt,
         node distance=2pt and -2pt},
      semithick,
      node distance=1 and 1,
      rconn/.style={color=white,opacity=0.0,postaction={decorate}, shorten <=3.2pt, shorten >= 0.8,
      decoration={markings, 
      mark= at position 0 with {
               \coordinate (a);
      },
      mark=at position .5 with
      {
              \ifthenelse{\equal{\feedback}{1}}{\def\angleOut{90}\def\angleIn{90}}{\def\angleOut{0}\def\angleIn{180}}    
              \coordinate (b);
              \draw[dashed,dpred,opacity=1.0] (a) to[out=\angleOut,in=\angleIn,looseness=\loos] 
              node[pos=\posres,\relres=\whereres mm,dpred,opacity=1,fill=white,inner sep=1pt,outer sep=1pt]{\footnotesize{\rcname}} (b);
      },
      mark= at position 1 with 
      {
             \ifthenelse{\equal{\feedback}{1}}{\def\angleOut{0}\def\angleIn{0}}{\def\angleOut{180}\def\angleIn{0}} 
              \ifthenelse{\equal{\feedback}{1}}{\def\symbol{\succeq}}{\def\symbol{\preceq}} 
              \coordinate (c);
              \draw[dpgreen,opacity=1.0] (c) to[out=\angleOut,in=\angleIn,looseness=\loos]
              node[pos=\posfun,\relfun=\wherefun mm,dpgreen,opacity=1,fill=white,inner sep=1pt,outer sep=1pt]{\footnotesize{\fcname}} (b){}; %
              \node[draw,circle,inner sep=0.5pt,color=black,fill=white,opacity=1.0] at (b) (nodepreceq) {$\symbol$}; 
      }
      }},
      runconn/.style={color=dpred,dashed,postaction={decorate},
      decoration={markings,
      mark= at position 1 with {
              \coordinate (a);
              \draw[dpred,opacity=1.0,dashed] ($(a)+(0.05,0)$) --++ (0.5,0) node[\relres,pos=\posres]{\footnotesize{\runame}};}
      }
      },
      funconn/.style={color=white,postaction={decorate},
      decoration={markings,
      mark= at position 0 with {
      \coordinate (a);
      \draw[dpgreen] ($(a)+(-0.05,0)$) -- ($(a)+(-0.5,0)$) node[\relfun, pos=\posfun]{\footnotesize{\funame}};}
      }
      },
      execute at begin picture={\tikzset{
         x=\dpx, y=\dpy,
         every fit/.style={inner xsep=\dpx, inner ysep=\dpy}}}
      },
   dpx/.store in=\dpx,
   dpx = 1.5cm,
   dpy/.store in=\dpy,
   dpy = 1.5ex,
   dp port sep/.store in=\dpportsep,
   dp port sep=2,
   dp port length/.store in=\dpportlen,
   dp port length=4pt,
   dp min width/.store in=\dpminwidth,
   dp min width=0.5cm,
   dp rounded corners/.store in=\dpcorners,
   dp rounded corners=2pt,
   dp small/.style={dp port sep=1, dp port length=2.5pt, dpx=.4cm, dp min width=.4cm, dpy=.7ex},
   dp/.code 2 args={%
      \pgfmathsetlengthmacro{\dpheight}{\dpportsep * (max(#1,#2)) * \dpy}
      \pgfkeysalso{draw,%
        minimum width=\dpminwidth,%
        minimum height=\dpheight,%
        font=\bfseries,
        outer sep=0pt,%
        inner sep=5pt,%
        rounded corners=\dpcorners,
        thick,
        prefix after command={\pgfextra{\let\fixname\tikzlastnode}},
        append after command={\pgfextra{\draw
            \ifnum #1=0{} \else foreach \i in {1,...,#1} { 
            ($(\fixname.north west)!{\i/(#1+1)}!(\fixname.south west)$) +(0,0) node[solid,left,circle,color=dpgreen,draw,fill=dpgreen,scale=0.3] {} coordinate (\fixname_fun\i) -- +(0,0) coordinate (\fixname_fun\i')}\fi %
            \ifnum #2=0{} \else foreach \i in {1,...,#2} {
            ($(\fixname.north east)!{\i/(#2+1)}!(\fixname.south east)$) +(0,0) coordinate (\fixname_res\i') -- +(0,0) node[solid,right,circle,color=dpred,draw,fill=dpred,scale=0.3] {} coordinate (\fixname_res\i)}\fi;
         }}}
         },
      dp name/.style={append after command={\pgfextra{\node[label=center,inner sep=2pt,fill=white] at (\fixname) {\textbf{#1}};}}}
   }
\usepackage{ifxetex}
\ifxetex
\usepackage[tracking=false,kerning=false,spacing=false]{microtype}
\usepackage[protrusion=true,tracking=false,kerning=false,spacing=false]{microtype}
\else
\usepackage[tracking=false,kerning=true,spacing=true]{microtype}

\pdfprotrudechars=2
\pdfadjustspacing=2

\fi
\setlength\abovecaptionskip{0pt}
\setlength\belowcaptionskip{0pt}
\usepackage{enumitem}

\usepackage{units}
\usepackage{hyperref}
\usepackage{url}
\usepackage{array} %
\newcolumntype{P}[1]{>{\raggedright\arraybackslash}p{#1}}

\begin{document}
\setboolean{proofs}{true}
\bstctlcite{IEEEexample:BSTcontrol}
\title{\LARGE \bf
	\edited{CODEI: Resource-Efficient Task-Driven Co-Design of Perception and Decision Making for Mobile Robots Applied to Autonomous Vehicles}
}
\author{Dejan Milojevic$^{1,2}$, Gioele Zardini$^{3}$, Miriam Elser$^{2}$, Andrea Censi$^{1}$, Emilio Frazzoli$^{1}$
\thanks{
$^{1}$Institute for Dynamic Systems and Control, ETH Z\"urich, Zurich, Switzerland {\tt\small \{dejanmi, acensi, efrazzoli\}@ethz.ch}}
\thanks{
$^{2}$Chemical Energy Carriers and Vehicle Systems Laboratory, Empa - Swiss Federal Laboratories for Materials Science and Technology, D\"ubendorf, Switzerland {\tt\small miriam.elser@empa.ch}}
\thanks{
$^{3}$Laboratory for Information and Decision Systems, Massachusetts Institute of Technology, Cambridge, MA, USA {\tt\small gzardini@mit.edu}}
}

\maketitle
\begin{abstract}
This paper discusses the integration challenges and strategies for designing mobile robots, by focusing on the task-driven, optimal selection of hardware and software to balance safety, efficiency, and minimal usage of resources such as costs, energy, computational requirements, and weight.
\edited{We emphasize the interplay between perception and motion planning in decision-making by introducing the concept of occupancy queries to quantify the perception requirements for sampling-based motion planners. Sensor and algorithm performance are evaluated using~\gls{fnr} and~\gls{fpr} across various factors such as geometric relationships, object properties, sensor resolution, and environmental conditions. By integrating perception requirements with perception performance, an~\gls{ilp} approach is proposed for efficient sensor and algorithm selection and placement.}
This forms the basis for a co-design optimization that includes the robot body, motion planner, perception pipeline, and computing unit. \edited{We refer to this framework for solving the co-design problem of mobile robots as~\algname, short for Co-design of Embodied Intelligence.}
A case study on developing an~\gls{av} for urban scenarios provides actionable information for designers, and shows that complex tasks escalate resource demands, with task performance affecting choices of the autonomy stack.
The study demonstrates that resource prioritization influences sensor choice: cameras are preferred for cost-effective and lightweight designs, while lidar sensors are chosen for better energy and computational efficiency.

\end{abstract}
\begin{IEEEkeywords}
Co-design, mobile robots, sensor selection.\end{IEEEkeywords}
\IEEEpeerreviewmaketitle
\section{Introduction}
\IEEEPARstart{E}~mbodied intelligent systems hold great promise for addressing critical societal challenges and enhancing our daily lives.
Whether revolutionizing mobility via autonomous driving, or supply-chain via automated logistics, this technology will impact the world we live in.
However, realizing the full potential of these advances depends on the efficient design and safe operation of such systems.
The complexity of developing embodied intelligence lies in selecting the optimal mix of interdependent hardware and software components.
The final design must ensure safety and efficient task performance while minimizing the resources required for design and operation, such as cost, power consumption, computation, and weight.

In the context of robot perception this involves the choice and placement of sensors and the selection of algorithms which process the sensor measurements. 
Clearly, hardware and software choices are interdependent and influence each other. 
Moreover, they are interconnected with other systems such as the computing units, actuators, or decision making. 
Indeed, a controller relies on the reference created by a motion planner, which is based on state estimates from an estimator, which in turn depends on sensor data and power supply. 
In addition, the integration of perception software such as object detection algorithms introduces uncertainties in the algorithm output that must be carefully considered in the design. 

To tackle these intricate issues, a comprehensive framework which applies abstract reasoning across different areas, and balances functional requirements with resource constraints and trade-offs is needed.
This work outlines our method for addressing the complex task of robot co-design by tackling such challenges.

\begin{figure}[t]
    \centering
    \includegraphics[width=0.49\textwidth]{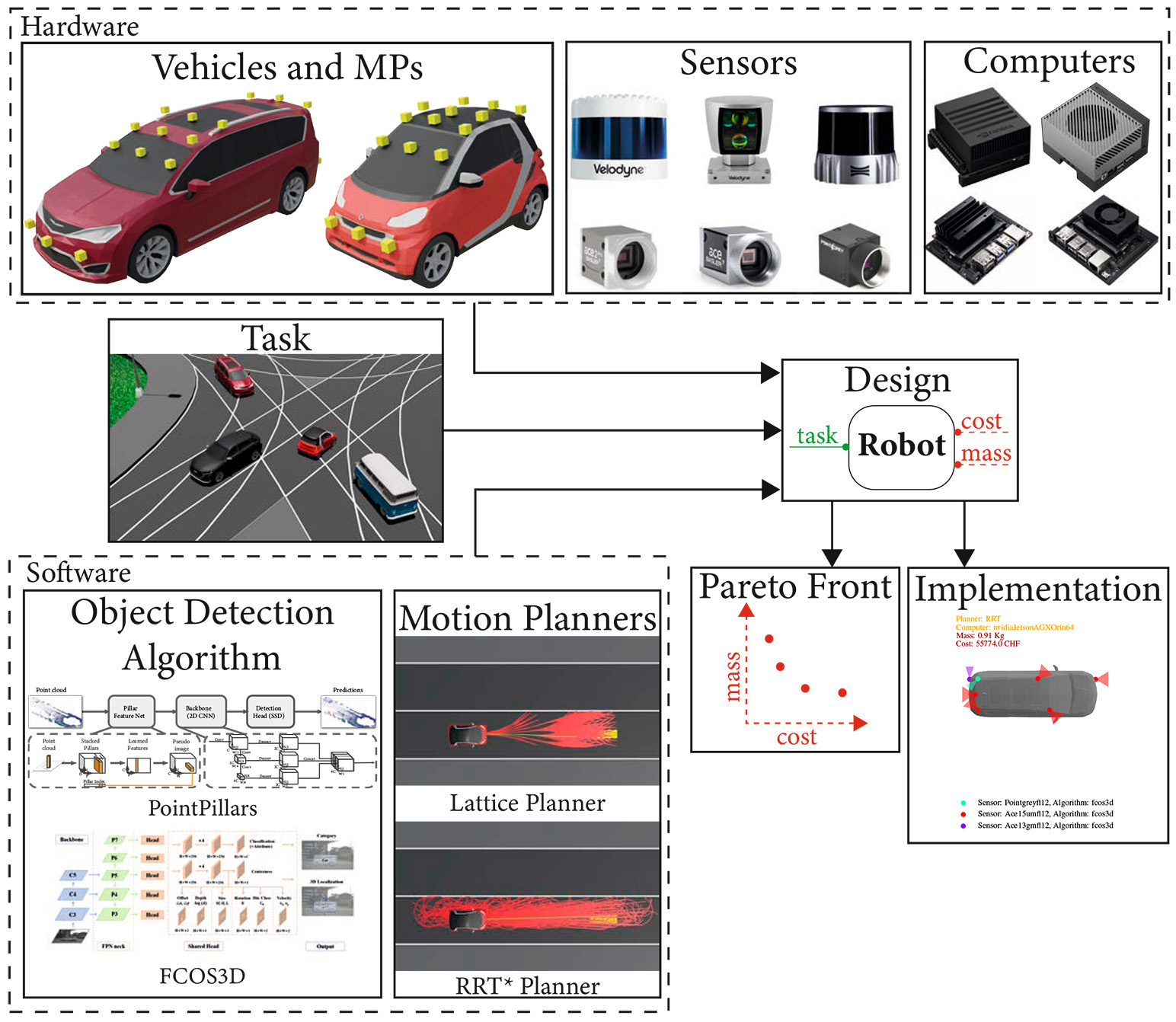}
    \caption{Graphical illustration of the informal problem definition for designing an \gls{av} for urban driving tasks, based on a catalog of hardware and software components with an emphasis on minimizing resources.}
    \label{fig:introproblem}
\end{figure}

\noindent\paragraph*{Informal Problem Definition}
As illustrated in~\cref{fig:introproblem}, the problem features the definition of catalogs with both hardware and software components necessary for the robot design. These include:
\begin{itemize}
    \item \emph{Robot Bodies}: a selection of mobile robot chassis, each with its shape and actuators.
    \item \emph{Sensor Mounting Configurations}: options for mounting sensors on each robot body.
    \item \emph{Perception Pipeline}: combinations of sensors and their corresponding perception algorithms for processing data.
    \item \emph{Decision-Making Algorithms}:  software for determining the robot's motion and actions to complete the task.
    \item \emph{Computing Units}: catalog of computing resources to support the software's operational needs.
\end{itemize}

\noindent \paragraph*{Contribution} The contributions can be summarized as follows.
First, we explore the interconnections between perception pipelines and sampling-based motion planners via the concept of occupancy queries.
Second, we show how to formulate and solve the sensor selection and placement problems for a robot, via set cover problems.
Third, we develop a robot co-design framework leveraging a monotone theory of co-design optimization, promoting the robot task as a functionality, and minimizing resource consumption in terms of monetary costs, power and computational needs, and mass.
Finally, we illustrate the above contributions through a suite of case studies on \glspl{av} design.

\paragraph*{Organization of the paper}
\cref{sec:litrev} reviews the related work, and contextualizes the efforts proposed in this paper. \edited{\cref{sec:approach} outlines the employed methodology and underlying assumptions.} \cref{sec:model} presents in depth our system models, including the robotic platform, tasks, decision-making, perception performance, and requirements. We then present the system co-design optimization problem, and its solution in \cref{sec:codesign}, and showcase various case studies in \cref{sec:results-gen}. Finally, we conclude and provide an outlook for future research in \cref{sec:conclusion}.

\section{Related Work}
\label{sec:litrev}
The challenges of design automation for embodied intelligence are highlighted in several studies~\cite{seshia2016design, zhu2018codesign, lee2008cyber, DBLP:journals/corr/abs-1806-05157}.
Such challenges primarily revolve around developing a framework which can accommodate the complex nature of cyber-physical systems, encompassing both software and hardware within dynamic environments~\cite{seshia2016design,zhu2018codesign}.
Additionally, there is a significant need for algorithms that can efficiently navigate the heterogeneous design landscapes available~\cite{zhu2018codesign}.
The work presented in~\cite{DBLP:journals/corr/abs-1806-05157} highlights the difficulty of integrating diverse components into robotic systems, and determining the specific information needs for a robot to fulfill a given task.
In the following, we review the literature in this field, mainly focusing on sensor selection and its relevance to robotics, design space exploration, comparative analysis of methods and trade-offs, benchmarks, and co-design frameworks.

The challenge of selecting and positioning sensors within a system is complex, and often lacks a closed-form solution.
For instance,~\cite{Joshi2009b} leveraged convex optimization techniques with the objective of reducing the estimation error of certain parameters.
In~\cite{gupta2006stochastic}, the authors introduce a stochastic algorithm designed to optimize sensor scheduling and improve coverage.
A greedy method for sensor selection aimed at state estimation in linear dynamical systems, utilizing Kalman filtering, is described in~\cite{shamaiah2010greedy}.
Furthermore,~\cite{golovin2010online} proposed a novel distributed online greedy algorithm selecting sensors based on the real-time feedback of their utility, targeting the maximization of information richness and energy efficiency.
One of the earliest approaches to sensor selection in robotics was presented in~\cite{hovland1997dynamic}, which introduced a real-time method using stochastic dynamic programming tailored to robotic systems.
Erdmann proposed a sensor selection strategy deeply integrated with a robot's task and planning requirements, based on the hypothetical premise of an ideal sensor fulfilling all informational needs for plan formulation~\cite{erdmann1995understanding}.
Geometric considerations in sensor selection are explored in~\cite{giraud1995sensor}, which employs Gaussian models to approximate uncertainties in the sensor-environment interaction.
Work proposed in~\cite{tzoumas2020lqg} addresses an LQG control co-design problem, simultaneously developing control and sensing strategies with resource limitations.
Furthermore,~\cite{collin2020multiobjective, collin2020autonomous} present a sensor selection framework specifically designed for localization and mapping, and~\cite{dey2020vespa, dey2023machine} focus on placement, orientation, and architecture designs in the context of \glspl{av}.
Additionally,~\cite{dey2023machine} presents a machine learning based framework for generating perception architecture designs for \glspl{av}, simultaneously optimizing sensor positions and orientations, detection algorithms, and fusion algorithms for a given target vehicle.
Finally, a learning algorithm for sensor placement in the context of soft robotics is presented in~\cite{spielberg2021co}.
Despite the presented advancements, current literature does not fully address the integrated selection and placement of sensor hardware in conjunction with the choice of perception algorithms.
There is a notable gap in discussions on optimizing sensor selection for object detection tasks, with a particular focus on contemporary deep learning techniques.
Furthermore, the critical exploration of sensing requirements for bridging decision-making and perception is underrepresented.
The analyzed studies also overlook the necessity for methodologies that unlock seamless integration with other design considerations, such as computer and actuator selection, and tend to neglect the impact of sensor uncertainties.

Design space exploration has gathered substantial interest in robotics research, with significant contributions aiming to delineate the boundaries of sensor and actuator requirements for effective robotic planning.
In this context,~\cite{ghasemlou2018delineating, saberifar2019toward} examined the minimal necessary sensors or actuators by assessing the consequences of their degradation on robotic planning capabilities, and~\cite{zhang2020abstractions} proposed an innovative method to identify sensors sufficient for resolving planning problems, employing an upper cover concept to condense sensor data and expedite the exploration process.
Nardi introduced a practical approach for navigating design spaces within multi-objective optimization frameworks, specifically applied to hardware design challenges~\cite{nardi2019practical}.
Furthermore, comparative analyses of robotic components have been advanced through the works of O'Kane, Lavalle, and Censi~\cite{o2008comparing, lavalle2012sensing, censi2015power}, which explore methodologies for assessing sensor performance and establishing criteria for comparisons.
The notion of sensor dominance and the subsequent development of a sensor lattice~\cite{o2008comparing, lavalle2012sensing} provide a structure means to rank sensors according to their task efficacy.
Additionally,~\cite{censi2015power} conducts a power-performance analysis, comparing different sensor families for specific tasks. 
In a similar context,~\cite{nardi2015introducing} introduces a benchmark for evaluating SLAM algorithms in robotics, utilizing metrics such as execution time and energy consumption.
Trade-off analysis in design choices is examined in contributions such as~\cite{lahijanian2018resource, seok2014design, saberifar2022charting}.
In \cite{lahijanian2018resource}, a methodology is introduced for exploring trade-offs between performance and resource utilization in the design of mobile robots. 
On the other hand,~\cite{seok2014design} outlines design principles aimed at enhancing energy efficiency in legged robots.
The trade-off between design complexity of a robot and plan execution are explored in~\cite{saberifar2022charting}.

From the point of view of holistic co-design frameworks, significant advancements have been made in robot design methodologies encompassing both software and hardware elements, facilitated by high-level behavioral specifications.
Mehta introduced a novel approach utilizing linear temporal logic to transform high-level design specifications into tangible selections of robot components from an extensive library, bridging the gap between abstract design requirements and practical component choices, streamlining the design process~\cite{mehta2018robot}.
Furthermore,~\cite{ha2018computational} develops a heuristic algorithm specifically targeted at the creation of robotic devices tailored to follow predefined motion trajectories accurately.
The algorithm navigates through the vast array of possible configurations of modular components to pinpoint the ones which best match the desired trajectories. 
In a similar vein,~\cite{shell2021design} explores the optimization of robotic design by carefully selecting actuation and sensing hardware to minimize design costs while ensuring the robot's ability to execute plans and accomplishing tasks.

The methods previously discussed do not focus on fully automating the design process for an entire robotic system. 
They overlook several critical co-design challenges that must be addressed to achieve a comprehensive and automated design process, as identified in~\cite{zardini2023co, seshia2016design, zhu2018codesign, lee2008cyber, DBLP:journals/corr/abs-1806-05157}, such as a) formalizing heterogeneous components across varying levels of abstraction, b) composition heterogeneous components to allow co-design across the entire system, c) facilitating collaboration among different systems as well as their domain experts, d) ensuring computational tractability, which allows quantitative design solutions, e) accommodating continuous systems that evolve over time, and f) maintaining intellectual tractability for simple usage and understanding.

Our research is based on the monotone theory of co-design~\cite{Censi2015a,censi2024} and builds on our series of previous works~\cite{Zardini2021d, zardini2021co, zardini2022task, zardini2022co}, where we studied the co-design of autonomy in the context of \glspl{av} and mobility.
In the current work, we advance our methodology by modeling each component separately and fostering compositional interconnections, particularly between the perception and the decision-making processes of a robot.
\edited{
\section{Approach}
\label{sec:approach}
To address the robot co-design problem, we must understand the information required from the environment to fulfill the robot's task. For instance, when designing an \gls{av} in an urban setting with a maximum speed of \unitfrac[30]{km}{h}, is it necessary to detect a pedestrian at distances of \unit[1]{km}, \unit[50]{m}, or \unit[10]{m} from the ego vehicle? Furthermore, we need to understand which information can be provided by the perception pipelines by understanding their perception performance. Once we know the \emph{perception requirements} and \emph{perception performance}, we present a method for selecting sensors, perception algorithms, mounting positions, and orientations, all from a predefined catalog, which cover the perception requirements of an agent, while minimizing certain resources such as sensor weight, power consumption or price. This forms the inner optimization of our~\algname~algorithm, demonstrated in \cref{sec:sspp,,sec:solve:ssp}. The outer optimization of \algname, explained in \cref{sec:codesigntheory,,sec:codesign:task:pr,,sec:sspp}, performs a holistic monotone co-design optimization over the entire robot's hardware and software components. Additionally, our framework allows the consideration of prior knowledge about probable object configurations in the environment and object dynamics to refine the robot design. To solve the inner and outer optimization, we leverage the following fundamental and computational assumptions.
}

\edited{
\noindent\paragraph*{Fundamental Assumptions} To determine information requirements and address the robot co-design problem, we assume that the robot's software architecture is factorized into perception, state estimation, planning, and control modules. Specifically, the perception module provides necessary information to the motion planner, which computes trajectories that guide the robot toward its goal. Note that our methodology is adaptable to different software architectures and motion planners, as long as one can acquire the information needed by the robot to complete the task, which must be provided by the perception pipeline. We illustrate how such information can be obtained using sampling-based motion planners that generate \emph{occupancy queries} to infer the agent's state, enabling us to define perception requirements. Finally, we assume that object detections from the perception layer are binary: objects are either detected or not, based on outputs from the perception pipeline. To model this, we use a probabilistic representation of the perception pipeline’s performance, defined by \gls{fpr} and \gls{fnr}, which depend on various factors. This binary relation assumes an object is detectable under certain configurations and environmental conditions if \gls{fpr} and \gls{fnr} are below a predefined threshold. We focus on object detection as the perception task. The approach could be extended to additional perception tasks such as localization.
}

\edited{
\noindent\paragraph*{Computation Assumptions} The configuration space is assumed to be planar, i.e., in $\setwo$. The search space for the co-design problem is huge, which restricts certain components, such as mounting position and orientation, to finite options, while others, such as perception requirements, remain continuous. Perception requirements for a specific agent and task are determined through simulation. The \gls{fpr} and \gls{fnr} values are estimated by benchmarking real sensor data and existing perception algorithms. We determine the mounted perception pipeline coverage on the vehicle body via ray casting in a 3D simulation, accounting for potential self-occlusion by the vehicle body.
}

\section{System Modeling}
\label{sec:model}
\edited{In this section, we define all necessary components for solving the co-design problem with~\algname. We begin in~\cref{subsec:robot}, which outlines the robot, including the hardware and software components available for design. Next, in~\cref{sec:task}, we define the robot's task to optimize its design accordingly. In~\cref{subsec:agent}, we model the agent as a sampling-based motion planner that generates occupancy queries to infer its state. This modeling forms the basis for defining perception requirements in~\cref{sec:requirements}, which must be covered by the selected perception pipelines. Finally, in~\cref{sec:sensing}, we describe how we model perception pipeline performance to understand how it meets the defined perception requirements.}
\subsection{Modeling the robotic platform}\label{subsec:robot}
We consider a mobile robot~$\robot$,  defined by its physical body~$\body$ (which includes considerations of shape, actuators, and hardware configurations) with configuration space~\edited{$\confspacenagent{}^{\workspace}$, where the superscript~$\workspace$ indicates the global coordinate frame. The robot’s software, responsible for decision-making and control, is referred to as the agent~$\agent$.}
\noindent \paragraph*{Agent}

We assume that the agent~$\agent$ consists of a modular software architecture, comprising perception, state estimation, motion planning, and control~\cite{paden2016motionplanningsurvey}. In particular, we want to choose the planner and the perception system for the agent.

\noindent \paragraph*{Body}
The robot body~$\body$ encompasses hardware components, including its 3D shape and actuators. We define the robot's body as follows.

\edited{\begin{definition}[Body]
    A robot body~$\body$ is defined by a tuple including the physical 3D shape of the robot~$\threedshape \subset \reals^3$, the configuration space~$\confspacenagent{}^{\workspace}$, the control space, the dynamics~$\dynamics{}$, the state space, and all additional hardware components and robot's body appearance, such as actuators, batteries, color, material, etc..
    \end{definition}}
\edited{\begin{remark} The dynamics function is expressed as~$\dot{\state{}}_t := \dynamics{}(\state{t}, \controlinput{t})$, where~$u_t$ denotes the control input and~$x_t$ the state at time~$t \in \reals_{\geq 0}$. The state~$\state{} \in \statespace$, where the state space is defined as~$\statespace := \confspacen{}^{\workspace} \times \mathcal{H}$, with~$\mathcal{H}$ representing a hidden space. It is important to note that, without loss of generality, the dynamics may be stochastic.\end{remark}}

The examination of the robot's structural framework~$\body$ involves assessing its mounting positions~$\mounts$ (with~$\mounts\in \mountsset$ and~$\mountsset \subset \threedshape$), as well as the selection of sensors. The sensor hardware with the related perception algorithm is referred to as a ``perception pipeline''~$\sensorp$.
In particular, our analysis focuses on 3D object detection to demonstrate the perception pipeline's ability to detect objects in the environment.
The collection of all perception pipelines is denoted by~$\sensorpset$.
Furthermore, we evaluate sensor mounting orientations~$\mountso\in \mountsoset$, characterized by sensor yaw and pitch angles, such that~$\mountsoset\subseteq \reals^{2}$.
These aspects together form the specification of the robot's body.

\begin{definition}[Robot]
A robot~$\robot$ is a tuple consisting of an agent~$\agent$ and body~$\body$:~$\robot \coloneqq \tup{ \agent, \body}$.
\end{definition}

\subsection{Modeling a task}\label{sec:task}
Consider a robot~$\robot$, operating within the workspace~$\workspace \subset \reals^3$. The robot starts its mission from an initial configuration denoted by~\edited{$\configagent{, \mathrm{start}}^{\workspace} \in \confspacenagent{}^{\workspace}$} and seeks to reach a goal area.\footnote{We consider the goal in~$\reals^2$, but in general the goal can manifest in various forms, including a terminal configuration~$\configagent{, \mathrm{end}}^{\workspace}$, a volume in~$\reals^3$ to be reached, following another object, or the ability to move for a specified duration.}{} The environment may include both dynamic and static objects. Dynamic objects encompass moving entities such as robots, vehicles, and humans. On the other hand, static objects consist of stationary elements such as trees or buildings.

\edited{
\begin{definition}[Object class]
An \emph{object class}~$\class{}$ is a tuple which contains the configuration space~$\confspace^{\workspace}$, the control space, and the dynamics~$\dynamics{}$ of the class. The final element in the tuple is the appearance distribution of a class, where the appearance of a class is represented by a tuple comprising elements such as shape, color, material, etc., denoted as~$\appearance$. 
The set of all possible appearances is represented by~$\appearanceset$.
\end{definition}
}
\edited{\begin{remark}
    As aforementioned, the dynamics~$\dynamics{}$ may be stochastic. Consequently, with limited prior knowledge of how objects can move in the environment, the robot design becomes more conservative, resulting in higher resource costs, since it must account for objects potentially moving in any direction and at any velocity.
\end{remark}}
An instance of a class~$\class{i}$ is defined as a tuple~$\classinstance{i}=\tup{\confspace_{i}^{\workspace}, \controlinputs{i}, \dynamics{i}, \appearance_{i}}$, 
where a particular appearance~$\appearance_{i}$ is drawn from the appearance distribution.

\edited{The function~$\shape{i} \colon \powerset{\confspace_{i}^{\workspace}} \to \powerset{\reals^2}$ maps a class or robot configuration into the footprint 
projecting the 3D shape onto the ground plane, where $\powerset{}$ indicates the power set}.

\begin{remark}
It is crucial to differentiate between the robot's 3D shape,~$\threedshape \in \reals^3$, which includes its elevation, 
and the robot's footprint,~\edited{$\shape{0}(\configagent{}^{\workspace}) \in \reals^2$} for a given configuration~\edited{$\configagent{}^{\workspace} \in \confspacenagent{}^{\workspace}$}. The footprint is essentially a projection of the robot's shape onto the ground plane. 
This distinction becomes particularly relevant in later discussions, as outlined in~\cref{sec:codesign:task:pr}.
\end{remark}

In addition, the operational environment encompasses various weather and light conditions. 
Such conditions are collectively referred to as \emph{environmental conditions}, denoted as~$\environment$. 
For simplicity, we use the term environmental conditions to encapsulate a range of possibilities, which include discrete values such as day and night time or rain and sunny conditions. 
Without loss of generality, this can also refer to continuous values such as rain density or time of day. 
The entire set of possible environmental conditions is denoted as~$\environmentset$.

\edited{
\begin{definition}[Scenario]
A scenario~$\scenario$ is defined by the workspace and the distributions governing the robot’s initial configuration, goal area, and environmental conditions. The scenario includes~$N$ object classes, each with an associated object class distribution following a Poisson distribution, specifying the expected number of objects per class. Additionally, the prior configurations~$\prior{}$ of the classes are defined such that~$\prior{}\subseteq \pi_3(\class{})=\confspacen{}^{\workspace}$ for a given class. This prior outlines the allowed configurations for objects of that specific class. 
\end{definition}
}
\edited{
\begin{remark}
    The prior~$\prior{}$ can be used by the agent during online planning, though agents that do not rely on it are also feasible, as our focus is on the overall design rather than specific agent implementation. In our~\algname~framework, the prior constrains perception requirements, similar as class dynamics, to support a resource-efficient robot design. Without prior knowledge,~$\prior{}=\confspacen{}^{\workspace}$, meaning object classes could appear anywhere in the environment, requiring a comprehensive perception system capable of detecting objects from any direction.
\end{remark}
}

\edited{
A scenario instance~$\scenarioinstance$ represents a concrete realization of a scenario 
$\scenario$,  where the initial configuration, goal, and environment are drawn from the respective distributions. Moreover,
$M$ number of object class instances are drawn from their corresponding Poisson distributions.
In this work, we define the task as a set of scenario instances. 
In principle, however, a task could also be defined as a distribution of scenarios, where a set of scenarios can be sampled.
}
\begin{definition}[Task]
A \emph{task}~$\task$ is a set of scenario instances.
\end{definition}

\subsection{Modeling an agent}\label{subsec:agent}
In a common agent's architecture, including perception, state estimation, motion planning, and control, the dependency of motion planning on perception data underscores the importance of defining the precise ``information'' necessary for trajectory planning. Identifying the ``minimum'' required sensors and perception algorithms for a robot, given a particular motion planner, necessitates this specificity. Motion planning algorithms typically need a notion of the obstacle free configuration space to compute a reference trajectory. Combinatorial motion planning~\cite{lavalle2006, paden2016motionplanningsurvey, chazelle1985approximation, takahashi1989motion, backer2007finding} and  optimization-based motion planning~\cite{paden2016motionplanningsurvey, claussmann2019review, falcone2007predictive, falcone2007linear, kim2014model, raffo2009predictive, yoon2009model, liniger2015optimization} depend on mathematical models for the free configuration space, represented through geometric shapes or optimization constraints. The task of pinpointing the critical information necessary for calculating a reference trajectory is notably challenging in these frameworks, mainly because they require knowledge of the entire state space including all obstacles. In contrast, sampling-based planners~\cite{lavalle2006,paden2016motionplanningsurvey,claussmann2019review} offer a different strategy, sidestepping the need for precise internal representations of obstacles. Such planners generate a state hypothesis by posing a series of questions, such as ``\textit{Will there be a collision if I occupy a certain configuration at a certain time?}''. 
These questions are referred to as \emph{occupancy queries} or just \emph{queries} and are represented as elements of the configuration space~\edited{$\confspacenagent{}^{\robot}$} at a certain time~$t$ with a certain environment~$\environment$. \edited{With the superscript in~\edited{$\confspacenagent{}^{\robot}$} we indicate the ego coordinate frame}. Sampling-based planners thus enable a reverse flow of information within the outlined agent architecture, indicating a progression of data from the motion planning phase back to the perception system.
For the sake of simplicity, the term agent throughout the remainder of this paper denotes a sampling-based motion planner.

\begin{definition}[Query]
A \emph{query} is defined as~$\query \in \queryspace$, where~$\queryspace$ is the product space of the configuration space~\edited{$\confspacenagent{}^{\robot}$}, the time in~$\reals^{+}$ and the environment in~$\environmentset$:~$\queryspace \coloneqq \edited{\confspacenagent{}^{\robot}} \times \reals^{+} \times \environmentset.$
\end{definition}

\begin{remark}
\edited{Different motion planners produce different distributions of queries. Planners such as RRT* converge to an optimal solution. However, during the search for the optimal solution, a large number of random configurations are sampled, which can potentially be unbounded. Lattice planners, on the other hand, are not optimal, but by simply relying on a fixed discretization of the search space with particular motion primitives, less information is required from the sensors compared to RRT*. This trade-off between planner optimality and information requirements is illustrated in \cref{fig:planners}. In \cref{fig:rrtstar}, we show an example of an \gls{av} using an RRT* planner, while in \cref{fig:astar}, the \gls{av} is paired with a lattice planner, employing motion primitives and A* search.}
\end{remark}

\begin{figure}[tb]
    \centering
    \begin{subfigure}[b]{0.5\textwidth}
        \centering
        \includegraphics[width=\textwidth, trim=0cm 6cm 0cm 18.5cm, clip]{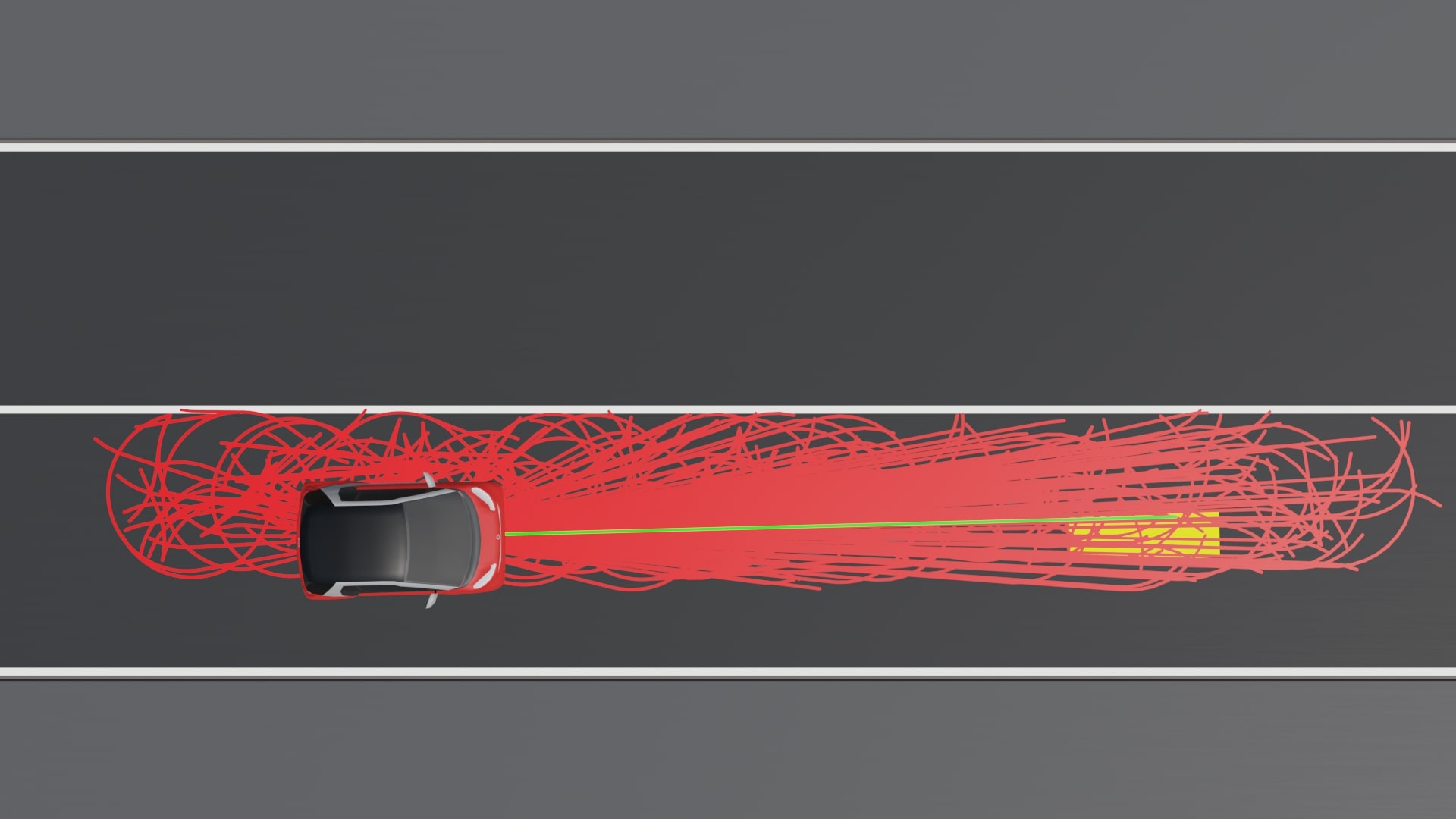}
        \caption{Example of an RRT* planner.}
        \label{fig:rrtstar}
    \end{subfigure}
    \begin{subfigure}[b]{0.5\textwidth}
        \centering
        \includegraphics[width=\textwidth, trim=0cm 6cm 0cm 18.5cm, clip]{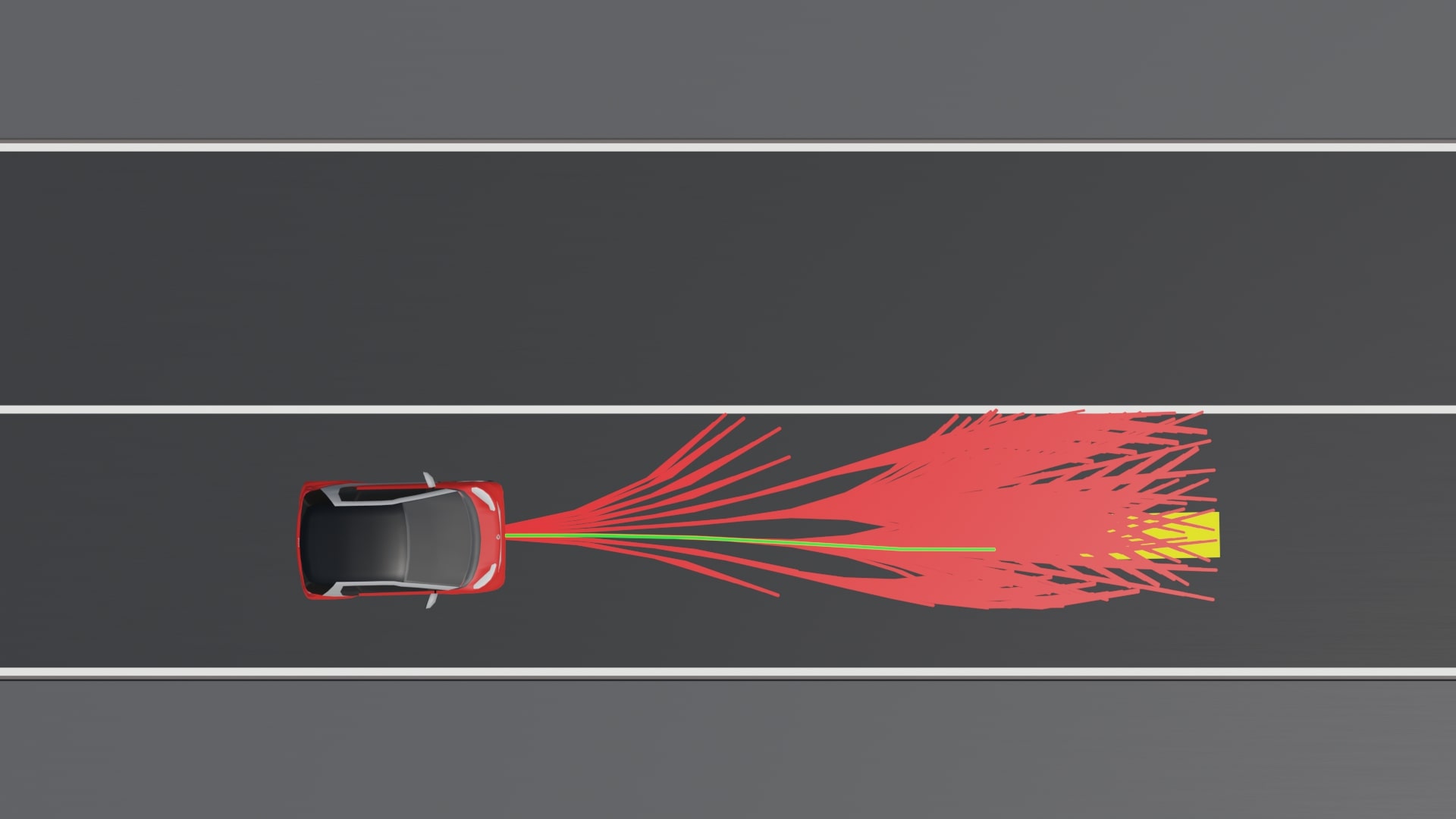}
        \caption{Example of a lattice planner, paired with motion primitives and A* search.}
        \label{fig:astar}
    \end{subfigure}
    \caption{An illustration of an \gls{av} navigating towards the yellow target area. The figure showcases two motion planners: an RRT*-based planner and a lattice planner. 
    The red lines represent the tree of paths generated by each planner, while the green line indicates the solution path identified by the planner.}
    \label{fig:planners}
\end{figure}

Given an agent~$\agent$ and a task~$\task$, the goal is to obtain a set of configurations which are generated by the agent's state inference process, motivated by the concept of deterministic sampling-based motion planning in~\cite{janson2018deterministic}. Technically, for an agent~$\agent$ and a task~$\task$, the set of queries which are generated by the agent's state inference process in all scenario instances of the task is denoted by~$\taskqueries{}(\agent,\task)\subseteq \queryspace$.

\begin{definition}[Task Queries]
Given a task~$\task$, the \emph{task queries} generated by an agent~$\agent$ are the union over all queries of all the scenario instances in the task:
\begin{equation}
    \taskqueries \colon \powerset{\taskspace} \times \agentset \to \powerset{\queryspace},
\end{equation}
such that~$\taskqueries(\agent, \task) \subseteq \queryspace$.
\end{definition}

\subsection{\edited{Modeling perception requirements}}\label{sec:requirements}
\edited{To establish an interface between agent and perception pipeline, the task queries are converted into \emph{class configurations} which need to be detected by the perception pipelines. Such class configurations are referred to as \emph{perception requirements}.} 

The transition from queries to class configuration involves determining which class configurations may collide with the robot at a specific query. 
At a more abstract level, the objective is to identify all class configurations for which the perception pipelines must indicate a collision, when posed with the query. It is important to emphasize that we are looking at agents which can ask for some occupancy queries~\edited{$\query=\tup{\configagent{}^{\robot}, \tau, \environment}$} in the future. It is not just a simple matter of checking which class configurations could collide with the robot at a certain configuration~\edited{$\configagent{}^{\robot}$}. All class configurations at time 0 that would lead to a collision at time~\edited{$\tau$} are needed, given the dynamics of the classes and the class prior~$\prior{i}$ provided by the scenario. \edited{By ``time'' we refer to the planning time, starting at 0, rather than the current time in the scenario.} Therefore, the objective is to derive the set of configurations for all classes within the scenario at time 0, where there exist control inputs that could lead the class to a collision with the robot with configuration\edited{~$\configagent{}^{\robot}$} at time~\edited{$\tau$}. Such class configurations are obtained by sampling the dynamics and going backwards in time. In essence, all configurations generated through the sampling are the ones that the perception layer needs to detect. \edited{An illustration of this process is shown in~\cref{fig:sensreq}.}

\begin{figure}[tb]
    \centering
    \includegraphics[width=0.5\textwidth]{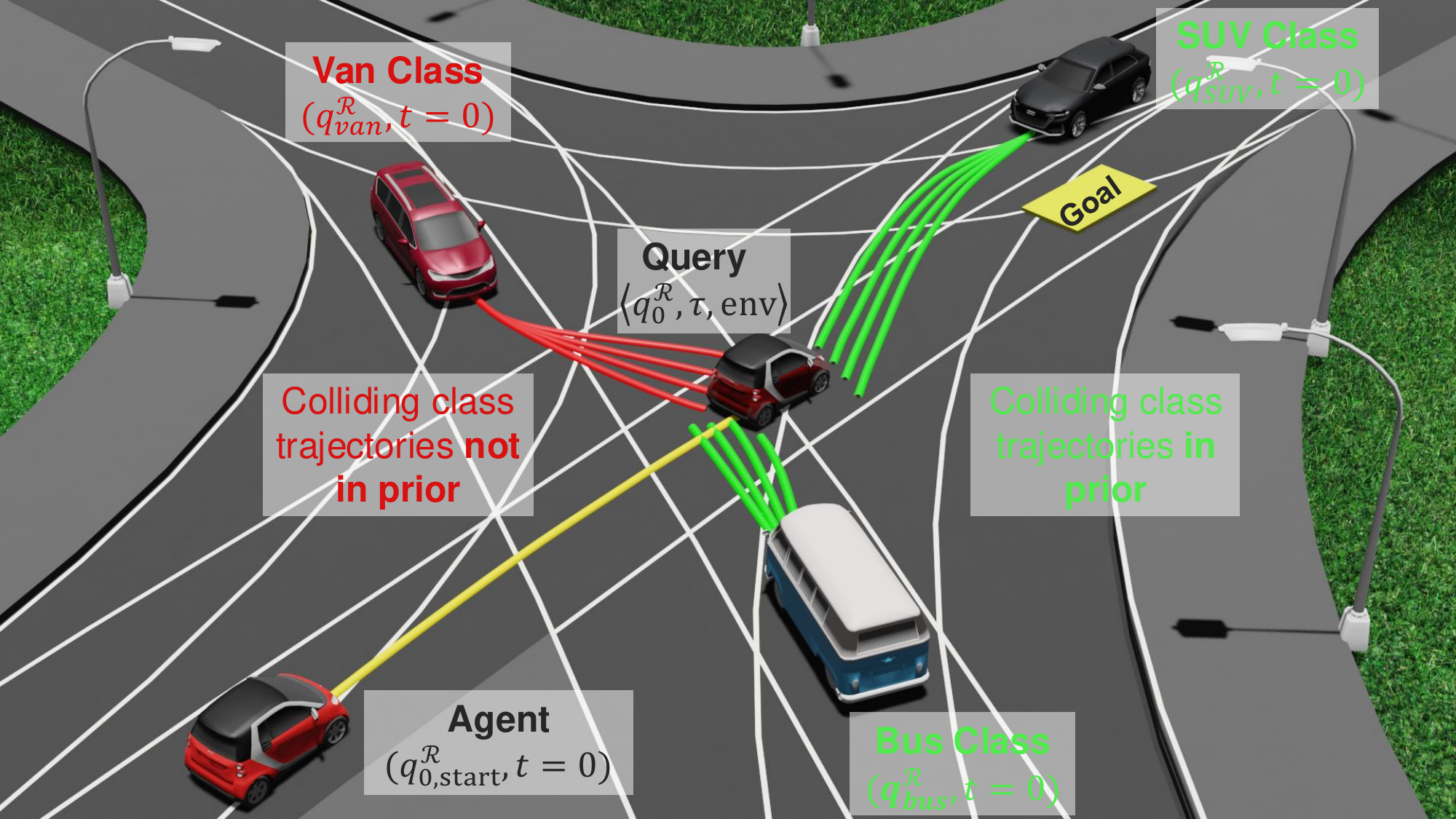}
    \caption{This figure shows class configurations at time 0 leading to potential collisions with a robot at a specific query~\edited{$\query=\tup{\configagent{}^{\robot}, \tau, \environment}$}. The robot is depicted as a small red \gls{av} on the left and the robot's future configuration~\edited{$\configagent{}^{\robot}$} from the query is the transparent \gls{av}  in the intersection's center. Surrounding cars represent classes with trajectories that lead to a collision with the \gls{av} at time~$\tau$ with configuration~\edited{$\configagent{}^{\robot}$}. Green lines show feasible trajectories based on prior knowledge, and a red line shows an infeasible trajectory that violates the prior. The perception requirements in this example are the depicted car \edited{configurations~$\config{\text{suv}}^{\robot}$ and~$\config{\text{bus}}^{\robot}$ with green trajectories}.}
    \label{fig:sensreq}
\end{figure}

The following definitions are used to define the perception requirements of a certain task for a given agent.

\begin{definition}[Collision]\label{def:collision}
Collision is a mapping that generates all possible class configurations in~\edited{$\confspacen{i}^{\robot}$} that are in collision with the robot at a certain configuration~\edited{$\configagent{}^{\robot}$} using their \edited{footprints~$\shape{0}(\configagent{}^{\robot})$ and~$\shape{i}(\config{i}^{\robot})$}.
\edited{
\begin{equation}
\label{eq:collision}
\begin{aligned}
    \collision \colon \confspacenagent{}^{\robot} \times \classinstanceset{} \to \powerset{\confspacen{}^{\robot}},
\end{aligned}
\end{equation}
where~$\classinstanceset{}$ is the set of all class instances.}
\end{definition}

\edited{\begin{definition}[Perceptual Collision Prediction]\label{def:pcp}
    For each query~$\query=\tup{\configagent{}^{\robot},\tau, \environment}$ of an agent~$\agent$, there exist class trajectories~$\trajectorymap{i} \colon [0, \tau] \to \confspace_{i}^{\robot}$ that are the preimage of the class dynamics~$\dynamics{i}$. 
    These trajectories lead to a class configuration~$\trajectorymap{i} (\tau) \in \collision(\configagent{}^{\robot}, \classinstance{i})$ at time $\tau$, starting at time~$0$. This mapping is termed \emph{perceptual collision prediction}:

    \begin{equation}
        \label{eq:pcp}
        \pcp_i \colon \powerset{\queryspace} \to \powerset{\trajectorymapspace{i}(\reals^+,\confspacen{i}^{\robot})},
    \end{equation}
    where $\trajectorymapspace{i}(\reals^+,\confspacen{i}^{\robot})$ is the set of all trajectories $\trajectorymap{i}(t)$.
\end{definition}}
\edited{\begin{definition}[Prior Check]\label{def:priorcheck}
    The prior check is a function that evaluates a set of trajectories~$\trajectorymap{i} \colon [0, \tau] \to \confspacen{i}^{\workspace}$ and determines whether all configurations along each trajectory, from the initial time~$\trajectorymap{i}(0)$ to the final time~$\trajectorymap{i}(\tau)$, are contained within the prior~$\prior{i}$. For those trajectories satisfying this condition, the function returns the starting configuration~$\trajectorymap{i}(0)$.

    \begin{equation}
        \label{eq:priorcheck}
        \priorcheck \colon \powerset{\trajectorymapspace{i}(\reals^+,\confspacen{i}^{\workspace})} \times \powerset{\confspacen{i}^{\workspace}} \to \powerset{\confspacen{i}^{\workspace}}.
    \end{equation}
\end{definition}}

\begin{definition}[Task Perception Requirements]
The perception requirements for an agent~$\agent$ undertaking a task~$\task{}$ are defined as the mapping from task queries~$\taskqueries(\agent,\task)$ 
to all possible subsets of class configurations for each environment~$\environment$ within the task. 
\edited{This mapping is established by transforming queries into colliding class trajectories through~$\collision$ and perceptual collision prediction~$\pcp$. The resulting colliding class trajectories are transformed to the global frame using the agent’s global configuration when the corresponding queries were performed. Finally, feasible starting configurations from the colliding class trajectories are filtered using~$\priorcheck$ and transformed back to the ego frame}:
 \begin{equation}
\label{eq:tpr}
\sensreq \colon \agentset \times \powerset{\taskspace} \to \prod_{\environment{} \in \environmentset{}} \prod_{k\in \{1,\ldots \kclass\}}\powerset{\edited{\confspacen{k}^{\robot}}}, 
\end{equation}
where~$\kclass$ is the number of unique object class instances in the task and~$\environmentset$ is the set of all environments in the task.
\end{definition}

For a given object class instance~$\classinstance{i}$ and environment~$\environment$ within task perception requirement~$\sensreq(\agent,\task)$, we express this as~$\sensreq(\agent,\task, \classinstance{i}, \environment)$, 
indicating that~$\sensreq(\agent,\task, \classinstance{i}, \environment) \subseteq \confspacen{i}^{\robot}$.

\edited{
\begin{remark}\label{rem:percreq}
    As stated in the computation assumptions in~\cref{sec:approach}, queries are gathered through simulation, where each scenario instance, agent, and robot body is simulated, and the generated queries are stored. The transformation from queries to perception requirements is performed offline after simulations by randomly sampling colliding class configurations for each query. We then sample a finite set of random trajectories and transform them such that their end configurations coincide with the colliding configurations at the specified time in the query. Next, the generated colliding class trajectories are transformed to the world frame and the infeasible trajectories which are not in the prior are removed. The feasible trajectories are transformed back to the ego vehicle frame, with their starting configurations taken as the perception requirements. We simulate continuously, performing this post-processing after each simulation and taking the union with perception requirements from previous runs. We then apply our inner and outer optimization, allowing the solution to evolve continuously and converge over time.
\end{remark}
}

\subsection{\edited{Modeling perception performance}}\label{sec:sensing}
The next step is to evaluate the capabilities of a perception pipeline, including sensor hardware and perception software, to measure and provide the perception requirements of an agent. \edited{These detection capabilities, denoted as \emph{perception performance}, are represented in terms of \gls{fpr} and \gls{fnr}, representing the probability of generating a false detection when no object is present and the probability of missing an object when it is present, respectively. For each perception pipeline~$\sensorp_{j}$, class configuration in~$\confspacen{i}^{\sensorp_{j}}$, object class instance~$\classinstance{i}$, appearance~$\appearance_i$ and environment~$\environment$, the function~$\ppp$ maps the confidence interval of the \gls{fnr} and \gls{fpr}, respectively. The superscript in~$\confspacen{i}^{\sensorp_{j}}$ indicates the sensor coordinate frame}.
\edited{
\begin{equation}
\ppp \colon \confspacen{i}^{\sensorpset} \times \appearanceset_i \times \sensorpset \times \environmentset \to \powerset{I} \times \powerset{I}.
\end{equation}
}
The set~$I$ is the set of all intervals:~$[a,b] \subseteq \reals : 0 \leq a \leq b \leq 1$. The obtained interval~$[a,b]$ represents the confidence interval with a lower bound~$a$ and upper bound~$b$ of the perception pipeline's \gls{fnr} and \gls{fpr}. 
During our selection process, we use the upper bound to conduct a \emph{worst-case analysis}.
With the variables~$\appearance$,~$\sensorp$ and~$\environment$ we summarize other relevant parameters for representing the \gls{fnr} and \gls{fpr} as for instance the object size, object color, sensor resolution or weather condition. An illustration of the perception performance with two distinct perception pipelines is shown in~\cref{fig:sensor_perf}.

\edited{
\begin{remark}\label{rem:bench}
The selected variables are chosen for their impact~\cite{milojevicphd2024}, as supported by literature~\cite{hoss2022review}, our results~\cite{milojevicphd2024} and constrained annotation data~\cite{caesar2020nuscenes} of the sensor measurements, without claiming they encompass all influential factors in general. The implementation of the \gls{fnr} and \gls{fpr} in $\ppp$ is not the focus of this work. Briefly, we use real sensor data (e.g., from the nuScenes dataset~\cite{caesar2020nuscenes}) and pre-trained 3D object detection algorithms (e.g., from the MMDetection3D library~\cite{mmdet3d2020}) to perform inference on the test dataset. This process generates \gls{fn}, \gls{fp}, and \gls{tp} events, from which we extract features such as relative radial distance to the sensor, relative orientation, object size, relative velocity, or light conditions (e.g., night or day). Using this data, we perform binary classification with Gaussian process classification~\cite{milios2018dirichlet} on the extracted features. The \gls{fn} and \gls{tp} events model the \gls{fnr}, while \gls{fp} and \gls{tp} events model the \gls{fpr}. The entire data flow is illustrated in~\cref{fig:perceptionbenchalgorithm}.
\end{remark}
}
\begin{figure} 
    \centering
    \begin{subfigure}[b]{0.22\textwidth}
        \centering
        \includegraphics[width=\textwidth]{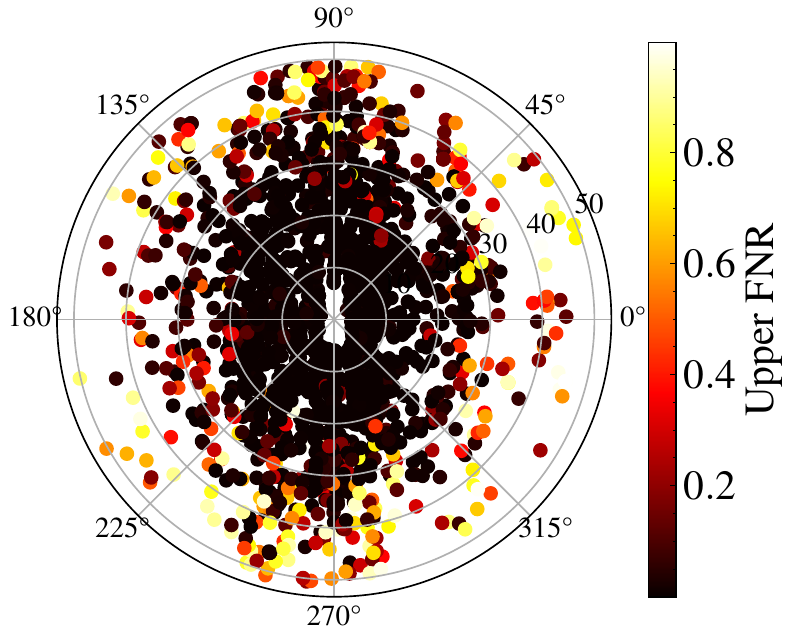}
    \end{subfigure}
    \begin{subfigure}[b]{0.22\textwidth}
        \centering
        \includegraphics[width=\textwidth]{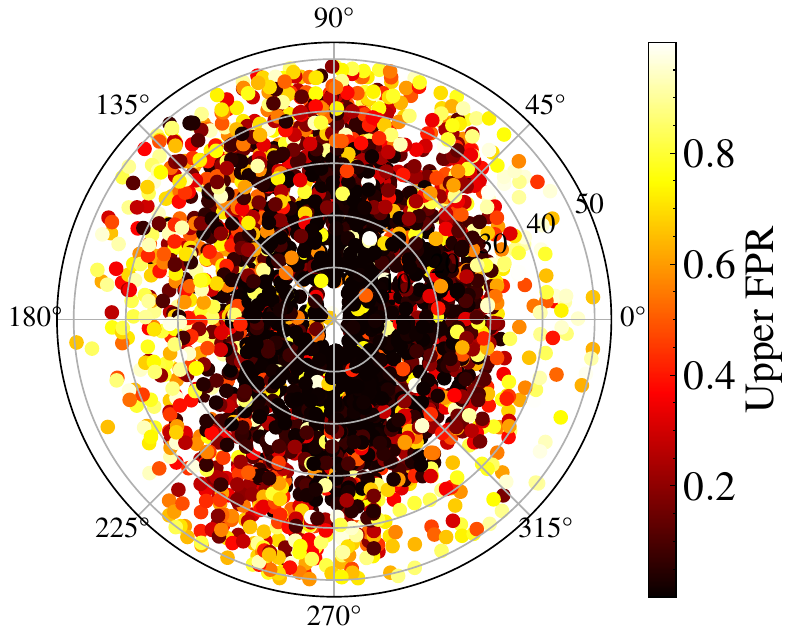}
    \end{subfigure}
    \begin{subfigure}[b]{0.22\textwidth}
        \centering
        \includegraphics[width=\textwidth]{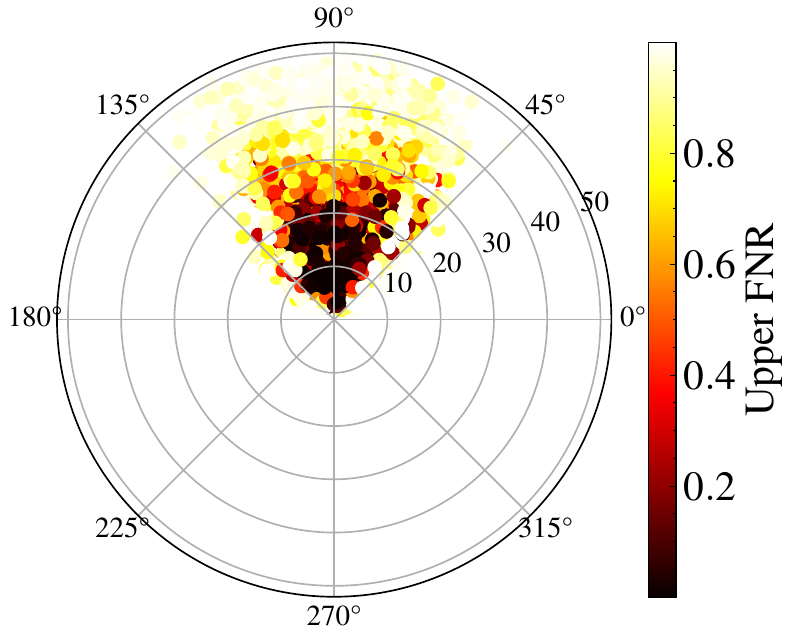}
    \end{subfigure}
    \begin{subfigure}[b]{0.22\textwidth}
        \centering
        \includegraphics[width=\textwidth]{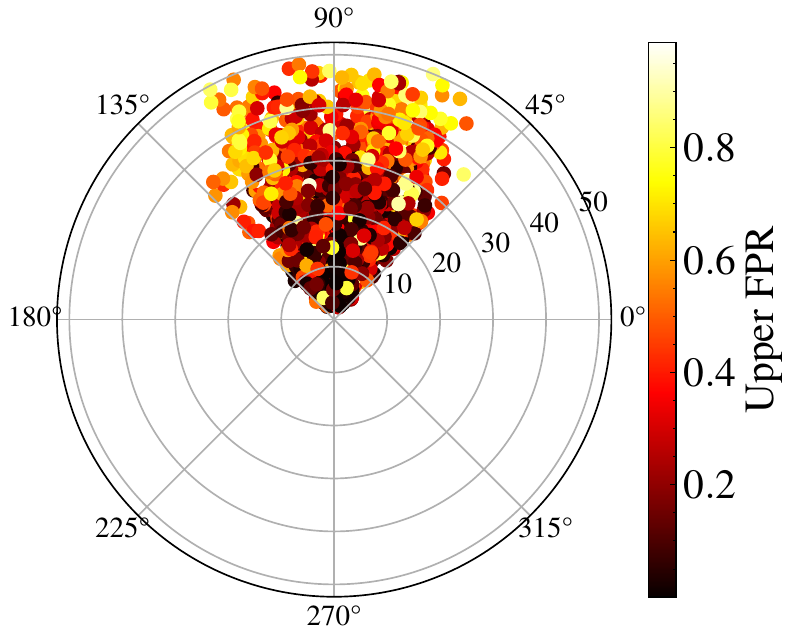}
    \end{subfigure}
    \caption{Comparison of the perception performance of two pipelines: Velodyne HDL32E lidar with PointPillars detection model (top plots)~\cite{Lang2020PointPillars:Clouds} and Basler acA1600-60gc camera with FCOS3D detection model (bottom plots)~\cite{wang2021fcos3d}. 
    Left plots show FNRs and right plots FPRs, highlighting the upper bounds of confidence intervals against radial distance~$r$ and relative orientation~$\theta$ between sensor and object class in polar coordinates. Data is from the nuScenes dataset~\cite{caesar2020nuscenes}, using models from the MMDetection3D~\cite{mmdet3d2020} library.}
    \label{fig:sensor_perf}
\end{figure}
\begin{figure}[tb]
    \centering
    \includegraphics[width=0.5\textwidth]{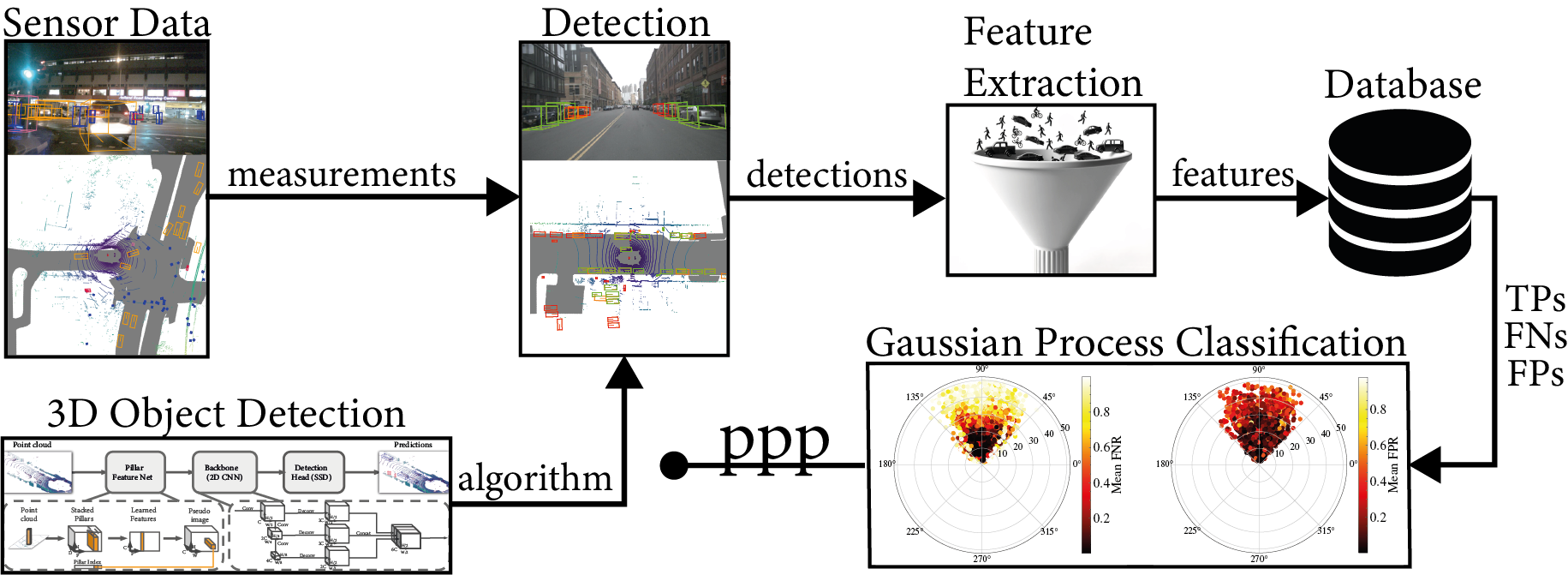}
    \caption{\edited{Graphical representation of the simplified data flow for the entire benchmarking process, 
    from sensor measurements to the calculation of \gls{fnr} and \gls{fpr} (sensor data taken from~\cite{caesar2020nuscenes}, 
    model architecture image from~\cite{Lang2020PointPillars:Clouds}).}}
    \label{fig:perceptionbenchalgorithm}
\end{figure}
\section{Solving the robot co-design problem}
\label{sec:codesign}
In this chapter, we establish an optimization framework for determining the optimal robot design tailored to a specific task, leveraging a monotone theory of co-design~\cite{Censi2015, censi2024}. The primary objective is the minimization of resource consumption, which includes power consumption, robot body mass, cost and computing resources. 
~\cref{sec:codesigntheory} introduces the basic principles of co-design.
Subsequently, in~\cref{sec:codesign:task:pr,,sec:sspp} we address task-oriented co-design of a complete mobile robot.~\cref{sec:sspp,,sec:solve:ssp} addresses the sensor selection and placement problem, which forms the \edited{inner} optimization, using the formulations introduced in \cref{sec:model}. 

\subsection{Background on a monotone theory of co-design}
\label{sec:codesigntheory}
The reader is assumed to be familiar with posets and basic concepts of order theory (a good source is~\cite{Davey2002b}).

\paragraph{Formulating co-design problems}
The atom of the theory is the notion of a \gls{abk:mdpi}, through which we will model different components of the autonomy stack.
\begin{definition}
Given \glspl{abk:poset}~$\setOfFunctionalities{},\setOfResources{}$,  (mnemonics for \F{functionalities} and \R{resources}), we define a \emph{\gls{abk:mdpi}} as a tuple~$\tup{\setOfImplementations{d},\prov, \req}$, where~$\setOfImplementations{d}$ is the set of implementations, and~$\prov$,~$\req$ are maps from~$\setOfImplementations{d}$ to~$\setOfFunctionalities{}$ and~$\setOfResources{}$, respectively:
\begin{equation*}
        \setOfFunctionalities{} \xleftarrow{\prov} \setOfImplementations{d} \xrightarrow{\req} \setOfResources{}.
\end{equation*}
We compactly denote the \gls{abk:mdpi} as~$d\colon \setOfFunctionalities{} \tickar \setOfResources{}$.
Furthermore, to each \gls{abk:mdpi} we associate a monotone map~$\bar{d}$, given by:
\begin{equation*}
    \begin{aligned}
        \bar{d}\colon \setOfFunctionalitiesOp{} \times \setOfResources{} &\to \tup{\power{\setOfImplementations{d}},\subseteq}\\
        \langle \F{f}^*,\R{r}\rangle &\mapsto \{i \in \setOfImplementations{d}\colon (\prov(i) \succeq_{\setOfFunctionalities{}}\F{f}) \wedge (\req(i)\preceq_{\setOfResources{}}\R{r})\},
    \end{aligned}
\end{equation*}
where~$(\cdot)\op$ reverses the order of a \gls{abk:poset}. 
The expression~$\bar{d}(\F{f}^*,\R{r})$ returns the set of implementations (design choices)~$S\subseteq \setOfImplementations{d}$ for which \F{functionalities}~$\F{f}$ are feasible with \R{resources}~$\R{r}$.
A \gls{abk:mdpi} is represented in diagrammatic form as a block with green wires on the left for functionalities, and dashed red ones on the right for resources, as visualized in~\cref{fig:codesignmodel}.

\begin{remark}[Monotonicity]
What does monotonicity mean in this context? 
Consider a \gls{abk:mdpi} for which~$\bar{d}(\F{f}^*,\R{r})=S$:
\begin{itemize}
    \item One has:~$\F{f'}\preceq_{\setOfFunctionalities{}} \F{f}\Rightarrow \bar{d}(\F{f'}^*,\R{r})=S'\supseteq S$.
    Intuitively, decreasing the provided functionalities will not increase the required resources;
    \item One has:~$\R{r'}\succeq_{\setOfResources{}} \R{r}\Rightarrow \bar{d}(\F{f}^*,\R{r'})=S''\supseteq S$.
    Intuitively, increasing the available resources cannot decrease the provided functionalities.
\end{itemize}
\end{remark}
\end{definition}

\begin{remark}[Populating the models]
The presented framework is very flexible.
In practice, one populates the \glspl{abk:mdpi} via analytic relations (e.g., cost functions), numerical analysis of closed-form relations (e.g., solving optimal control problems), and in a data-driven, on-demand fashion (e.g., via POMDPs, simulations, or by solving instances of optimization problems).
For detailed examples related to mobility and autonomy, please refer to~\cite{Zardini2021d, zardini2021co, zardini2022co, zardini2023co, zardini2022task, censi2024}.
\end{remark}

One can compose individual \glspl{abk:mdpi} in several ways to form a co-design problem (i.e., a multigraph of \glspl{abk:mdpi}, where nodes are \glspl{abk:mdpi}, and edges their interconnections), which is again a \gls{abk:mdpi} (i.e., closure).
This makes the presented framework practical to decompose a large problem into smaller ones, and to interconnect them\footnote{A detailed list of compositions is provided in~\cite{censi2024,zardini2023co}. Formally, their specification makes the category of design problems a traced monoidal category, with locally posetal structure.}{}.
Series composition happens when the functionality of a \gls{abk:mdpi} is required by another \gls{abk:mdpi} (e.g., information acquired by a sensor is processed by an estimator). 
The symbol~$\preceq$ is the posetal relation, representing a co-design constraint: the resource a problem requires cannot exceed the functionality another problem provides.
Parallel composition, instead, formalizes decoupled processes happening together.
Finally, loop composition describes feedback.

\paragraph{Solving co-design problems}
Given a \gls{abk:mdpi}, we essentially have two queries.
First, given some desired functionalities, find the optimal design solutions which minimize resources (FixFunMinRes). 
Alternatively, given some available resources, find the optimal design choices which maximize functionalities (FixResMaxFun).

\begin{definition}
\label{def:h_map}
Given a \gls{abk:mdpi}~$d$, one defines monotone maps
\begin{itemize}
    \item $h_d\colon \setOfFunctionalities{}\to \mathsf{A}\setOfResources{}$, mapping a functionality to the \emph{minimum} antichain of resources providing it;
    \item $h_d'\colon \setOfResources{}{}\to \mathsf{A}\setOfFunctionalities{}$, mapping a resource to the \emph{maximum} antichain of functionalities provided by it.
\end{itemize}
\end{definition}
Solving \glspl{abk:mdpi} requires finding such maps.
If such maps are Scott continuous, and posets are complete, one can rely on Kleene's fixed point theorem to design an algorithm solving both queries (and returning the related optimal design choices).

Interestingly, the resulting algorithm is guaranteed to converge to the set of optimal solutions, or to provide a certificate of infeasibility.
Furthermore, the complexity of solving such problems is only linear in the number of options available for each component (as opposed to combinatorial). 
For more details, refer to~\cite{censi2024, zardini2023co}.
\begin{figure*}[h!]
    \centering
    \includegraphics[width=0.9\textwidth]{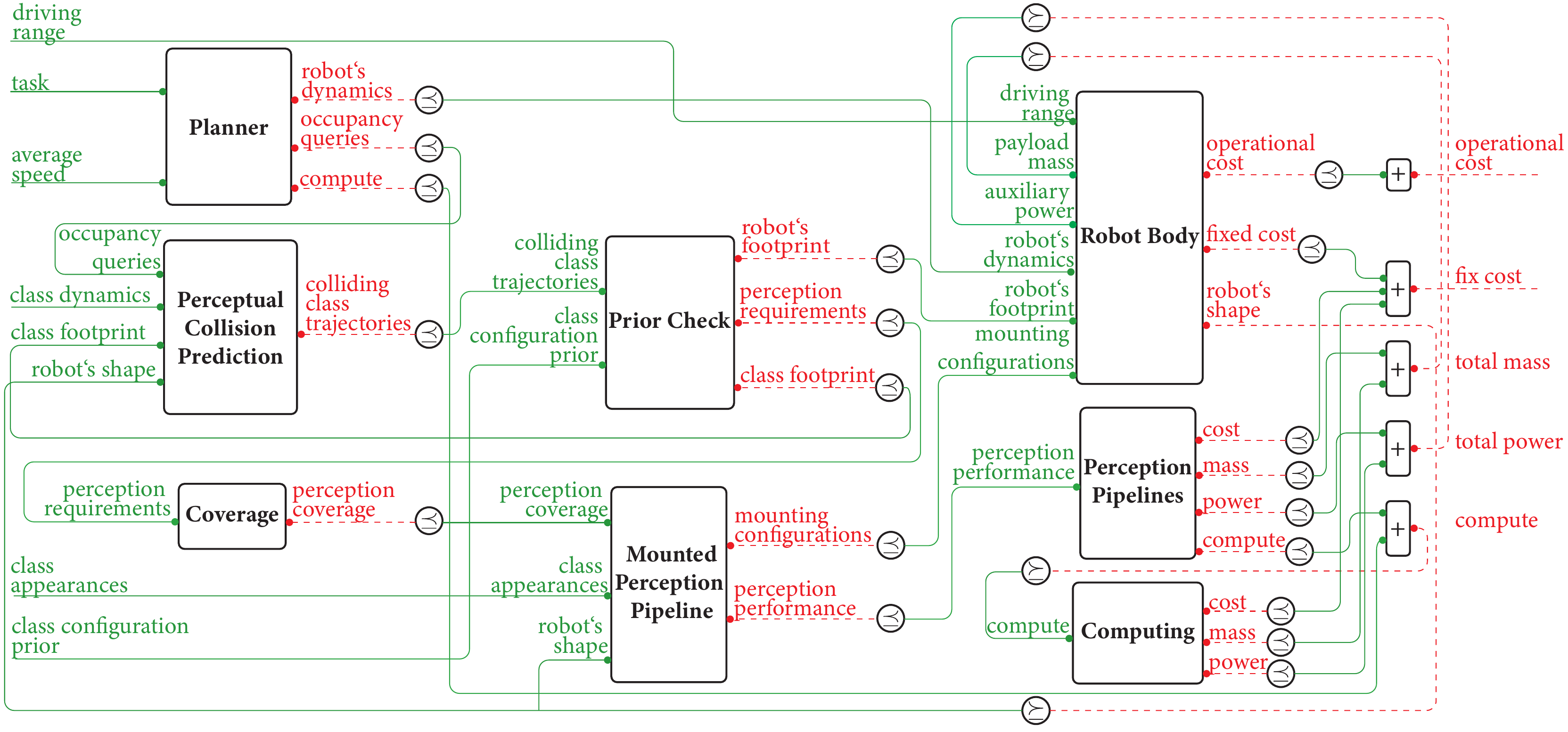}
    \caption{The co-design diagram for the design of a mobile robot tailored to accomplish a \F{task}, including a collection of scenario instances and class instances, aiming to achieve specified \F{average speed} and \F{driving range}. \edited{The class instances include \F{class dynamics}, \F{class appearances}, and \F{class configurations prior}.} The objective is to minimize the \R{fix cost} and \R{operational cost}.}
    \label{fig:codesignmodel}
\end{figure*}
\subsection{Modeling from task to perception requirements}\label{sec:codesign:task:pr}
First, we describe the \textbf{Planner} \gls{abk:mdpi} in~\cref{fig:mdpi_planner}, representing the choice of motion planner for the robot. It provides a set of scenario instances representing the \F{task} $\task$ as a functionality and 
the \F{average speed} in \unit[]{km/h} the planner navigates the \gls{av} across all scenario instances, indicating the task performance. The Planner \gls{abk:mdpi} requires \R{occupancy queries} $\queryspace{}$, \R{compute} and the robot's \R{dynamics} resources. The more scenario instances are required, the more queries are needed by the planner, as detailed in~\cref{lem:queries_monotone}. The \R{compute} resource encompasses computational capabilities, including CPU and GPU performance, quantified by operations per second and available memory. 
An increase in collision checks for occupancy queries leads to a higher demand for \R{compute} resources. \R{Robot's dynamics} are characterized by parameters such as minimum turning radius, maximum acceleration, and maximum deceleration. Higher acceleration and deceleration expand the range of possible queries, enabling faster achievement of goals in scenario instances. A smaller minimum turning radius increases the diversity of occupancy queries and the robot's capability to navigate through complex scenarios, 
such as tight passages that a large turning radius would not permit. Consequently, we utilize the opposite of a poset for minimum turning radius. Additionally, greater acceleration necessitate more computational resources to quickly process planning strategies.
Extending the \F{average speed} requires improved dynamics with quicker acceleration, or a more efficient planner, which increases the need for \R{compute} resources and \R{occupancy queries}.

\begin{lemma}
\label{lem:queries_monotone}
The task \R{occupancy queries} $\taskqueries$ is monotone in the \F{task}, as shown in~\cref{fig:mdpi_planner}.
\end{lemma}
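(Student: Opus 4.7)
The plan is to exploit the definition of $\taskqueries$ as a union of per-scenario-instance query sets, and to observe that both the domain $\powerset{\taskspace}$ and codomain $\powerset{\queryspace}$ are posets ordered by set inclusion. Monotonicity of $\taskqueries$ in $\task$ (with fixed agent $\agent$) then reduces to monotonicity of set union with respect to its indexing set.

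First I would unpack the definition. For a fixed agent $\agent$ and a task $\task \in \powerset{\taskspace}$ (a set of scenario instances), the statement of the definition of task queries gives
\begin{equation*}
\taskqueries(\agent, \task) \;=\; \bigcup_{\scenarioinstance \in \task} Q(\agent, \scenarioinstance),
\end{equation*}
where $Q(\agent, \scenarioinstance) \subseteq \queryspace$ denotes the subset of queries generated by $\agent$ when operating on the individual scenario instance $\scenarioinstance$. Importantly, $Q(\agent, \scenarioinstance)$ depends only on the pair $(\agent, \scenarioinstance)$ and not on the ambient task, which is guaranteed by the deterministic sampling-based motion planning regime invoked via~\cite{janson2018deterministic}.

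Next I would take two tasks $\task_1, \task_2 \in \powerset{\taskspace}$ with $\task_1 \subseteq \task_2$. Since every scenario instance indexing the first union also indexes the second, monotonicity of union over an enlarging index set yields
\begin{equation*}
\taskqueries(\agent, \task_1) \;=\; \bigcup_{\scenarioinstance \in \task_1} Q(\agent, \scenarioinstance) \;\subseteq\; \bigcup_{\scenarioinstance \in \task_2} Q(\agent, \scenarioinstance) \;=\; \taskqueries(\agent, \task_2).
\end{equation*}
As $\powerset{\queryspace}$ is ordered by inclusion, this inclusion is precisely the required order relation, establishing $\taskqueries(\agent, \cdot)\colon \powerset{\taskspace} \to \powerset{\queryspace}$ as monotone.

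There is essentially no obstacle here beyond correctly identifying the poset structures on the input and output; the result is almost tautological given the union-based definition of $\taskqueries$. The only subtle point worth flagging in the write-up is the well-definedness of $Q(\agent, \scenarioinstance)$ as a function of $(\agent, \scenarioinstance)$ alone, which prevents pathological interactions between scenario instances inside a larger task and ensures the union-over-index argument applies.
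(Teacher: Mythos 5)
Your proof is correct and uses essentially the same argument as the paper: both rely on the fact that $\taskqueries(\agent,\task)$ is a union of per-scenario-instance query sets, so enlarging the task can only enlarge the union (the paper phrases this via the decomposition $\taskqueries(\agent,\task_2)=\taskqueries(\agent,\task_1)\cup\taskqueries(\agent,\task_2\setminus\task_1)$, which is the same additivity property you invoke). Your explicit remark on the well-definedness of the per-instance query sets is a reasonable extra precaution but does not change the substance.
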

\begin{proof}
Consider two tasks $\task_1\subseteq \task_2$. We have 
\begin{equation*}
\begin{aligned}
\taskqueries(\agent,\task_1)&\subseteq 
\big ( \taskqueries(\agent,\task_1) \cup \taskqueries(\agent,\task_2 \setminus \task_1) \big )\\
&=\taskqueries(\agent,\task_2).
\end{aligned}
\end{equation*}
\end{proof}

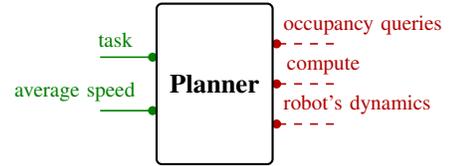
\begin{figure}[tb]
    \centering
    \scalebox{1.0}{\begin{tikzpicture}[DP, dp port sep=3pt]
    \node[dp={2}{3}] (ch) {Planner};
    \draw[runconn, runame={occupancy queries},relres=above, posres=1.5] (ch_res1){};
    \draw[runconn, runame={compute},relres=above, posres=0.8] (ch_res2){};
    \draw[runconn, runame={robot's dynamics},relres=above, posres=1.4] (ch_res3){};
    \draw[funconn, funame={task},relfun=left,relfun=above, posfun=0.7] (ch_fun1){};
    \draw[funconn, funame={average speed},relfun=left,relfun=above, posfun=1.5] (ch_fun2){};
\end{tikzpicture}}
    \caption{The planner \gls{abk:mdpi} which implements a motion planner for the robot to accomplish scenario instances of a \F{task} and thereby providing \F{average speed}, while requiring \R{occupancy queries} $\queryspace{}$, \R{compute} and \R{robot's dynamics}}
    \label{fig:mdpi_planner}
\end{figure}

The \textbf{Perceptual Collision Prediction} \gls{abk:mdpi}, visualized in~\cref{fig:mdpi_wtl}, describes the $\pcp{}$ function to determine all potential feasible \R{colliding class trajectories}~\edited{$\trajectorymapspace{}$} that could result in collisions with the robot at the \F{occupancy queries} from the planner. This guides the perception system to focus on critical areas based on the \F{occupancy queries} and the \F{class dynamics}. \edited{Consequently, the \F{occupancy queries} $\queryspace{}$, \F{class dynamics}, \F{class footprint} and \F{robot's shape} $\threedshape$ in $\reals^3$ serve as functionalities of this \gls{abk:mdpi}. In this context, we do not consider a poset structure for the footprint and shape based on area or volume but instead use set-wise inclusion on~$\powerset{\reals^2}$ and~$\powerset{\reals^3}$, respectively. Throughout this work, when referring to a ``larger'' or ``bigger'' footprint or shape, we mean that the footprint or shape has been expanded such that the original is a subset of the new one.} The \F{class dynamics}, including minimum turning radius, maximum acceleration, and deceleration, are specified similarly to the robot's dynamics. Again, the minimum turning radius is treated the opposite of a poset. The \F{class footprint} is the planar shape of the class in 2D, generated by the map $\shape{}$. \edited{The resources include \R{colliding class trajectories} $\trajectorymapspace{}$}.~\cref{lem:pcp_mono} illustrates the monotonic relationship between \R{colliding class trajectories} and \F{occupancy queries}, indicating that an increase in \F{occupancy queries} leads to an equal or greater number of \R{colliding class trajectories}. This relationship also applies to \F{class dynamics}, altering \F{class dynamics} results in new \R{colliding class trajectories}. Specifically, higher acceleration and deceleration and a smaller minimum turning radius produce a broader range of \R{colliding class trajectories}. \edited{A larger \F{robot's shape} or larger \F{class footprint} results in more class configurations being in collision by keeping the \F{class dynamics} constant, with ``more'' understood in the sense of set-wise inclusion, leading to more \R{colliding class trajectories}}.

\begin{lemma}
\label{lem:pcp_mono}
The \R{colliding class trajectories} from $\pcp$ are monotone with respect to the \F{occupancy queries} as shown in~\cref{fig:mdpi_wtl}.
\end{lemma}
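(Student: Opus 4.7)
The plan is to mirror the straightforward argument used in the proof of~\cref{lem:queries_monotone}. The key observation I would establish first is that although~\cref{def:pcp} characterizes $\pcp_i$ query-by-query, on a set $Q \subseteq \queryspace$ the image is naturally the union
\[
\pcp_i(Q) = \bigcup_{\query \in Q} \pcp_i(\{\query\}),
\]
since distinct queries contribute their feasible colliding trajectories independently: each query $\query = \tup{\configagent{}^{\robot}, \tau, \environment}$ determines, via $\collision(\configagent{}^{\robot}, \classinstance{i})$ and the preimage of the class dynamics, its own set of admissible trajectories $\trajectorymap{i}$ terminating in a colliding configuration at time $\tau$.

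Given this union representation, the monotonicity claim reduces to an elementary set-theoretic inclusion. I would then take arbitrary sets of queries $Q_1 \subseteq Q_2 \subseteq \queryspace$ and conclude
\[
\pcp_i(Q_1) = \bigcup_{\query \in Q_1} \pcp_i(\{\query\}) \subseteq \bigcup_{\query \in Q_2} \pcp_i(\{\query\}) = \pcp_i(Q_2),
\]
which yields the desired monotonicity of $\pcp_i$ with respect to its argument ordered by set inclusion. The same argument applies uniformly for each class index $i$.

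The main obstacle, if it can be called one, lies solely in making the union-over-queries structure of $\pcp_i$ explicit. \cref{def:pcp} presents the mapping one query at a time, and one must argue that nothing in the definition allows trajectories to be discarded when additional queries are introduced: each query only enlarges the output by its own trajectory set, and none of the per-query sets depend on which other queries are simultaneously present in the input. Once this is clarified, the proof is immediate and has the same one-line flavor as that of~\cref{lem:queries_monotone}.
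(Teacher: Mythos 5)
Your proof is correct and follows essentially the same route as the paper: the paper also reduces the claim to the fact that $\pcp_i$ acts union-wise over queries, writing $\pcp_i(\queryspace_2) = \pcp_i(\queryspace_1) \cup \pcp_i(\queryspace_2 \setminus \queryspace_1)$ for $\queryspace_1 \subseteq \queryspace_2$, whereas you decompose all the way to singletons — a cosmetic difference only. Your explicit justification that the per-query trajectory sets are independent of which other queries are present is a welcome clarification of a step the paper leaves implicit.
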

\begin{proof}
Consider two query sets $\queryspace_1\subseteq \queryspace_2$. We have 
\begin{equation*}
\begin{aligned}
\pcp_i(\queryspace_1)&\subseteq 
\big (\pcp_i(\queryspace_1) \cup \pcp_i(\queryspace_2 \setminus \queryspace_1)\big )\\
&=\pcp_i(\queryspace_2).
\end{aligned}
\end{equation*}
\end{proof}

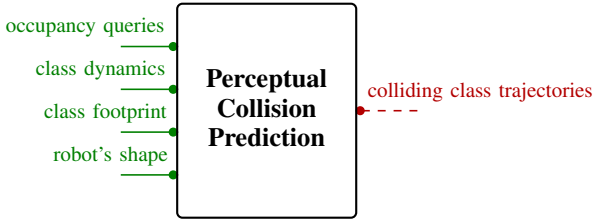
\begin{figure}[tb]
    \centering
    \scalebox{1.0}{\begin{tikzpicture}[DP, dp port sep=3pt]
    \node[dp={4}{1}] (ch) {\begin{tabular}{c}Perceptual \\ Collision \\ Prediction\end{tabular}};
    \draw[runconn, runame={colliding class trajectories},relres=above, posres=2.1] (ch_res1){};
    \draw[funconn, funame={occupancy queries},relfun=above, posfun=1.7] (ch_fun1){};
    \draw[funconn, funame={class dynamics},relfun=above, posfun=1.4] (ch_fun2){};
    \draw[funconn, funame={class footprint},relfun=above, posfun=1.3] (ch_fun3){};
    \draw[funconn, funame={robot's shape},relfun=above, posfun=1.2] (ch_fun4){};
\end{tikzpicture}}
    \caption{The Perceptual Collision Prediction \gls{abk:mdpi} which implements the function $\pcp{}$. 
    \edited{The functionalities are the \F{occupancy queries} for the planner, the \F{class dynamics}, the \F{class footprint} and the \F{robot's shape}. The required resources are the \R{colliding class trajectories}.}}
    \label{fig:mdpi_wtl}
\end{figure}

The \textbf{Prior Check} \gls{abk:mdpi}, illustrated in~\cref{fig:mdpi_prior}, describes the~$\priorcheck$ function as outlined in~\cref{def:priorcheck}. 
The function~$\priorcheck$ takes all start configurations from the colliding class trajectories, which trajectory configurations are all in the prior $\prior{}$ of the class. \edited{Additionally, this \gls{abk:mdpi} removes infeasible colliding class trajectories that cause overlapping (not just touching) between robot and class footprints or those starting from a collision.} Thus, the functionalities are the \F{class configurations prior}, where classes can be in the scenario instance, and the \F{colliding class trajectories}~\edited{$\trajectorymapspace{}$} generated by $\pcp$. \edited{The resources are the final \R{perception requirements} $\sensreq$, the \R{robot's footprint}~$\shape{0}$ and the \R{class footprint}~$\shape{i}$ where the poset for the \R{robot's footprint} and the \R{class footprint} is defined by set-wise inclusion on~$\powerset{\reals^2}$.} According to~\cref{lem:priorcheck_mono}, priors that encompass more class configurations tend to filter out fewer configurations during~$\priorcheck$, resulting in more perception requirements. Given the relations established in~\cref{lem:queries_monotone} and~\cref{lem:pcp_mono}, where more complex tasks generate more \F{colliding class trajectories}, it follows, as demonstrated in~\cref{lem:sr_monotone}, that increased task complexity (more \F{colliding class trajectories}) also amplifies the \R{perception requirements}. \edited{If more \F{colliding class trajectories} or a prior with additional class configurations is required while keeping the \R{perception requirements} constant, the \R{robot's footprint} or the \R{class footprint} must be increased such that the additional start configurations from the \F{colliding class trajectories} are already in collision with the robot. In this way, the \R{robot's footprint} effectively acts as a perception pipeline. For example, if a \R{robot's footprint} encompasses $\reals^2$, no class trajectory can collide with it, as the robot already occupies all available space.}

\begin{lemma}
\label{lem:priorcheck_mono}
The class configurations in the \R{colliding class trajectories} are monotone with respect to the \F{class configurations prior} as shown in~\cref{fig:mdpi_prior}.
\end{lemma}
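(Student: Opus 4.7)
The plan is to mirror the two preceding monotonicity arguments (\cref{lem:queries_monotone} and~\cref{lem:pcp_mono}): fix an arbitrary set of colliding class trajectories, take two priors comparable under set-wise inclusion, and chase the definition of $\priorcheck$ to produce the desired inclusion of outputs.

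Concretely, I would proceed as follows. First, fix a class $i$ together with an arbitrary set of colliding class trajectories $T\subseteq \trajectorymapspace{i}(\reals^+,\confspacen{i}^{\workspace})$, and let $\prior{i,1}\subseteq \prior{i,2}$ be two class configuration priors. By~\cref{def:priorcheck}, $\priorcheck(T,\prior{i,k})$ returns the starting configurations $\trajectorymap{i}(0)$ of exactly those $\trajectorymap{i}\in T$ such that $\trajectorymap{i}(t)\in \prior{i,k}$ for every $t\in[0,\tau]$. If a trajectory satisfies this containment condition with respect to $\prior{i,1}$, then because $\prior{i,1}\subseteq \prior{i,2}$ it also satisfies it with respect to $\prior{i,2}$. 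Hence the subset of $T$ accepted under $\prior{i,1}$ is contained in the subset accepted under $\prior{i,2}$, and projecting each accepted trajectory to its starting configuration preserves this inclusion, giving
\begin{equation*}
\priorcheck(T,\prior{i,1})\subseteq \priorcheck(T,\prior{i,2}).
\end{equation*}

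Since this holds for every $T$, monotonicity in the \F{class configurations prior} follows, which is the statement indicated on the corresponding wire of~\cref{fig:mdpi_prior}. I expect no substantive obstacle here: the only subtle point is a bookkeeping one, namely being explicit that the order on \F{class configurations prior} and on the output component of \R{perception requirements} is set-wise inclusion (as stipulated in the surrounding paragraph), so that ``monotone'' is literally the preservation of $\subseteq$ established above. The remaining dependency on \F{colliding class trajectories} is not addressed here but is handled separately in~\cref{lem:sr_monotone}.
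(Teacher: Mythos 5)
Your proposal is correct and follows essentially the same route as the paper: both arguments reduce to the observation that a trajectory whose configurations all lie in $\prior{i,1}$ also has all its configurations in $\prior{i,2}\supseteq\prior{i,1}$, so the set of accepted trajectories (and hence of returned starting configurations) can only grow with the prior. Your version is somewhat more explicit in chasing the definition of $\priorcheck$ through to the inclusion of outputs, but there is no substantive difference in approach.
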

\begin{proof}
Consider two priors~ $\prior{i,1} \subseteq \prior{i,2}$ and a class configuration set $\edited{\Theta_i^{\workspace}}$. If $\edited{\Theta_i^{\workspace}} \subseteq \prior{i,1}$ then it holds also~$\edited{\Theta_i^{\workspace}} \subseteq \prior{i,2}$. If~$\edited{\Theta_i^{\workspace}} \subseteq \prior{i,2} \setminus \prior{i,1}$, then $\edited{\Theta_i^{\workspace}} \subseteq \prior{i,2}$ but $\edited{\Theta_i^{\workspace}} \cap \prior{i,1} = \emptyset$.
\end{proof}

\begin{lemma}
\label{lem:sr_monotone}
The \R{perception requirements} $\sensreq$ are monotone in the task, respectively in the \F{colliding class trajectories} (\cref{fig:mdpi_prior}).
\end{lemma}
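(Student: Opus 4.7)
The plan is to establish the stated monotonicity by composing the previously proven monotonicity results along the pipeline that produces $\sensreq$, following the same set-inclusion style used in the proofs of~\cref{lem:queries_monotone,lem:pcp_mono,lem:priorcheck_mono}. Recall that, up to the class-wise indexing and the ego/world frame transformations, $\sensreq(\agent,\task,\classinstance{i},\environment)$ is obtained by applying $\priorcheck$ (\cref{def:priorcheck}) to the image $\pcp_i(\taskqueries(\agent,\task))$ of the task queries under the perceptual collision prediction (\cref{def:pcp}), and then reading off the starting configurations $\trajectorymap{i}(0)$ of the surviving trajectories.

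First, I would prove the claimed monotonicity of $\sensreq$ in the \R{colliding class trajectories}. Let $T_1 \subseteq T_2 \subseteq \trajectorymapspace{i}(\reals^+,\confspacen{i}^{\workspace})$ be two sets of trajectories supplied to the Prior Check block. Since $\priorcheck$ evaluates the prior condition $\trajectorymap{i}([0,\tau]) \subseteq \prior{i}$ independently on each trajectory and returns the initial configuration $\trajectorymap{i}(0)$ whenever the condition holds, one obtains $\priorcheck(T_1,\prior{i}) \subseteq \priorcheck(T_1,\prior{i}) \cup \priorcheck(T_2\setminus T_1,\prior{i}) = \priorcheck(T_2,\prior{i})$, exactly in the spirit of the proofs of~\cref{lem:queries_monotone,lem:pcp_mono}. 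Since the subsequent transformation from the world to the ego frame is a bijection applied element-wise, and the product over environments and classes preserves inclusions component-wise, this inclusion lifts to $\sensreq$.

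Second, I would chain this with the previous two lemmas to obtain monotonicity in the task. Take $\task_1 \subseteq \task_2$. By~\cref{lem:queries_monotone}, $\taskqueries(\agent,\task_1) \subseteq \taskqueries(\agent,\task_2)$. Applying~\cref{lem:pcp_mono} class-wise gives $\pcp_i(\taskqueries(\agent,\task_1)) \subseteq \pcp_i(\taskqueries(\agent,\task_2))$ for each $i$. Feeding these into the first step and noting that the prior $\prior{i}$, the footprints, and the environment set are held fixed between the two tasks, the monotonicity of $\priorcheck$ in its trajectory argument then yields $\sensreq(\agent,\task_1,\classinstance{i},\environment) \subseteq \sensreq(\agent,\task_2,\classinstance{i},\environment)$ for every class instance and environment, which is precisely the required inclusion in the product poset.

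I do not expect any genuine obstacle: the result is a routine compositional fact, since each stage in the construction of $\sensreq$ is defined pointwise over its input set and returns unions of per-element contributions. The only place that deserves a moment of care is the interaction between the filtering in $\priorcheck$ and the frame transformation, so I would make explicit in the proof that infeasibility of a trajectory is a property independent of which larger set it is drawn from, which is what makes the set-inclusion argument go through unchanged in both the ``colliding class trajectories'' and ``task'' formulations.
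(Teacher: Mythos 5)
Your proposal is correct and follows essentially the same route as the paper: chain \cref{lem:queries_monotone} and \cref{lem:pcp_mono} and conclude with the set-inclusion (union) argument $\sensreq(\agent,\task_1)\subseteq \sensreq(\agent,\task_1)\cup\sensreq(\agent,\task_2\setminus\task_1)=\sensreq(\agent,\task_2)$. You are slightly more explicit than the paper in spelling out that $\priorcheck$ is monotone in its trajectory argument (the paper's \cref{lem:priorcheck_mono} only addresses monotonicity in the prior), which is a welcome but minor refinement of the same argument.
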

\begin{proof}
Consider two tasks $\task_1\subseteq \task_2$. From~\cref{lem:queries_monotone} we know that occupancy queries are monotone in the task 
and from~\cref{lem:pcp_mono} we know that \F{colliding class trajectories} are monotone with the queries. We have 
\begin{equation*}
\begin{aligned}
\sensreq(\agent,\task_1)&\subseteq 
\big ( \sensreq(\agent,\task_1) \cup \sensreq(\agent,\task_2 \setminus \task_1) \big )\\
&=\sensreq(\agent,\task_2).
\end{aligned}
\end{equation*}
\end{proof}

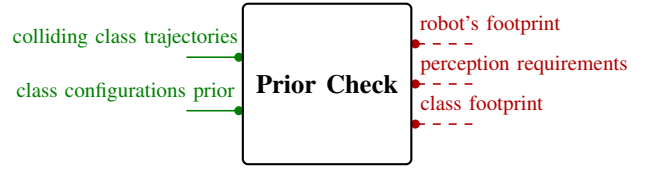
\begin{figure}[tb]
    \centering
    \scalebox{1}{\begin{tikzpicture}[DP, dp port sep=3pt]
    \node[dp={2}{3}] (ch) {Prior Check};
    \draw[runconn, runame={robot's footprint},relres=above, posres=1.3] (ch_res1){};
    \draw[runconn, runame={perception requirements},relres=above, posres=1.9] (ch_res2){};
    \draw[runconn, runame={class footprint},relres=above, posres=1.15] (ch_res3){};
    \draw[funconn, funame={colliding class trajectories},relfun=above, posfun=2.2] (ch_fun1){};
    \draw[funconn, funame={class configurations prior},relfun=above, posfun=2.2] (ch_fun2){};
\end{tikzpicture}}
    \caption{The Prior Check \gls{abk:mdpi}, which implements $\priorcheck{}$, 
    provides \F{class configurations prior} and \F{colliding class trajectories} functionalities and \edited{requires \R{robot's footprint}, \R{perception requirements} and \R{class footprint}}.}
    \label{fig:mdpi_prior}
\end{figure}

The \textbf{Robot body} \gls{abk:mdpi} in~\cref{fig:mdpi_robotbody} encompasses the characteristics of the robot body $\body{}$, such as the \F{robot's dynamics}, sensor \F{mounting configurations}, 
the \F{robot's footprint}, the maximum \F{payload mass} capacity, the \F{auxiliary power} capability, and \F{driving} range. 
This \gls{abk:mdpi} provides the \F{robot's dynamics} functionality, parameterized as minimum turning radius (considered opposite of a poset), maximum acceleration, and deceleration. Additionally, 
it outlines \F{mounting configurations} for sensors within $\sethree$, the \F{robot's footprint} $\shape{0}$ in $\reals^2$, the maximum \F{payload mass} in \unit[]{kg} the robot can carry, its \F{auxiliary power} capacity in \unit{}{W} for powering hardware such as sensors and computers,
and the \F{driving range} in \unit[]{m} representing the robot's driving range without recharge.
Requirements for this \gls{abk:mdpi} include the \R{robot's shape} $\threedshape$ in $\reals^3$, associated with \R{fixed costs} in \unit[]{CHF} and \R{operational costs} in \unit[]{CHF/m}. 
Enhanced \F{robot's dynamics}, such as greater acceleration/deceleration and a reduced turning radius, typically necessitate higher  \R{fixed costs} and \R{operational costs}. 
Similarly, increasing the \F{payload mass} and \F{auxiliary power} capacity implies a need for a more costly or larger \R{robot's shape}. Boosting the \F{driving range} involves augmenting the battery size, impacting both \R{fixed} and \R{operational costs}. 
Additional sensor \F{mounting configurations} may necessitate a larger \R{robot's shape} to accommodate the setup.
As aforementioned, a larger \F{robot's footprint} can potentially reduce perception requirements by obstructing more colliding class trajectories. Achieving a larger \F{robot's footprint} requires a correspondingly larger \R{robot's shape}. 
\begin{figure}[tb]
    \centering
    \scalebox{1}{\begin{tikzpicture}[DP, dp port sep=3pt]
    \node[dp={6}{3}] (ch) {Robot Body};
    \draw[runconn, runame={fixed cost},relres=above, posres=0.8] (ch_res1){};
    \draw[runconn, runame={operational cost},relres=above, posres=1.2] (ch_res2){};
    \draw[runconn, runame={robot's shape},relres=above, posres=1.0] (ch_res3){};
    \draw[funconn, funame={mounting configurations},relfun=above, posfun=2.1] (ch_fun1){};
    \draw[funconn, funame={payload mass},relfun=above, posfun=1.2] (ch_fun2){};
    \draw[funconn, funame={auxiliary power},relfun=above, posfun=1.4] (ch_fun3){};
    \draw[funconn, funame={driving range},relfun=above, posfun=1.2] (ch_fun4){};
    \draw[funconn, funame={robot's dynamics},relfun=above, posfun=1.5] (ch_fun5){};
    \draw[funconn, funame={robot's footprint},relfun=above, posfun=1.4] (ch_fun6){};
\end{tikzpicture}}
    \caption{The Robot body \gls{abk:mdpi} which provides the \F{dynamics} $\dynamics{}$, the \F{mounting configurations} for sensors each in $\sethree$, 
    the \F{body footprint} $\shape{0}$, 
    the \F{payload mass} in \unit[]{kg}, the \F{auxilary power} in \unit[]{W} and the \F{driving range} in \unit[]{m}, while requiring \R{robot's shape} $\threedshape$, 
    \R{fixed cost} in \unit[]{CHF} and \R{operational costs} in \unit[]{CHF/m}.}
    \label{fig:mdpi_robotbody}
\end{figure}
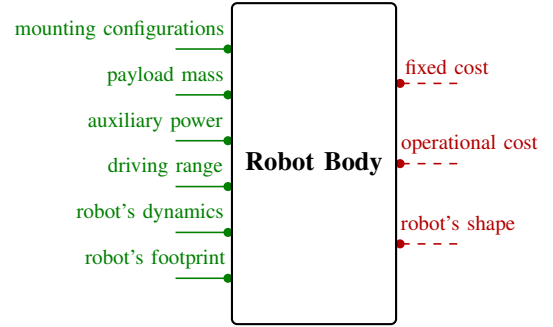

The \textbf{Computing} \gls{abk:mdpi}, visualized in~\cref{fig:mdpi_computing}, implements the computing units necessary for the robot's software operations, 
including both motion planning and perception. It provides computational capabilities as a functionality in terms of CPU and GPU performance, measured in memory capacity and operations per second. These computational capabilities are encapsulated as \F{compute}. The provision of \F{compute} is directly linked to associated \R{cost} in \unit[]{CHF}, \R{mass} in \unit[]{kg} and \R{power} consumption in \unit[]{W}.
As the demand for \F{compute} increases to accommodate more sophisticated software algorithms or larger data volumes, 
the specifications of the computing units must be scaled up accordingly. This, in turn, impacts the overall \R{cost} of the computing hardware, its \R{mass}, and its \R{power} consumption.

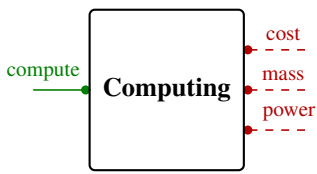
\begin{figure}[tb]
    \centering
    \scalebox{1}{
\begin{tikzpicture}[DP, dp port sep=3pt]
    \node[dp={1}{3}] (cpu) {Computing};
    \draw[runconn, runame={cost},relres=above, posres=0.6] (cpu_res1){};
    \draw[runconn, runame={mass}, relres=above, posres=0.6] (cpu_res2){};
    \draw[runconn, runame={power}, relres=above, posres=0.7] (cpu_res3){};
    \draw[funconn, funame={compute},relfun=above, posfun=0.8] (cpu_fun1){};
\end{tikzpicture}}
    \caption{The Computing \gls{abk:mdpi} which implements the computing units. 
    It provides \F{compute} and requires \R{cost} in \unit[]{CHF}, the \R{mass} in \unit[]{kg} and \R{power} consumption in \unit[]{W}.}
    \label{fig:mdpi_computing}
\end{figure}

\subsection{Sensor selection and placement problem}\label{sec:sspp}
This section introduces a methodology to obtain the relationship between perception pipelines and perception requirements for a particular task, while accounting for resource consumption (see~\cref{fig:alg_ssp_flow}).
\begin{figure}[tb]
    \centering
    \includegraphics[width=0.5\textwidth]{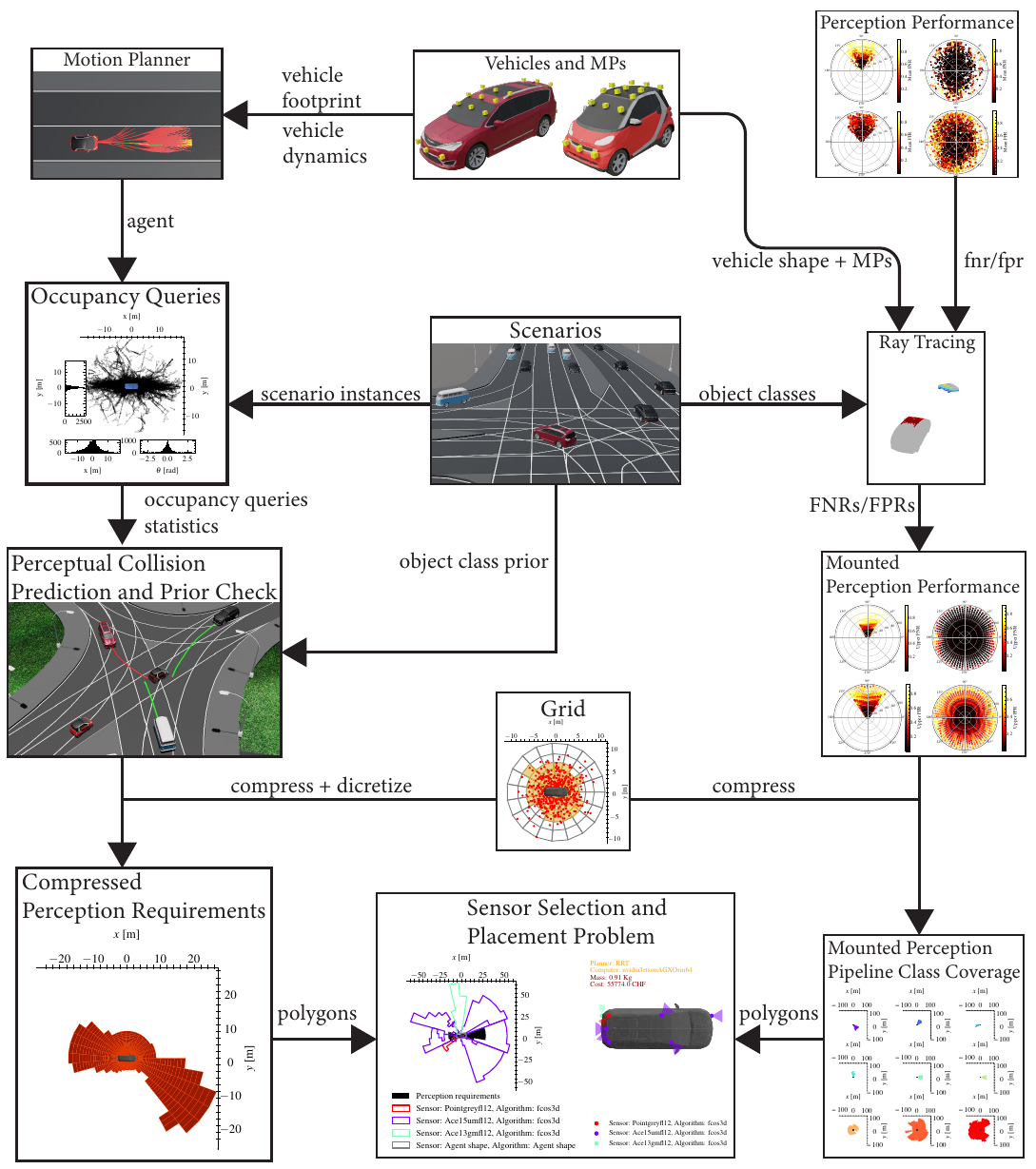}
    \caption{Overview of the sensor selection and placement process: starting with a catalog of robot bodies, sensor positions, orientations, perception pipelines, and motion planners, alongside with scenarios. 
    The workflow splits into agent activities (left) that transform task queries into perception requirements, and perception activities (right) that determine class configurations detectable by mounted perception pipelines. 
    The process concludes with the selection of optimal pipelines to minimize costs while satisfying perception requirements.}
    \label{fig:alg_ssp_flow}
\end{figure}
Employing a worst-case approach, this study assumes the absence of filters that account for historical detection data. This premise necessitates that for a perception pipeline to accurately respond to occupancy queries, its \gls{fnr} and \gls{fpr} must not exceed a predefined threshold $\epsilon$. Accordingly, this assumption ensures that the identification of class configurations from perception requirements is not influenced by temporal factors. 
\edited{Thus, all class configurations for which the both upper bounds from the $\ppp$ output intervals are smaller than the threshold~$\epsilon$ are considered covered or detectable by the perception pipeline~$\sensorp_j$.} This set of class configurations which can be seen by a perception pipeline depend on the mounting configuration on the robot body as well as the robot body shape itself.  The reason is that different mounting configurations will have different relative class configurations to the perception pipeline. Moreover, depending on the mounting configuration on the robot, the shape of the robot could block the sensor \gls{fov}. \edited{For instance, consider} a lidar sensor positioned on the roof of a vehicle. Due to its placement, some lidar beams are blocked by the vehicle's roof, preventing the lidar from measuring objects in close proximity to the vehicle. We call a perception pipeline with a mounting position on a robot body and some yaw and pitch mounting orientation as \emph{mounted perception pipeline}.

\begin{definition}[Mounted Perception Pipeline]
    Given a perception pipeline $\sensorp$, a robot body $\body$, a mounting position of a sensor $\mounts$ on the body, and the yaw and pitch angle of the sensor mounted on the robot $\mountso$, a mounted perception pipeline is a tuple containing the perception pipeline, the robot body, the mounting position and the mounting orientation: $\virtsensorp = \tup{\sensorp, \body, \mounts, \mountso}$.
\end{definition}

The following map is defined, which yields all the class configurations visible to a mounted perception pipeline, considering a specified threshold.

\begin{definition}[Mounted Perception Pipeline Class Coverage]
Consider a mounted perception pipeline $\virtsensorp$ characterized by its perception performance \edited{$\ppp$}, a target class instance $\classinstance{}$, an environment $\environment$ and a threshold $\epsilon$. The set of class configuration which can be detected by the mounted perception pipeline are defined as
\edited{\begin{equation}
\label{eq:mpcc}
\vspccoverage \colon \classinstanceset{} \times \mathbb{MPP} \times \environmentset \times \reals_{[0,1]} \to \powerset{\confspace{}^{\robot}},
\end{equation}}
where $\mathbb{MPP}$ is the \edited{finite} set of all mounted perception pipelines. %
\end{definition}

\edited{\begin{remark}\label{rem:simselfocclusion}
    The $\vspccoverage$ map is implemented by taking a set of class configurations from a grid, as detailed in~\cref{sec:solve:ssp} and illustrated in~\cref{fig:polar:grid}. For each mounted perception pipeline, robot body, class body and grid-based class configuration, we evaluate how many pixels or lidar points are projected onto the class body and how much is occluded by the robot body using a 3D ray-casting simulation, as described in~\cref{sec:approach}. Using this sensor-specific information, along with the class configuration, environment, and class appearance, we apply the $\ppp$ function to compute \gls{fnr} and \gls{fpr} intervals. By considering the upper bounds of these intervals and comparing them to a threshold~$\epsilon$, we determine whether the class configuration is detectable by the mounted perception pipeline. Detection is confirmed if both upper bounds are smaller than the threshold. The detected class configuration is then transformed from the sensor to the ego coordinate frame using the mounting configuration. This process is illustrated in~\cref{fig:alg_ssp_flow}.
\end{remark}}

Collections of mounted perception pipelines class coverage for some given $\kclass$ object class instances, $\menv$ environments and $\lvsp$ mounted perception pipelines are given as
\edited{\begin{equation*}
    \vspccoveragecat = \prod_{k=1}^{\kclass}\prod_{l=1}^{\lvsp}\prod_{m=1}^{\menv}\vspccoverage(\classinstance{k}, \virtsensorp_{l}, \environment_{m}, \epsilon).
\end{equation*}}
\begin{definition}[Sensor selection and placement problem]\label{pr:sensor_selection}
Consider a task $\task$, an agent $\agent$, a body $\body$ with mounting positions $\mountsset$, perception pipelines $\percpset$, 
mounting orientations $\mountsoset$ and a detection threshold $\epsilon$. The task involves $\kclass$ unique number of object class instances and $\menv$ number of environments. From the body, perception pipelines and mounting orientation, $\lvsp$ number of mounted perception pipelines $\virtsensorp$ can be generated. This leads to the task perception requirement $\sensreq(\agent,\task)$ and a set $\vspccoveragecat$ of collections of mounted perception pipelines class coverage with $\vspccoverage(\classinstance{k}, \virtsensorp_{l}, \environment_{m}, \epsilon) \subseteq \edited{\confspacen{k}^{\robot}}$, and $W$ cost functions $c_w:\virtsensorp_{l} \to \reals_{> 0}$. The problem is to identify $\mountedpercpset \subseteq \{\virtsensorp_i\}_{i\in \{1,\ldots, \lvsp\}}$ with the minimum total cost over all cost functions. The subset $\mountedpercpset$ must cover each element in $\sensreq(\agent,\task)$ with a matching $\vspccoverage(\classinstance{k}, \virtsensorp, \environment_{m}, \epsilon)$, specific to the same $\classinstance{k}$ and $\environment_m$ within $\sensreq(\agent,\task, \classinstance{k}, \environment_m) \subseteq \edited{\confspacen{k}^{\robot}}$. Furthermore, each $\virtsensorp \in \mountedpercpset$ must occupy a unique mounting position $\mounts$. The problem is outlined in~\cref{eq:ss_general}, employing a binary vector $x$ composed of elements $x_i \in \{0,1\}$, each denoting a decision variable. Here, $x_i = 1$ signifies the selection of the mounted perception pipeline $\virtsensorp_i$. An indicator function $\emtpyfunc$ is introduced to map a class configuration set~\edited{$\Theta^{\robot} \subseteq \confspacen{}^{\robot}$} to an empty set whenever the associated binary variable $x_i=0$: 
\begin{equation*}
    \emtpyfunc(\edited{\Theta^{\robot}}, x) = 
\begin{cases} 
   \edited{\Theta^{\robot}} & \text{if } x = 1 \\
   \emptyset & \text{if } x = 0 
\end{cases}.
\end{equation*}
Matrix $F$ indicates which mounted perception pipelines share the same mounting positions. In a given row of $F$, all entries set to 1 signify mounted perception pipelines with identical mounting positions.

\begin{equation}
\label{eq:ss_general}
\begin{aligned}
\min \quad & \sum_{i = 1}^{\lvsp} \sum_{j=1}^{W} w_jc_j(\virtsensorp_{i})\cdot x_i\\
\textrm{s.t.} \quad
&   \sensreq(\agent,\task, \classinstance{k}, \environment_m) \subseteq \\
& \bigcup_{l=1}^{\lvsp} \emtpyfunc(\vspccoverage(\classinstance{k}, \virtsensorp_l, \environment_{m}, \epsilon), x_l) \\ & \forall k \in \{1,\ldots,\kclass\}, m \in \{1,\ldots,\menv\},\\
& F \cdot x \leq \begin{bmatrix}
1 & \ldots & 1
\end{bmatrix}^{\text{T}},    \\
  & x_i \leq 1 \quad \forall i \in \{1,\ldots,\lvsp\},    \\
  & x_i \in \mathbb{N}_{0} \quad \forall i \in \{1,\ldots,\lvsp\},\\
  & \sum_{j=1}^W w_j=1, \ w_j\geq 0,\ j=1,\ldots, W.
\end{aligned}
\end{equation}
\end{definition}

The union of all class configurations detectable by the selected mounted perception pipelines, 
represented as $\mountedpercpset$, across all classes and environmental conditions is denoted as \emph{perception coverage}:
\edited{\begin{equation*}
    \sensreq(\agent,\task) \subseteq \prod_{k=1}^{\kclass}\prod_{\virtsensorp \in \mountedpercpset}\prod_{m=1}^{\menv}\vspccoverage(\classinstance{k}, \virtsensorp, \environment_{m}, \epsilon).
\end{equation*}}

The \textbf{Coverage} \gls{abk:mdpi}, illustrated in~\cref{fig:mdpi_coverage}, focuses on meeting the robot's \F{perception requirements} as a functionality by ensuring sufficient \R{perception coverage} as a resource, which includes the ability to detect necessary class configurations to accomplish the task. An increase in \F{perception requirements} directly necessitates an enhancement in \R{perception coverage}, which is demonstrated in~\cref{lem:pr_monotone}. 

\begin{lemma}
    \label{lem:pr_monotone}
    The \F{perception requirements} $\sensreq$ are monotone with \R{perception coverage}, 
    as shown in~\cref{fig:mdpi_coverage}.
    \end{lemma}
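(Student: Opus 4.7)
The plan is to mirror the template used for~\cref{lem:queries_monotone,lem:pcp_mono,lem:priorcheck_mono,lem:sr_monotone}: take two perception requirement sets related by inclusion and conclude, via transitivity of $\subseteq$, that every admissible coverage for the larger requirements is automatically admissible for the smaller one, so the minimum \R{perception coverage} required by the Coverage \gls{abk:mdpi} is non-decreasing in the \F{perception requirements}.

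First, I would fix two \F{perception requirements} $\sensreq_1 \subseteq \sensreq_2$, where inclusion is understood componentwise in the product poset from~\cref{eq:tpr}, i.e.\ $\sensreq_1(\agent,\task,\classinstance{k},\environment_m) \subseteq \sensreq_2(\agent,\task,\classinstance{k},\environment_m)$ for every class instance $\classinstance{k}$ and every environment $\environment_m$. This reduction to componentwise inclusion is essentially the same order on a product of power sets already used implicitly in~\cref{lem:sr_monotone}.

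Second, I would recall that, by the set-cover constraints of~\cref{eq:ss_general}, any feasible \R{perception coverage} obtained from a selection $\mountedpercpset$ must satisfy $\sensreq(\agent,\task,\classinstance{k},\environment_m) \subseteq \bigcup_{\virtsensorp \in \mountedpercpset}\vspccoverage(\classinstance{k},\virtsensorp,\environment_m,\epsilon)$ for all $k,m$. If a coverage $C$ is admissible for $\sensreq_2$, then by transitivity $\sensreq_1 \subseteq \sensreq_2 \subseteq C$, so $C$ is also admissible for $\sensreq_1$. Hence the set of feasible coverages for $\sensreq_2$ is contained in that for $\sensreq_1$, and the minimum resource in the coverage poset (ordered by inclusion) needed to cover $\sensreq_2$ dominates any such minimum for $\sensreq_1$, which is exactly the monotonicity statement for the Coverage \gls{abk:mdpi}.

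The main (minor) obstacle is purely bookkeeping: the functionality is a product of power sets indexed over classes and environments and the resource is the induced union of mounted pipeline coverages, so the statement must be lifted via the product poset rather than argued on a single set. No optimization or continuity argument is needed beyond transitivity of set inclusion, so once the componentwise reduction is articulated, the conclusion follows in one line, paralleling the proofs of~\cref{lem:queries_monotone} and~\cref{lem:sr_monotone}.
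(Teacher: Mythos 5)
Your proposal is correct and follows essentially the same route as the paper: the paper's proof likewise appeals to the first (coverage) constraint of~\cref{eq:ss_general} and observes that enlarging $\sensreq$ forces the union of selected $\vspccoverage$ sets to grow. Your version merely makes explicit the componentwise product-poset ordering and the feasible-set/transitivity bookkeeping that the paper leaves implicit.
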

    \begin{proof}
    Consider the first constraint in~\cref{eq:ss_general}, representing the sensor selection and placement optimization problem. Clearly, if one increases the $\sensreq$ set, one needs to increase the union of selected perception pipeline class coverage $\vspccoverage$, representing the perception coverage.
    \end{proof}

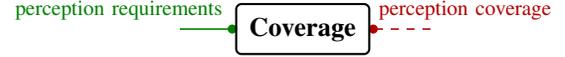
\begin{figure}[tb]
    \centering
    \scalebox{1.0}{\begin{tikzpicture}[DP]
    \node[dp={1}{1}] (ch) {Coverage};
    \draw[runconn, runame={perception coverage},relres=above, posres=1.6] (ch_res1){};
    \draw[funconn, funame={perception requirements},relfun=above, posfun=2.2] (ch_fun1){};
\end{tikzpicture}}
    \caption{The Coverage \gls{abk:mdpi} which provides \F{perception requirements} $\sensreq$ and requires \R{perception coverage} as a set of $\vspccoverage$.}
    \label{fig:mdpi_coverage}
\end{figure}

The \textbf{Mounted Perception Pipelines} \gls{abk:mdpi} in~\cref{fig:mdpi_mpp} implements the selection and positioning of perception pipelines on the robot to cover all perception requirements, thus ensuring \F{perception coverage}, considering all \F{class appearances} $\appearance$ within the task, 
and accommodating the \F{robot's shape} $\threedshape$. 
\edited{This \gls{abk:mdpi} requires a set of \R{mounting configurations} in $\sethree$ and a set of \R{perception performance} quantified by the upper limits of the $\ppp$ function.} The \R{perception performance} considers the opposite order of $\ppp$ upper limits, where a pipeline $\sensorp_a$ dominates $\sensorp_b$ if it has lower upper bounds for \glspl{fnr} and \glspl{fpr} across all class configurations~\edited{$\config{i}^{\robot}$}, class appearances $\appearance_{i}$ and environments $\environment$. Adding a class configuration to the \F{perception coverage} or new \F{class appearances} may necessitate a change to a more capable perception pipeline with improved \R{perception performance} to ensure coverage under the defined threshold $\epsilon$. Similarly, enhancing \F{perception coverage} with new class configurations or \F{class appearances} might necessitate additional \R{mounting configurations}. A larger \F{robot's shape} may introduce self-occlusion, impacting the \gls{fov} and necessitating additional sensor placements for coverage.

\edited{\begin{remark}
    A larger \R{robot's footprint} can reduce colliding class trajectories, as shown in~\cref{fig:mdpi_prior}, by covering them. However, a bigger \F{robot's shape} can increase sensor self-occlusion and create more colliding class configurations, resulting in additional colliding class trajectories. Therefore, balancing the \R{robot's footprint} and the \F{robot's shape} becomes crucial in the co-design process.

\end{remark}}

\begin{figure}[tb]
    \centering
    \scalebox{0.9}{\begin{tikzpicture}[DP, dp port sep=3pt]
    \node[dp={3}{2}] (ch) {\begin{tabular}{c}Mounted Perception \\ Pipelines\end{tabular}};
    \draw[runconn, runame={mounting configurations},relres=above, posres=2.1] (ch_res1){};
    \draw[runconn, runame={perception performance},relres=above, posres=2.0] (ch_res2){};
    \draw[funconn, funame={perception coverage},relfun=above, posfun=2.0] (ch_fun1){};
    \draw[funconn, funame={class appearances},relfun=above, posfun=1.8] (ch_fun2){};
    \draw[funconn, funame={robot's shape},relfun=above, posfun=1.4] (ch_fun3){};
\end{tikzpicture}}
    \caption{The Mounted Perception Pipelines \gls{abk:mdpi} which provides the \F{perception coverage}, the set of all \F{class appearances} $\appearance$ in the task and the \F{robot's shape} $\threedshape$ as functionalities.
    \edited{The required resources are the set of \R{mounting configurations} in $\sethree$ and the \R{perception performance}}.}
    \label{fig:mdpi_mpp}
\end{figure}
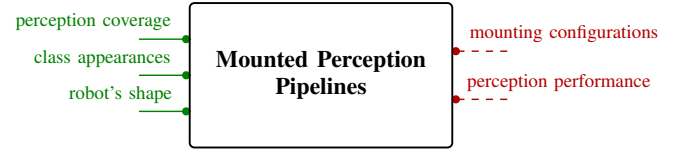

In~\cref{fig:mdpi_pp}, the \textbf{Perception Pipelines} \gls{abk:mdpi} outlines the implementation of available perception pipelines, encompassing both sensors and perception algorithms. It delivers \F{perception performance} as its functionality, demanding \R{cost} in \unit[]{CHF}, \R{mass} in \unit[]{kg}, \R{power} in \unit[]{W}, and \R{compute} as resources. The monotonic relationship indicates that enhancing \F{perception performance}, aiming for lower \gls{fnr} and \gls{fpr}, requires the employment of pricier, high-resolution sensors which generally consume more power and are heavier. 
Alternatively, it might involve leveraging more complex perception algorithms that demand substantial computational power.

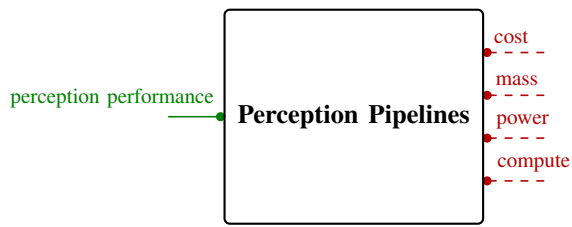
\begin{figure}[tb]
    \centering
    \scalebox{1.0}{\begin{tikzpicture}[DP, dp port sep=3pt]
    \node[dp={1}{4}] (ch) {Perception Pipelines};
    \draw[funconn, funame={perception performance},relfun=above, posfun=2.1] (ch_fun1){};
    \draw[runconn, runame={cost},relres=above, posres=0.4] (ch_res1){};
    \draw[runconn, runame={mass},relres=above, posres=0.5] (ch_res2){};
    \draw[runconn, runame={power},relres=above, posres=0.6] (ch_res3){};
    \draw[runconn, runame={compute},relres=above, posres=0.8] (ch_res4){};
\end{tikzpicture}}
    \caption{The Perception Pipelines \gls{abk:mdpi} which provides the \F{perception performance} and requires \R{cost} in \unit[]{CHF}, \R{mass} in \unit[]{kg}, \R{power} in \unit[]{W} and \R{compute}.
    }
    \label{fig:mdpi_pp}
\end{figure}

\subsection{Solving the Sensor Selection and Placement Set Cover Problem}\label{sec:solve:ssp}
The nature of \cref{pr:sensor_selection} closely resembles the weighted set cover problem~\cite{vazirani2001approximation}, since it also tries to cover a given set by a collection of subsets while minimizing a cost function. \edited{The weighted set cover problem is NP-complete. There exist approximations, such as greedy algorithms or \gls{ilp}. In addressing \cref{pr:sensor_selection}, we choose the \gls{ilp} relaxation of the set cover problem~\cite{vazirani2001approximation}. To formulate the \cref{pr:sensor_selection} as a weighted set cover problem, we need to make certain approximations.} This is necessary because both the task perception requirements, denoted as $\sensreq(\agent,\task)$, and the coverage of mounted perception pipelines for different classes, denoted as $\vspccoveragecat$, are infinite sets. In the next paragraphs we show how we formulate the sensor selection and placement problem as a weighted set cover problem.

\noindent \paragraph*{Class configurations in $\setwo$}
The first approximation involves constraining all class configurations in both $\sensreq(\agent,\task)$ and $\vspccoveragecat$ to exist within $\setwo$. Specifically, each class configuration is now defined as a tuple consisting of position in Cartesian coordinates and the relative orientation $\theta$ with respect to the robot frame, denoted as $\config{i}^{\robot} = \tup{\xpos{}, \ypos{}, \theta}$. As these class configurations are now geometric in nature and reside in $\setwo$, the problem closely resembles the \emph{polygon covering} problem~\cite{culberson1994covering}, which is a specific case of the set cover problem. In the weighted polygon covering problem, the objective is to cover a target polygon using a set of provided polygons, each associated with a specific cost. This problem permits overlapping among the polygons. However, the class configurations are represented in three-dimensional space ($\setwo$) and are essentially volumes rather than polygons. Therefore, we need a method to reduce the dimensionality of these configurations.

\noindent \paragraph*{From class configurations to polygons}
Given the orientation constraint $-\pi \leq \theta \leq \pi$, the class configurations are sorted into $\theta$-intervals, such as  $\{[-\pi, -\pi+\Delta\theta), [-\pi+\Delta\theta, -\pi+2\cdot\Delta\theta) \ldots [\pi-\Delta\theta, \pi)\}$. The subsequent step involves transforming the position coordinates of the class configuration within each $\theta$-interval into a set of polygons. Here, polygons represent surfaces in $\reals^2$ with location considerations. This set of polygons is termed a \emph{multi-polygon}, where the polygons in the set are not necessarily contiguous. As a result, a set of multi-polygons is generated, with each element corresponding to a distinct $\theta$-interval. Although various methods can be devised for this transformation, we stick to a worst-case analysis approach for consistency. The detailed description of this process is beyond the scope of this paper. The resulting set of multi-polygons is denoted as \emph{compressed class configurations}.

\begin{definition}[Compress]
$\compress$ is a mapping that generates a set of multi-polygons $\multipolygon$ from a set of class configurations $\edited{\confspacen{i}^{\robot}}$
and $T$ number of class configurations $\theta$-intervals.
\begin{equation*}
\begin{aligned}
    \compress &\colon \powerset{\edited{\confspacen{i}^{\robot}}} \to \prod_{j \in \{1, \ldots, T\}} \powerset{\reals^2}, \\
\end{aligned}
\end{equation*}
where $T \in \naturaln^{+}$.
\end{definition}
Applying $\compress$ to $\sensreq(\agent,\task)$ and $\vspccoveragecat$ results in $\overline{\sensreq}(\agent,\task)$ and $\overline{\vspccoveragecat}$, where all sets of class configurations are now expressed as compressed class configurations. Specifically, when $\compress$ is applied for each environment in $\sensreq$, nested sets are obtained for each environment, object class, and $\theta$-interval.

\noindent \paragraph*{Discretization}
To formulate the weighted set cover problem with the obtained polygons, we need to discretize $\overline{\sensreq}(\agent,\task)$.
A straightforward approach is to create a grid with cells, which can be made uniform as shown in~\cref{fig:unit:grid}, e.g., 1 by 1 meters in size. We use a polar grid with logarithmic scaling for radial distance as illustrated in~\cref{fig:polar:grid}, providing higher granularity for smaller distances and aligning more with sensor perception dynamics which scan the environment radially. 
\begin{figure}[tb] 
    \centering
    \begin{subfigure}[b]{0.24\textwidth}
        \centering
        \includegraphics[width=\textwidth]{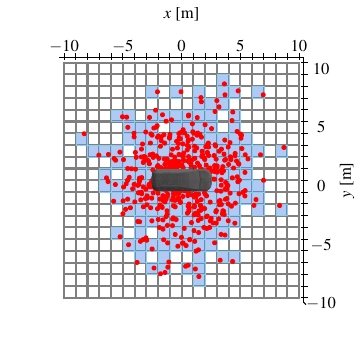}
        \caption{Unit grid cells.}
        \label{fig:unit:grid}
    \end{subfigure}
    \begin{subfigure}[b]{0.24\textwidth}
        \centering
        \includegraphics[width=\textwidth]{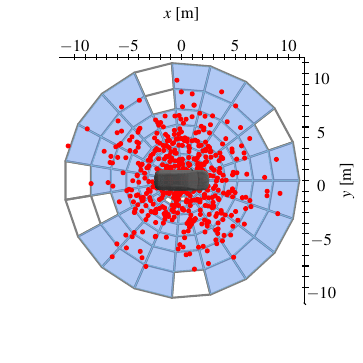}
        \caption{Polar grid with logarithmic scale.}
        \label{fig:polar:grid}
    \end{subfigure}
    \caption{The left image shows a uniform grid, while the right reports a polar grid with logarithmically scaled radial distances. Red dots, representing Gaussian synthetic class configurations, intersect with blue shaded cells.}
    \label{fig:polygon:cell:grid}
\end{figure}
This means for each multi-polygon in $\overline{\sensreq}(\agent,\task)$, which corresponds to a certain environment, a certain class and a certain $\theta$-interval, we obtain a discretized multi-polygon which is again a multi-polygon. These discretized perception requirements are represented as $\widehat{\sensreq}(\agent,\task)$. An example of discretized perception requirements of an \gls{av} driving in an urban environment, for a car class object for two different orientations is shown in~\cref{fig:comp:disc:sensreq}. 

\begin{figure}[t] 
    \centering
    \begin{subfigure}[b]{0.24\textwidth}
        \centering
        \includegraphics[width=\textwidth]{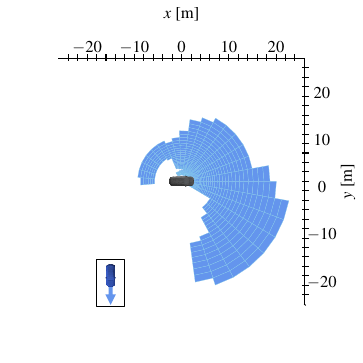}
        \caption{Car class for $\theta$-interval $[-100^{\circ}, -90^{\circ})$.}
    \end{subfigure}
    \begin{subfigure}[b]{0.24\textwidth}
        \centering
        \includegraphics[width=\textwidth]{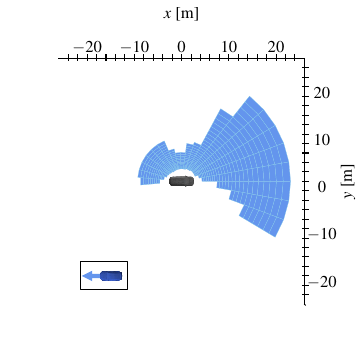}
        \caption{Car class for $\theta$-interval $[-180^{\circ}, -170^{\circ})$.}
    \end{subfigure}
    \caption{Example of discretized and compressed perception requirements of a car class (blue) for different orientations 
    relative to the ego vehicle (grey car).}
    \label{fig:comp:disc:sensreq}
\end{figure}

\begin{figure}[h]
    \centering
    \begin{subfigure}[h]{0.15\textwidth}
        \centering
        \includegraphics[width=\textwidth]{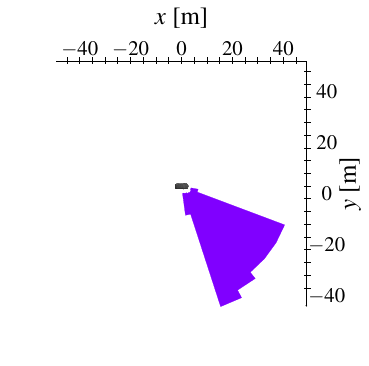}
    \end{subfigure}
    \begin{subfigure}[h]{0.15\textwidth}
        \centering
        \includegraphics[width=\textwidth]{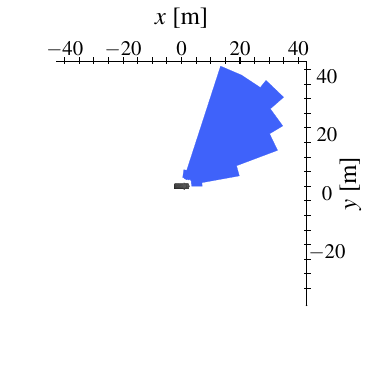}
    \end{subfigure}
   \begin{subfigure}[h]{0.15\textwidth}
        \centering
        \includegraphics[width=\textwidth]{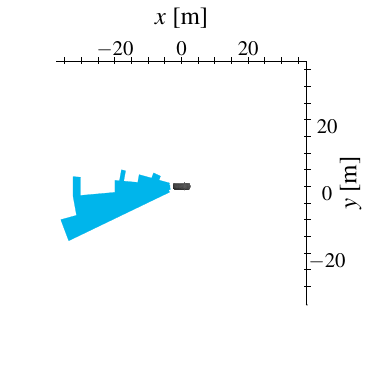}
    \end{subfigure}
    \begin{subfigure}[h]{0.15\textwidth}
        \centering
        \includegraphics[width=\textwidth]{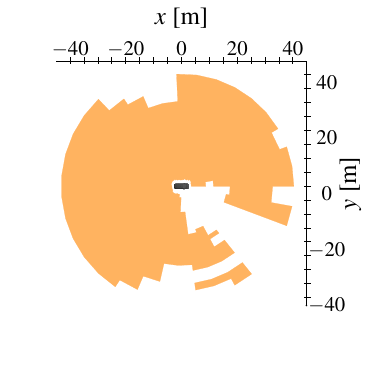}
    \end{subfigure}
    \begin{subfigure}[h]{0.15\textwidth}
        \centering
        \includegraphics[width=\textwidth]{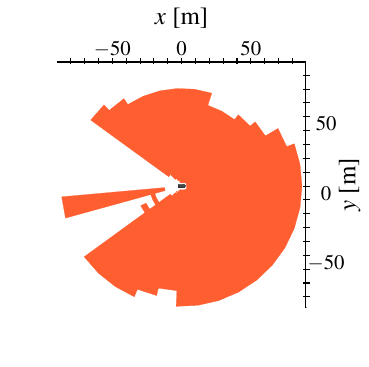}
    \end{subfigure}
    \begin{subfigure}[h]{0.15\textwidth}
        \centering
        \includegraphics[width=\textwidth]{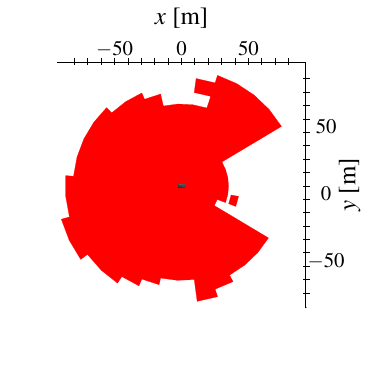}
    \end{subfigure}
    \caption{Examples of compressed mounted perception pipeline class coverage $\vspccoverage$ corresponding to the setting in \cref{fig:comp:disc:sensreq} with $\theta$-interval $[-100^{\circ}, -90^{\circ})$. Each plot corresponds to a unique mounted perception pipeline.}
    \label{fig:com_sens_perf}
\end{figure}
In \cref{fig:com_sens_perf}, examples of compressed $\vspccoverage$ are depicted for the class and robot specified in \cref{fig:comp:disc:sensreq} with $\theta$-interval $[-100^{\circ}, -90^{\circ})$. These polygons aim to cover the upper polygon shown in \cref{fig:comp:disc:sensreq}. Each polygon is associated with certain costs, and the objective is to minimize the total cost.

\edited{With all the components in place, we can formulate the problem in~\cref{pr:sensor_selection} as an \gls{ilp}.} Once again, we use the binary vector $x$, where each element $x_i \in \{0,1\}$ and represents a decision variable. The variable $x_i = 1$ if and only if the mounted perception pipeline $\virtsensorp_i$ is chosen.

\noindent \paragraph*{Cost Functions}
\edited{We extend the \gls{ilp} to a multi-weighted problem formulation by incorporating $W$ cost functions denoted as $c$.} Each cost function $c_j$ associates a mounted perception pipeline $\virtsensorp$ with normalized costs, where $0 \leq c_j(\virtsensorp) \leq 1$. These costs may represent various factors such as the price, mass, or power consumption of the sensor. Additionally, each cost $c_j$ is scaled by a cost weight $w_j$, ensuring that the sum of all weights equals one, i.e., $\sum_{j=1}^W w_j=1$. The cost function weights are generated by the Halton sequence~\cite{halton1960efficiency, owen2017randomized}, a generalized form of the one-dimensional Van der Corput sequence~\cite{van1935verteilungsfunktionen, lavalle2006},
where we only take sampled points which sum up to one.
This process involves generating a series of weights with low discrepancy and addressing the optimization problem for each weight set. Through this incremental search, we explore the Pareto front of the multi-objective optimization problem with a linear weighted sum~\cite{stanimirovic2011linear, marler2010weighted}.

\noindent \paragraph*{Constraints}
The initial constraint within the \gls{ilp} ensures the coverage of each element in $\widehat{\sensreq}(\agent,\task)$. This implies that for every polygon within $\widehat{\sensreq}(\agent,\task)$, we must ascertain which $\virtsensorp$ is providing coverage. To achieve this, we extract the corresponding multi-polygon from $\virtsensorp$ that shares the same object class, environment, and $\theta$-interval. By ``cover'' we mean that a multi-polygon $\multipolygon_i$ covers another polygon $\multipolygon_j$ if $\multipolygon_j \subseteq \multipolygon_i$. Consequently, a binary matrix $A$ is populated, possessing dimensions $N \times \lvsp$, where $N$ represents the number of polygons in $\widehat{\sensreq}(\agent,\task)$ and $\lvsp$ denotes the number of mounted perception pipelines. The entry in the $n$-th row and $l$-th column of matrix $A$ is denoted as $a_{nl}$, with $a_{nl} = 1$ indicating that polygon $n$ is covered by $\virtsensorp_l$, and $a_{nl} = 0$ otherwise. Subsequently, another binary matrix, denoted as $F$, is constructed with dimensions $D \times \lvsp$, where $D$ corresponds to the number of mounting positions. Matrix $F$ indicates which mounted perception pipelines share the same mounting positions. In a given row of $F$, all entries set to 1 signify mounted perception pipelines with identical mounting positions.
Finally we can find the mounted perception pipelines which cover $\widehat{\sensreq}(\agent,\task)$, while minimizing certain cost $c_j$ by solving the \gls{ilp} in \cref{eq:ilpmultiobj}.

\begin{equation}
\label{eq:ilpmultiobj}
\begin{aligned}
\min \quad & \sum_{i = 1}^{L} \sum_{j=1}^{W} w_jc_j(\virtsensorp_{i})\cdot x_i\\
\textrm{s.t.} \quad 
& A \cdot x \geq \begin{bmatrix}
1 & \ldots & 1
\end{bmatrix}^{\text{T}} \quad,    \\
& F \cdot x \leq \begin{bmatrix}
1 & \ldots & 1
\end{bmatrix}^{\text{T}} \quad,    \\
  & x_i \leq 1 \quad \forall i \in \{1,\ldots,L\},    \\
  & x_i \in \mathbb{N}_{0} \quad \forall i \in \{1,\ldots,L\},\\
  & \sum_{j=1}^W w_j=1, \ w_j\geq 0,\ j=1,\ldots, W.
\end{aligned}
\end{equation}

\section{Design of experiments and results}
\label{sec:results-gen}
In this section, we report a case study on designing an \gls{av} for an urban driving task. We outline the experimental design in \cref{sec:experiments}, present the results in \cref{sec:results}, and conclude with a discussion of the findings in \cref{sec:discussion}.

\begin{table}[tb]
\centering
\caption{Variables, options and sources for the AV co-design problem.}
\label{tab:catalogue}
\begin{tabular}{ m{2cm} m{4cm} c  } 
 $\mathbf{Variable}$ & $\mathbf{Option}$ & $\mathbf{Source}$ \\ 
 \hline
 Vehicle bodies & Smart Fortwo, Chrysler Pacifica, Mercedes-Benz C63 & \cite{cars} \\ 
 \hline
 Lidars & Velodyne: Alpha Prime, HDL 64, HDL 32; 
 OS2: 128, 64  & \cite{Velodyne, Ouster} \\ 
 \hline
 Cameras & Basler: acA1600-gm, acA1500-um, acA7-gm; FLIR: Point Grey & \cite{basler, flir}  \\ 
 \hline
 Object Detection Models & FCOS3D, Pointpillars  & \cite{mmdet3d2020, wang2021fcos3d, Lang2020PointPillars:Clouds} \\ 
 \hline
 Mounting Orientation Yaw & $-135.0^{\circ}$, $-90.0^{\circ}$, $-45.0^{\circ}$, $0.0^{\circ}$, $45.0^{\circ}$, $90.0^{\circ}$, $135.0^{\circ}$, $180.0^{\circ}$,   & [-]\\ 
 \hline
 Mounting Orientation Pitch & $0^{\circ}$  & [-] \\ 
 \hline
 Motion Planner & Lattice panner with A*, RRT, RRT*  & \cite{sucan2012the-open-motion-planning-library, althoff2017commonroad} \\ 
 \hline
  Computer & Jetson Nano, Orin Nano, Xavier NX, Orin NX, AGX Orin 64GB, AGX Orin 32GB, AGX Xavier 32GB & \cite{nvidia} \\ 
 \hline
\end{tabular}
\end{table}

\subsection{Design of experiments}\label{sec:experiments}
\noindent \paragraph*{Catalogs}
The components available for design are reported in the catalog in \cref{tab:catalogue}. 
The 3D meshes of the car bodies are sourced from TurboSquid~\cite{turbosquid}. 
Real sensor measurements from the nuScenes open-source dataset~\cite{caesar2020nuscenes}, along with state-of-the-art 3D object detection algorithms from the MMDetection3D library~\cite{mmdet3d2020}, are used to determine the FNRs and the FPRs for different object classes. 
The mounting position options are visualized in~\cref{fig:mp}. 
We utilize motion planners from the OMPL~\cite{sucan2012the-open-motion-planning-library} and CommonRoad~\cite{althoff2017commonroad} libraries, including RRT, RRT*, and a lattice planner enhanced with motion primitives and an A* search algorithm. Specifically, for the RRT* planner from the OMPL library, which is classified as a ``geometric'' planner, we employ Dubins paths~\cite{dubins1957curves, paden2016motionplanningsurvey} to connect sampled configurations considering the system's geometric and kinematic constraints. This approach enables the computation of paths that can be tracked by low level controllers as depicted in \cref{fig:rrtstar}. In contrast, the RRT planner corresponds to ``control-based'' implementations in the OMPL library which directly computes trajectories and control inputs, tailored for systems subject to differential constraints and incorporating a steering function. The three different motion planners operate with \unit[1]{s} and \unit[2]{s} planning horizons, which define the time into the future for which a planner calculates its trajectory.

\begin{remark}
    We acknowledge that the catalog may not represent the latest advances in motion planning and perception. The designer is free to create their own catalog.
\end{remark}

\begin{figure}[tb]
    \centering
    \begin{subfigure}[b]{0.24\textwidth}
        \centering
\includegraphics[width=\textwidth]{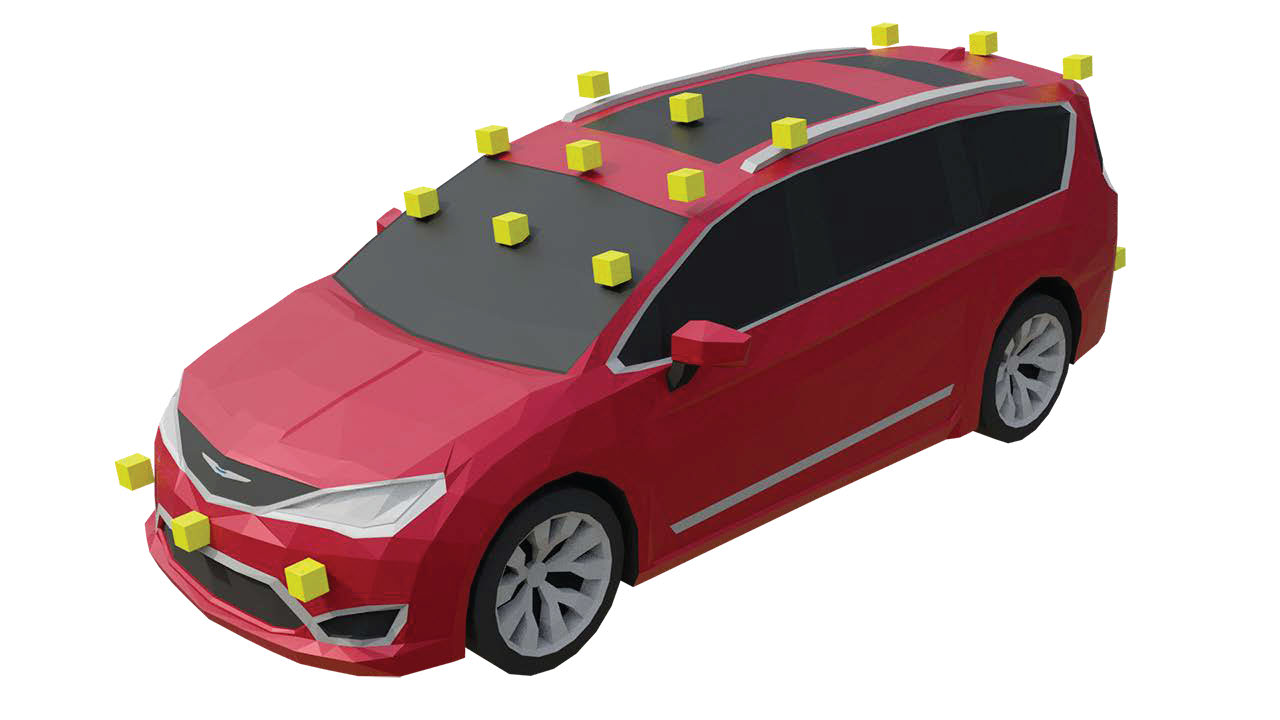}
    \end{subfigure}
    \begin{subfigure}[b]{0.24\textwidth}
        \centering
        \includegraphics[ width=\textwidth]{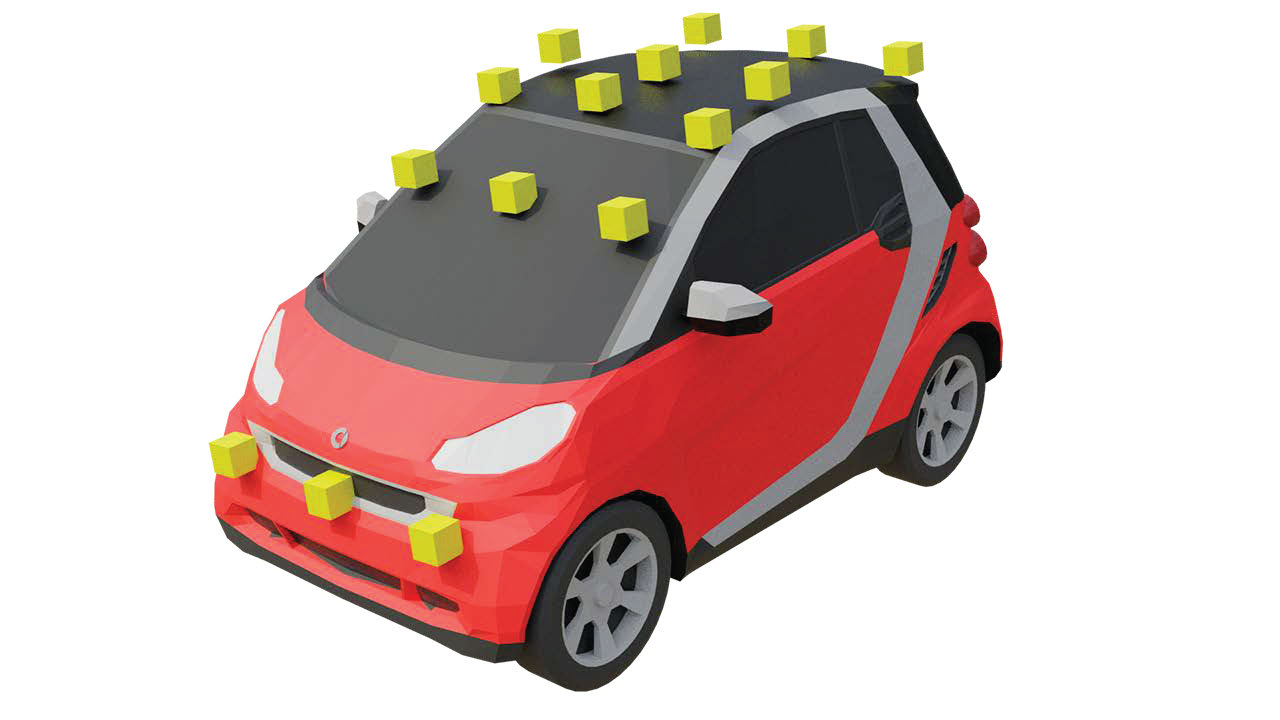}
    \end{subfigure}
    \caption{Exemplary mounting positions for two different vehicles.}
    \label{fig:mp}
\end{figure}

\noindent \paragraph*{Task}
The urban driving task contains 205 driving scenarios from the CommonRoad library~\cite{Maierhofer2021}, featuring five different vehicle classes. 
Each vehicle's configuration is defined \edited{by~$\config{}^{\workspace} \in \setwo$} and the vehicle dynamics are based on the bicycle model. The car prior configuration,~$\prior{\mathrm{car}}$, accounts for all possible car positions on the road, aligning with the driving direction. The objective is for the autonomous vehicle to reach a designated area. We analyze scenarios with two nominal speeds: $\unitfrac[30]{km}{h}$ and~$\unitfrac[50]{km}{h}$, under dry and rainy weather conditions during daylight and night.
We performed experiments with fewer scenarios to examine how task complexity affects the AV design. Additionally, the experiments varied the task prior assuming no cars can approach the AV from left and rear. 

\edited{The simulations, benchmarking, and optimizations described in~\cref{rem:percreq,,rem:bench,,rem:simselfocclusion} are highly parallelizable, enabling continuous computation of solutions. We report average computing times with \unit[90]{\%} confidence intervals from tasks run on the ETH Zürich Euler cluster, noting that further optimization of the implentation is possible with different resource allocations. Simulations for gathering occupancy queries took~\unit[$10 \pm 8$]{min} on a \unit[3.0]{GHz} CPU with \unit[1.4]{GB} of RAM. Generating perception requirements as described in~\cref{rem:percreq} averaged~\unit[$42_{-36}^{+60}$]{min} per simulation on a \unit[3.0]{GHz} CPU with \unit[6.5]{GB} of RAM. Simulations and inference in~\cref{rem:simselfocclusion} for the~$\vspccoverage$ map implementation took~\unit[$10 \pm 9$]{min}, considering a perception pipeline, mounting position, robot body, object class, environment, grid-based class configurations (\unit[$\sim 5000$]{}), and one $\theta$-interval, using a \unit[3.0]{GHz} CPU with \unit[5.5]{GB} of RAM. The sensor selection and placement \gls{ilp} in~\cref{eq:ilpmultiobj} was solved with the Gurobi solver~\cite{gurobi}, comprising 667 decision variables and around \unit[250,000]{} constraints. We optimized for four costs: sensor price, mass, power consumption, and object detection algorithm FLOPS. Solving took \unit[75]{s} on a \unit[2.3]{GHz} Intel Core i7 with \unit[16]{GB} of RAM. About 3000 weight sets were sampled to populate the Pareto front, yielding 3000 individual optimization problems per robot body, agent, and task combination. Outer optimization using the ZüperMind solver~\cite{censi2024} required \unit[25]{min} on the same hardware.}

\edited{
\begin{remark}
    Given the scope of this paper and its focus on overall AV design, we emphasize the results of the outer optimization and do not specifically compare the inner optimization results, despite these being a byproduct of the outer optimization. Comparison for the outer optimization is challenging due to the absence of automated design processes for complete AVs in the state-of-the-art. For the inner optimization, which includes perception pipeline selection and sensor placement, comparable methods exist, as noted in~\cref{sec:litrev}. However, none are both task-specific, capable of integrating with other design components, and bridging perception with decision-making as we achieve through the perception requirements. Additionally, learning-based techniques~\cite{dey2023machine} are difficult to compare due to variations in datasets and learning pipeline configurations.
\end{remark}
}

\subsection{Results}\label{sec:results}
We solve the presented co-design problem by fixing selected scenarios, and showing the corresponding Pareto fronts of minimal resources, as illustrated in~\cref{fig:codesign:power:scenarios:pareto:imp,,fig:codesign:mass:scenarios:pareto:imp,,fig:codesign:computation:scenarios:pareto:imp}. The figures show that more resources are required for more complex tasks. Each task's complexity is represented by the number of scenarios, with simpler tasks as subsets of more complex ones. The upper figures compare price (CHF) on the $x$-axis against power consumption (W) in~\cref{fig:codesign:power:scenarios:pareto:imp}, mass (kg) in~\cref{fig:codesign:mass:scenarios:pareto:imp}, and computation (GFLOPS) in~\cref{fig:codesign:computation:scenarios:pareto:imp} on the $y$-axis. 
Red dots indicate optimal solutions within each task, with the surrounding red area highlighting the feasible resource range (i.e., the upper sets of resources). 
Annotations with capital letters point to the implementations, detailed in the lower sub-figures.
We show the top view of the selected vehicle, with cameras marked with dots and lidars with squares to illustrate their mounting positions. Camera orientations are further highlighted by small triangles indicating the initial \gls{fov} and yaw direction, providing an indication of their potential coverage area. Each perception pipeline is color-coded. In addition, the graphics show the selected motion planner and computing unit.
\begin{figure}[tb] 
    \centering
    \begin{subfigure}[b]{0.33\textwidth}
        \centering
        \includegraphics[width=\textwidth]{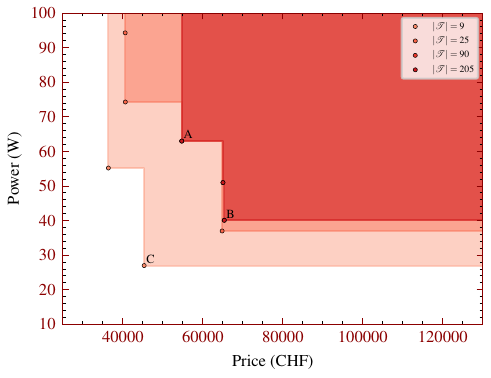}
    \end{subfigure}
    \\
    \begin{subfigure}[b]{0.15\textwidth}
        \centering
        \includegraphics[width=\textwidth]{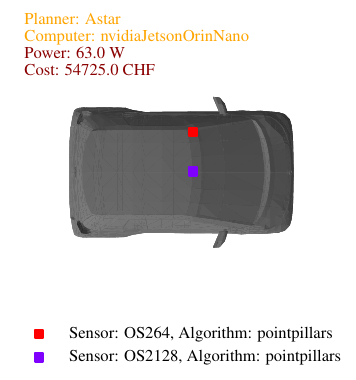}
        \captionsetup{labelformat=empty}
        \caption{A}
    \end{subfigure}
    \begin{subfigure}[b]{0.15\textwidth}
        \centering
        \includegraphics[width=\textwidth]{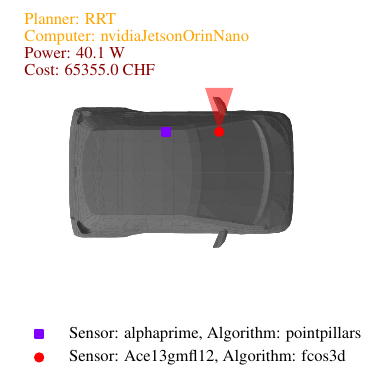}
        \captionsetup{labelformat=empty}
        \caption{B}
    \end{subfigure}
    \begin{subfigure}[b]{0.15\textwidth}
        \centering
        \includegraphics[width=\textwidth]{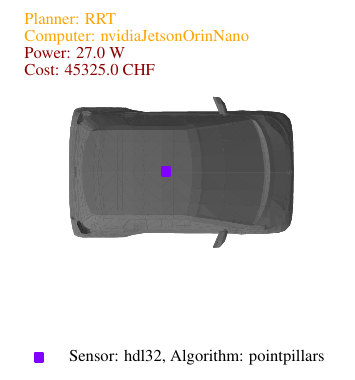}
        \captionsetup{labelformat=empty}
        \caption{C}
    \end{subfigure}
    \caption{Pareto front of price and power across tasks, 
    where tasks with more scenarios demand more resources and encompass those with fewer scenarios. 
    Implementations for point A, B, and C are visualized vertically. 
    B and C indicate the least power usage for the most and least complex tasks, respectively, while A shows the minimum price for the most complex task.} \label{fig:codesign:power:scenarios:pareto:imp}
\end{figure}
\begin{figure}[h!] 
    \centering
    \begin{subfigure}[b]{0.33\textwidth}
        \centering
        \includegraphics[width=\textwidth]{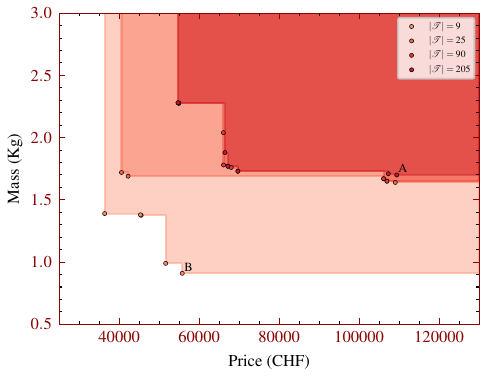}
    \end{subfigure}
    \\
    \begin{subfigure}[b]{0.15\textwidth}
        \centering
        \includegraphics[width=\textwidth]{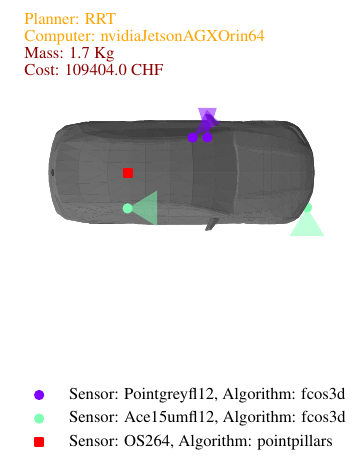}
        \captionsetup{labelformat=empty}
        \caption{A}
    \end{subfigure}
    \begin{subfigure}[b]{0.15\textwidth}
        \centering
        \includegraphics[width=\textwidth]{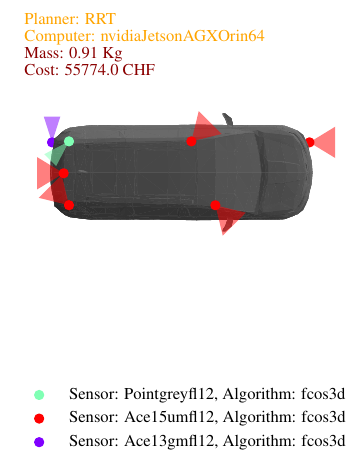}
        \captionsetup{labelformat=empty}
        \caption{B}
    \end{subfigure}
    \caption{Pareto front of price and mass across tasks, 
    where tasks with more scenarios demand more resources and encompass those with fewer scenarios. 
    Implementations for points A and B are visualized vertically. 
    A and B indicate the lowest mass for the most and least complex tasks, respectively.}
    \label{fig:codesign:mass:scenarios:pareto:imp}
\end{figure}

\begin{figure}[h!] 
    \centering
    \begin{subfigure}[b]{0.33\textwidth}
        \centering
        \includegraphics[width=\textwidth]{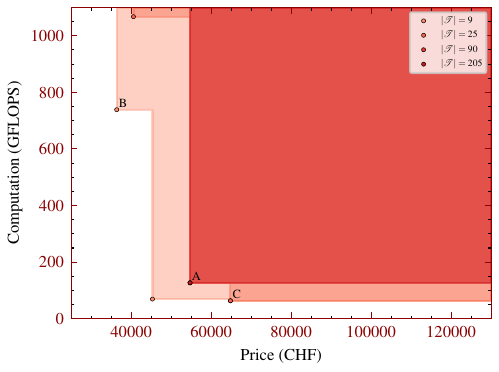}
    \end{subfigure}
    \\
    \begin{subfigure}[b]{0.15\textwidth}
        \centering
        \includegraphics[width=\textwidth]{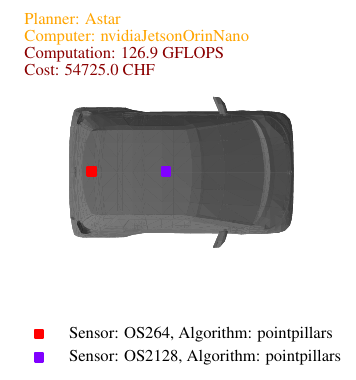}
        \captionsetup{labelformat=empty}
        \caption{A}
    \end{subfigure}
    \begin{subfigure}[b]{0.15\textwidth}
        \centering
        \includegraphics[width=\textwidth]{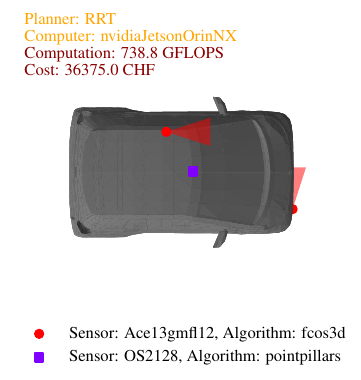}
        \captionsetup{labelformat=empty}
        \caption{B}
    \end{subfigure}
    \begin{subfigure}[b]{0.15\textwidth}
        \centering
        \includegraphics[width=\textwidth]{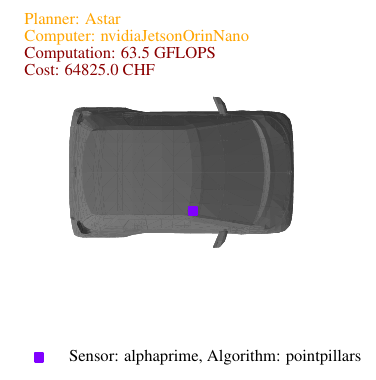}
        \captionsetup{labelformat=empty}
        \caption{C}
    \end{subfigure}
    \caption{Pareto front of price and computation across tasks, where more scenarios demand more resources . 
    Implementations plots for point A, B, and C are visualized vertically. 
    A and C indicate the least computation usage for the most and least complex tasks, respectively, while B shows the minimum price for the least complex task.}
\label{fig:codesign:computation:scenarios:pareto:imp}
\end{figure}
The impact of more resource requirements for the AV design by increasing the nominal speed from $\unitfrac[30]{km}{h}$ to $\unitfrac[50]{km}{h}$ within identical task scenarios is visualized in~\cref{fig:codesign:power:velocity:pareto:imp,,fig:codesign:mass:velocity:pareto:imp,,fig:codesign:computation:velocity:pareto:imp}, where we again show the Pareto fronts as well as the corresponding implementations for the different resources.
\begin{figure}[h!] 
    \centering
    \begin{subfigure}[b]{0.33\textwidth}
        \centering
        \includegraphics[width=\textwidth]{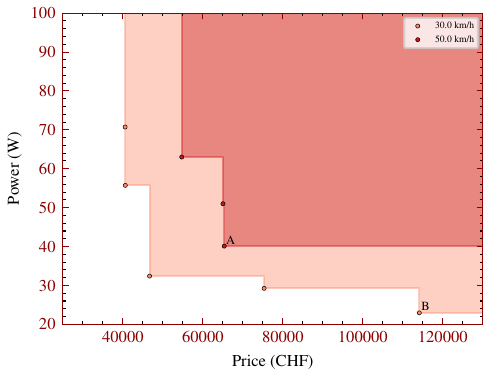}
    \end{subfigure}
    \\
    \begin{subfigure}[b]{0.15\textwidth}
        \centering
        \includegraphics[width=\textwidth]{figures/co_design/power/implementation_task_urban_car_cr_50.0_dry_day_power_40.1_65355.0_289.pdf}
        \captionsetup{labelformat=empty}
        \caption{A}
    \end{subfigure}
    \begin{subfigure}[b]{0.15\textwidth}
        \centering
        \includegraphics[width=\textwidth]{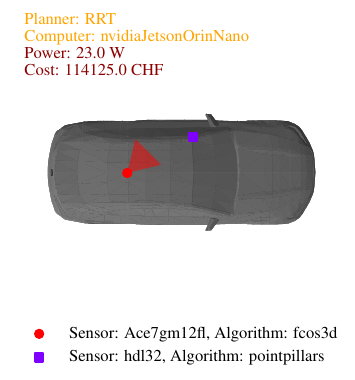}
        \captionsetup{labelformat=empty}
        \caption{B}
    \end{subfigure}
    \caption{Pareto front of price and power usage across task velocities, 
    where higher nominal velocities for the same set of scenarios require more resources. 
    Implementations for points A and B are visualized vertically. 
    A and B indicate lowest power usage for $\unitfrac[50]{km}{h}$ and $\unitfrac[30]{km}{h}$ nominal velocities, respectively.}
    \label{fig:codesign:power:velocity:pareto:imp}
\end{figure}
\begin{figure}[h!] 
    \centering
    \begin{subfigure}[b]{0.33\textwidth}
        \centering
        \includegraphics[width=\textwidth]{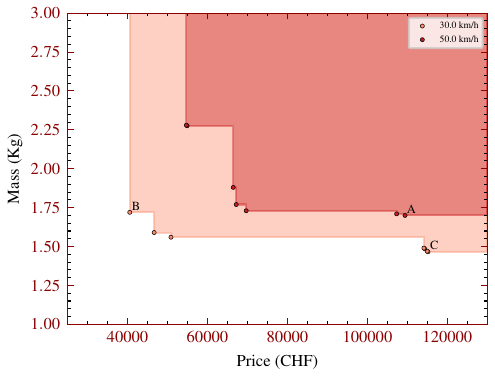}
    \end{subfigure}
    \\
    \begin{subfigure}[b]{0.15\textwidth}
        \centering
        \includegraphics[width=\textwidth]{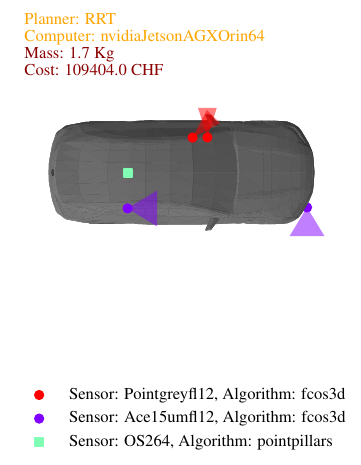}
        \captionsetup{labelformat=empty}
        \caption{A}
    \end{subfigure}
    \begin{subfigure}[b]{0.15\textwidth}
        \centering
        \includegraphics[width=\textwidth]{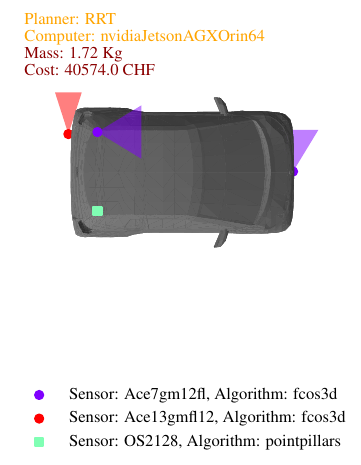}
        \captionsetup{labelformat=empty}
        \caption{B}
    \end{subfigure}
    \begin{subfigure}[b]{0.15\textwidth}
        \centering
        \includegraphics[width=\textwidth]{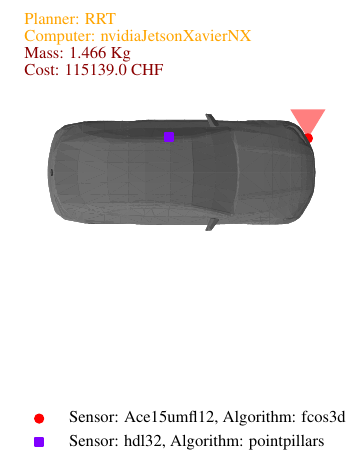}
        \captionsetup{labelformat=empty}
        \caption{C}
    \end{subfigure}
    \caption{Pareto front of price and mass across task velocities, 
    where higher nominal velocities for the same set of scenarios require more resourses. 
    Implementations for points A, B and C are visualized vertically. 
    A and C indicate lowest mass for $\unitfrac[50]{km}{h}$ and \unit[30]{kmh} nominal velocities, respectively. B indicates lowest price for $\unitfrac[30]{km}{h}$ nominal speed.}
    \label{fig:codesign:mass:velocity:pareto:imp}
\end{figure}
\begin{figure}[h!] 
    \centering
    \begin{subfigure}[b]{0.33\textwidth}
        \centering
        \includegraphics[width=\textwidth]{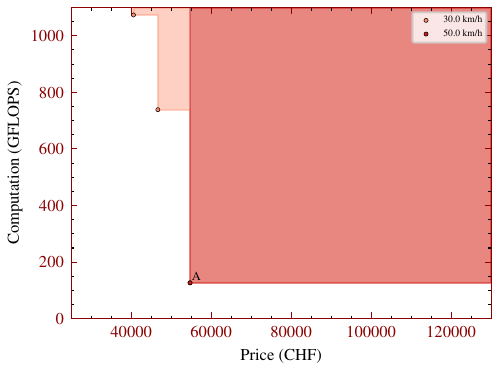}
    \end{subfigure}
    \\
    \begin{subfigure}[b]{0.15\textwidth}
        \centering
        \includegraphics[width=\textwidth]{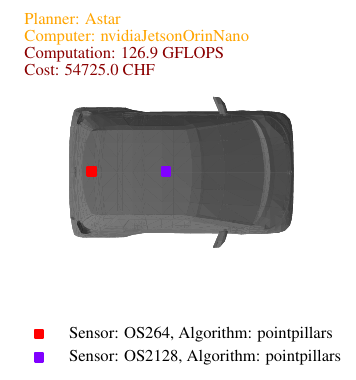}
        \captionsetup{labelformat=empty}
        \caption{A}
    \end{subfigure}
    \caption{Pareto front of price and computation across task velocities, 
    where higher velocities for the same set of scenarios require more resources. 
    Implementations for marked point A are visualized vertically. 
    A indicate lowest computation for $\unitfrac[50]{km}{h}$ and \unit[30]{kmh}.}
    \label{fig:codesign:computation:velocity:pareto:imp}
\end{figure}
\cref{fig:codesign:power:prior:pareto:imp,,fig:codesign:mass:prior:pareto:imp,,fig:codesign:computation:prior:pareto:imp} demonstrate how restricting car 
configurations prior within identical task scenarios leads to lower resource requirements, where 
the Pareto fronts, along with implementations are illustrated.
\begin{figure}[h!] 
    \centering
    \begin{subfigure}[b]{0.33\textwidth}
        \centering
        \includegraphics[width=\textwidth]{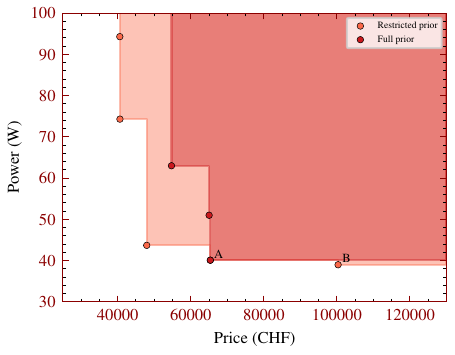}
    \end{subfigure}
    \\
    \begin{subfigure}[b]{0.15\textwidth}
        \centering
        \includegraphics[width=\textwidth]{figures/co_design/power/implementation_task_urban_car_cr_50.0_dry_day_power_40.1_65355.0_289.pdf}
        \captionsetup{labelformat=empty}
        \caption{A}
    \end{subfigure}
    \begin{subfigure}[b]{0.15\textwidth}
        \centering
        \includegraphics[width=\textwidth]{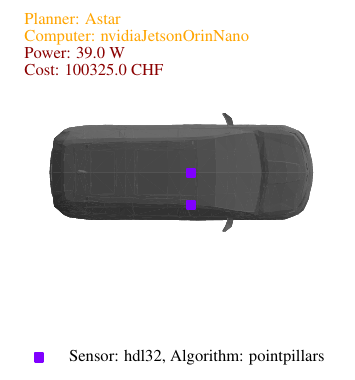}
        \captionsetup{labelformat=empty}
        \caption{B}
    \end{subfigure}
    \caption{Pareto front of price and power usage across priors, 
    where priors with more class configurations require more resources. 
    Implementations for points A and B are visualized vertically. 
    A and B indicate the lowest power usage for the least and most restricted prior, respectively.}
    \label{fig:codesign:power:prior:pareto:imp}
\end{figure}
\begin{figure}[h!] 
    \centering
    \begin{subfigure}[b]{0.33\textwidth}
        \centering
        \includegraphics[width=\textwidth]{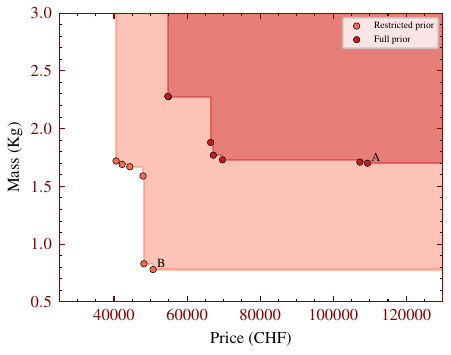}
    \end{subfigure}
    \\
    \begin{subfigure}[b]{0.15\textwidth}
        \centering
        \includegraphics[width=\textwidth]{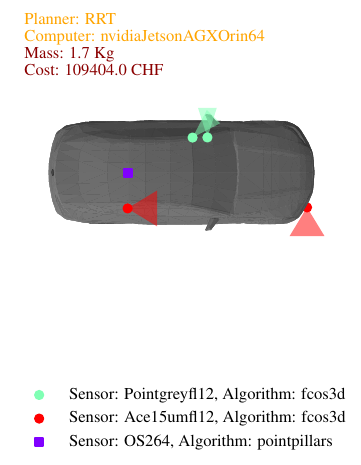}
        \captionsetup{labelformat=empty}
        \caption{A}
    \end{subfigure}
    \begin{subfigure}[b]{0.15\textwidth}
        \centering
        \includegraphics[width=\textwidth]{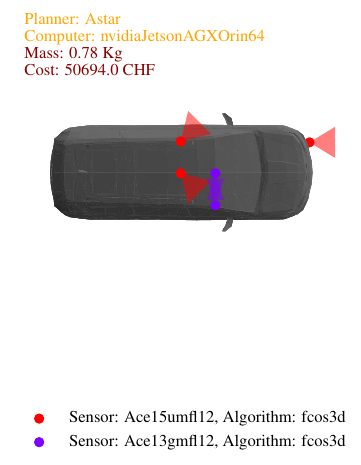}
        \captionsetup{labelformat=empty}
        \caption{B}
    \end{subfigure}
    \caption{Pareto front of price and mass across priors, 
    where priors with more class configurations require more resources. 
    Implementations for points A and B are visualized vertically. 
    A and B indicate the lowest mass for the least and most restricted prior, respectively.}
    \label{fig:codesign:mass:prior:pareto:imp}
\end{figure}
\begin{figure}[h!] 
    \centering
    \begin{subfigure}[b]{0.33\textwidth}
        \centering
        \includegraphics[width=\textwidth]{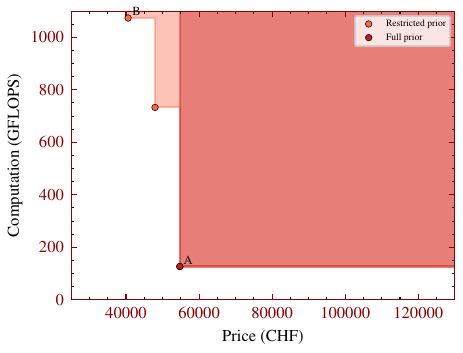}
    \end{subfigure}
    \\
    \begin{subfigure}[b]{0.15\textwidth}
        \centering
        \includegraphics[width=\textwidth]{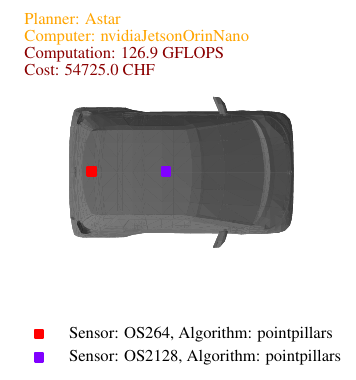}
        \captionsetup{labelformat=empty}
        \caption{A}
    \end{subfigure}
    \begin{subfigure}[b]{0.15\textwidth}
        \centering
        \includegraphics[width=\textwidth]{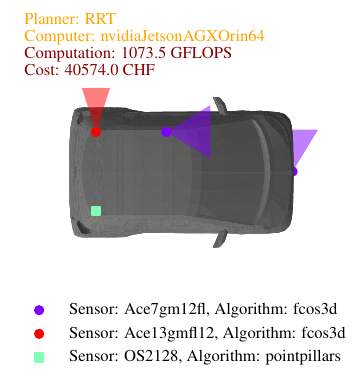}
        \captionsetup{labelformat=empty}
        \caption{B}
    \end{subfigure}
    \caption{Pareto front of price and computation across priors, 
    where priors with more class configurations require more resources. 
    Implementations for points A and B are visualized vertically. 
    A indicates the lowest computation for both priors (same implementation) and B indicates lowest price for the most restricted prior.}
    \label{fig:codesign:computation:prior:pareto:imp}
\end{figure}

In \cref{fig:phorizon_res_task} we show the influence of higher planning horizon leading to higher resource requirements on the selected sensors and perception algorithms by fixing the motion planner and the vehicle body. The figure compares the resources required - power, mass, price, and computation - for different tasks for planning horizons of one and two seconds. Each point represents the minimum resource solution for a given task and time horizon. In \cref{fig:planner_res_task}, we keep the vehicle body and planning horizon constant, but compare the resource trade-offs of using RRT* versus a lattice planner. This comparison aims to visualize the resource differences between motion planners, as expected from \cref{fig:planners}, and to highlight the impact of the planning strategy on the sensor selection and placement process.

\begin{figure}[h!] 
    \centering
    \begin{subfigure}[b]{0.22\textwidth}
        \centering
        \includegraphics[width=\textwidth]{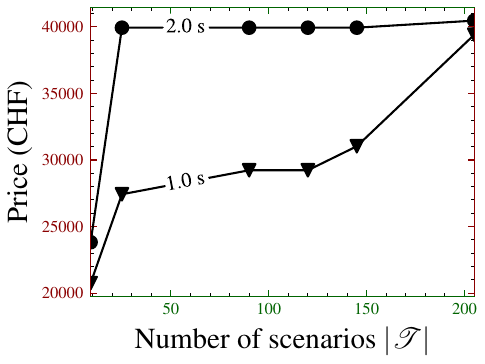}
        \caption{Price comparison.}
    \end{subfigure}
    \begin{subfigure}[b]{0.22\textwidth}
        \centering
        \includegraphics[width=\textwidth]{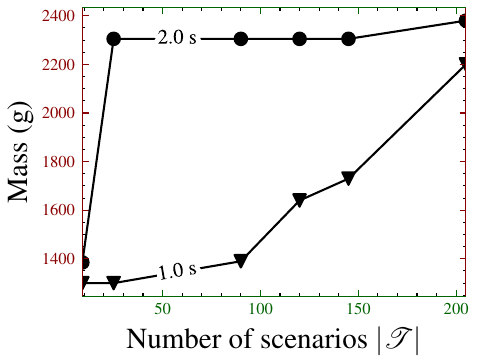}
        \caption{Mass comparison.}
    \end{subfigure}
    \begin{subfigure}[b]{0.22\textwidth}
        \centering
        \includegraphics[width=\textwidth]{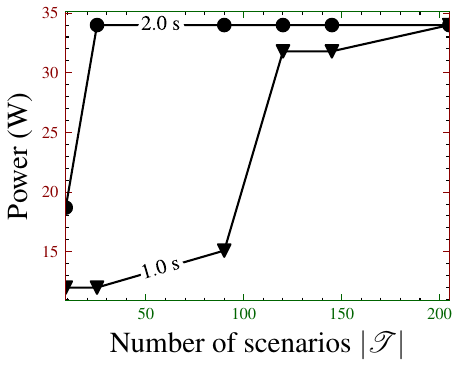}
        \caption{Power comparison.}
    \end{subfigure}
    \begin{subfigure}[b]{0.22\textwidth}
        \centering
        \includegraphics[width=\textwidth]{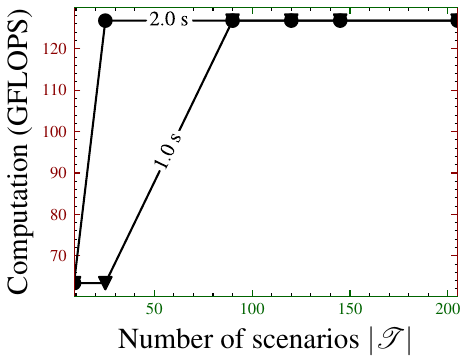}
        \caption{Computation comparison.}
    \end{subfigure}
    \caption{Higher planning horizons for the same planner and vehicle body require more resources for different tasks. Here we show the lattice planner with A* search and a hatchback vehicle body.}
    \label{fig:phorizon_res_task}
\end{figure}

\begin{figure}[h!] 
    \centering
    \begin{subfigure}[b]{0.22\textwidth}
        \centering
        \includegraphics[width=\textwidth]{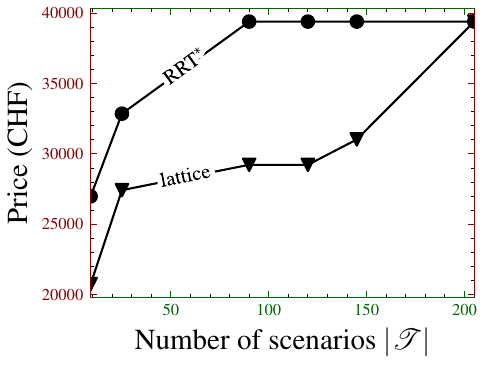}
        \caption{Price comparison.}
    \end{subfigure}
    \begin{subfigure}[b]{0.22\textwidth}
        \centering
        \includegraphics[width=\textwidth]{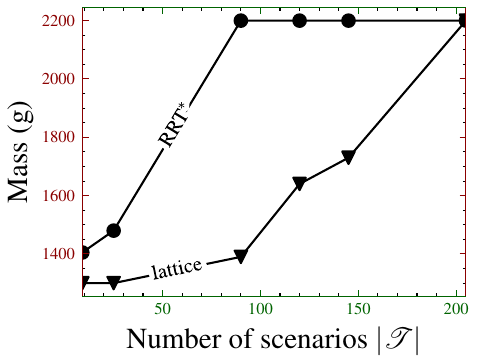}
        \caption{Mass comparison.}
    \end{subfigure}
    \begin{subfigure}[b]{0.22\textwidth}
        \centering
        \includegraphics[width=\textwidth]{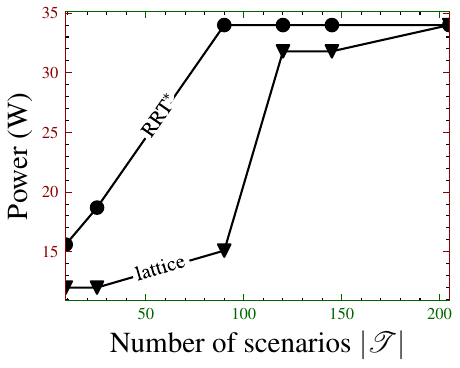}
        \caption{Power comparison.}
    \end{subfigure}
    \begin{subfigure}[b]{0.22\textwidth}
        \centering
        \includegraphics[width=\textwidth]{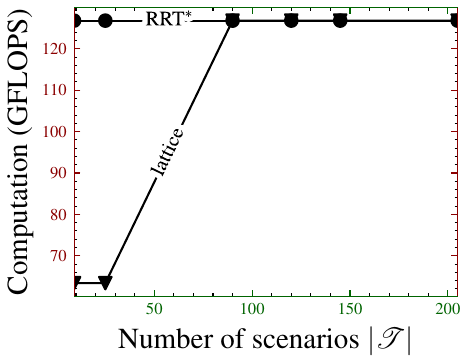}
        \caption{Computation comparison.}
    \end{subfigure}
    \caption{Resource comparison between RRT* planner and lattice planner with A* search for the same vehicle body (hatchback) and tasks.}
    \label{fig:planner_res_task}
\end{figure}

In~\cref{fig:codesign:computation:scenarios:pareto:imp,,fig:codesign:computation:prior:pareto:imp,,fig:codesign:computation:velocity:pareto:imp}, 
we display the implementations for the minimal computation solutions. The NVIDIA Jetson Orin Nano was chosen alongside the lattice motion planner using A* search for all cases. 
Notably, a camera sensor was never chosen for these solutions.
The implementations aiming for minimal mass are shown in \cref{fig:codesign:mass:scenarios:pareto:imp,fig:codesign:mass:prior:pareto:imp,,fig:codesign:mass:velocity:pareto:imp}, 
where there is a notable preference for cameras, predominantly coupled with the most powerful computing unit, the NVIDIA Jetson AGX Orin 64.
In \cref{fig:codesign:power:scenarios:pareto:imp,fig:codesign:power:prior:pareto:imp,,fig:codesign:power:velocity:pareto:imp} 
we present the implementations for the AV design with minimal power needs. Similarly as for the minimal computation,  
only one or two lidars are chosen.

Moreover, we present implementations tailored for the most cost-effective AV design 
in~\cref{fig:codesign:power:scenarios:pareto:imp,,fig:codesign:computation:scenarios:pareto:imp,,fig:codesign:computation:prior:pareto:imp,,fig:codesign:mass:velocity:pareto:imp}. 
Every implementation features at least one lidar sensor. Except for the cases highlighted in~\cref{fig:codesign:power:scenarios:pareto:imp,,fig:codesign:computation:prior:pareto:imp}, 
corresponding to the most complex task and the task with restricted prior, all configurations additionally incorporate camera sensors.
For the most complex task containing the most scenarios, highest nominal speed and no prior restriction, each implementation includes at least one lidar sensor. 

Throughout the minimal resource solutions for various tasks, we queried for the least resources by setting the average speed functionality requirement to just above zero. 
Thereby, the RRT* motion planner was consistently not selected.
Conversely, when examining tasks by requiring higher average speeds (e.g., $\unitfrac[24]{km}{h}$),
as illustrated in~\cref{fig:codesign:power:avgspeed:pareto:imp,,fig:codesign:mass:avgspeed:pareto:imp,,fig:codesign:computation:avgspeed:pareto:imp} for power, mass, and computation impacts, 
it becomes evident that the resource demands increase for higher average speeds, such as $\unitfrac[24]{km}{h}$ (with nominal speed of $\unitfrac[30]{km}{h}$). 
In every solution where minimal power, mass, computation, and cost were evaluated, 
the RRT* planner, coupled with the sedan vehicle, emerged as the selected choice. 
This pattern underscores the RRT* planner's superior efficiency within this case study, 
further highlighted by the sedan vehicle's highest acceleration capabilities and highest price.

\begin{figure}[h!] 
    \centering
    \begin{subfigure}[b]{0.33\textwidth}
        \centering
        \includegraphics[width=\textwidth]{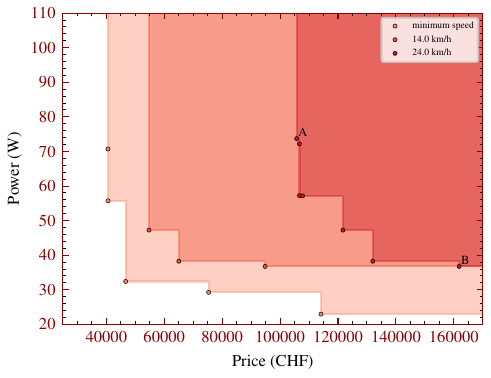}
    \end{subfigure}
    \\
    \begin{subfigure}[b]{0.15\textwidth}
        \centering
        \includegraphics[width=\textwidth]{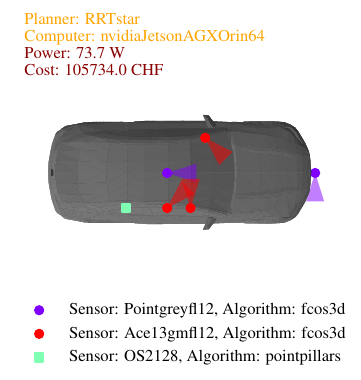}
        \captionsetup{labelformat=empty}
        \caption{A}
    \end{subfigure}
    \begin{subfigure}[b]{0.15\textwidth}
        \centering
        \includegraphics[width=\textwidth]{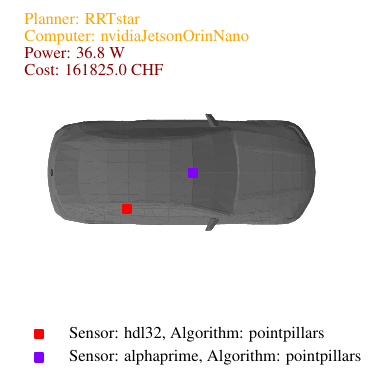}
        \captionsetup{labelformat=empty}
        \caption{B}
    \end{subfigure}
    \caption{Pareto front of price and power consumption across different average speeds, where planners providing higher average speed across all scenarios ($\unitfrac[30]{km}{h}$ nominal speed) demand more resources. 
    Implementations plots for points A and B are visualized vertically. 
    A and B indicate the lowest price and lowest power for the highest average speed, respectively.}
    \label{fig:codesign:power:avgspeed:pareto:imp}
\end{figure}

\begin{figure}[h!] 
    \centering
    \begin{subfigure}[b]{0.33\textwidth}
        \centering
        \includegraphics[width=\textwidth]{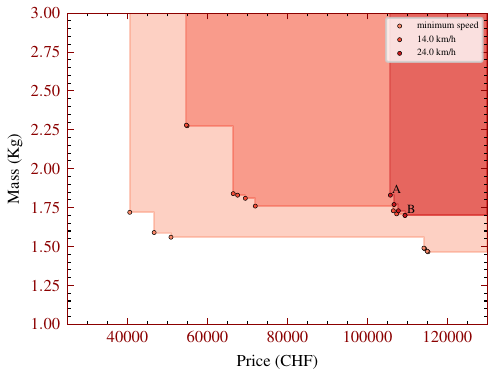}
    \end{subfigure}
    \\
    \begin{subfigure}[b]{0.15\textwidth}
        \centering
        \includegraphics[width=\textwidth]{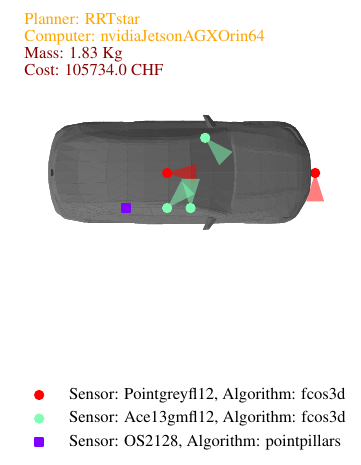}
        \captionsetup{labelformat=empty}
        \caption{A}
    \end{subfigure}
    \begin{subfigure}[b]{0.15\textwidth}
        \centering
        \includegraphics[width=\textwidth]{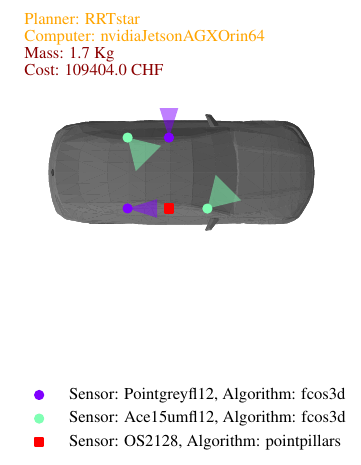}
        \captionsetup{labelformat=empty}
        \caption{B}
    \end{subfigure}
    \caption{Pareto front of price and mass across different average speeds, 
    where planners providing higher average speed across all scenarios ($\unitfrac[30]{km}{h}$ nominal speed) demand more resources. 
    Implementations for points A and B are visualized vertically. 
    A and B indicate the lowest price and mass for the highest average speed, respectively.}
    \label{fig:codesign:mass:avgspeed:pareto:imp}
\end{figure}

\begin{figure}[h!] 
    \centering
    \begin{subfigure}[b]{0.33\textwidth}
        \centering
        \includegraphics[width=\textwidth]{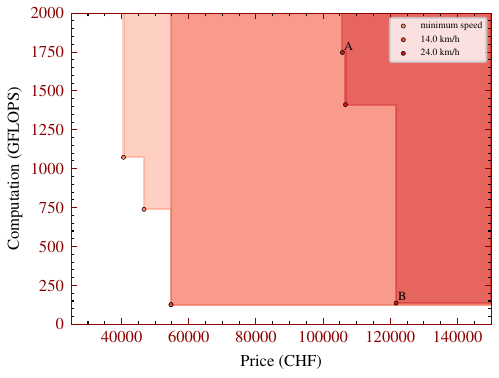}
    \end{subfigure}
    \\
    \begin{subfigure}[b]{0.15\textwidth}
        \centering
        \includegraphics[width=\textwidth]{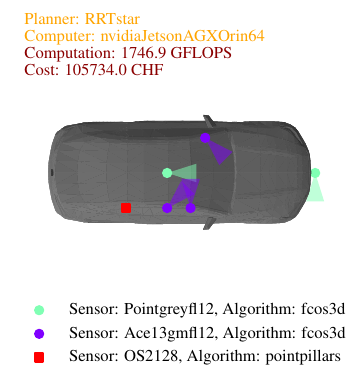}
        \captionsetup{labelformat=empty}
        \caption{A}
    \end{subfigure}
    \begin{subfigure}[b]{0.15\textwidth}
        \centering
        \includegraphics[width=\textwidth]{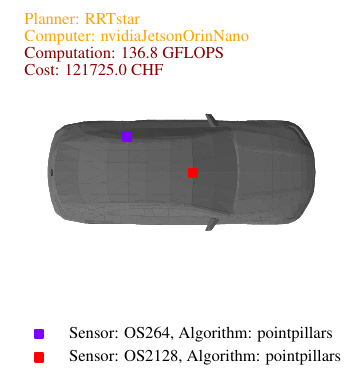}
        \captionsetup{labelformat=empty}
        \caption{B}
    \end{subfigure}
    \caption{Pareto front of price and computation across different average speeds, 
    where planners providing higher average speed across all scenarios ($\unitfrac[30]{km}{h}$ nominal speed) demand more resources. 
    Implementations for points A and B are visualized vertically. 
    A and B indicate the lowest price and lowest computation for the highest average speed, respectively.}
    \label{fig:codesign:computation:avgspeed:pareto:imp}
\end{figure}

\subsection{Discussion}\label{sec:discussion}
Our results show that increased task complexity, manifested by more scenarios, higher speeds, or broader prior knowledge, requires more resources for AV design. Each additional scenario may introduce new occupancy queries and prior knowledge, expanding the perception requirements. Higher speeds require sensor pipelines to detect objects at greater distances to account for the faster movement of the AV and the faster dynamics of the surrounding objects. In addition, a wider range of possible class configurations based on prior knowledge increases the perception requirements, calling for more advanced sensor pipelines that consume additional resources.

Motion planners that generate broader occupancy query distributions require enhanced sensing capabilities, thereby increasing the resource allocation to sensor pipelines to provide the required information. The broader occupancy query distributions result from either extended planning horizons, as illustrated in~\cref{fig:phorizon_res_task}, or the inherent strategy of the motion planner, as illustrated in~\cref{fig:planners} and \cref{fig:planner_res_task}. In the optimization process for minimal resource solutions at the lowest average speeds, the RRT* planner was consistently not selected. However, when the requirement shifted towards achieving the highest average speeds, 
the RRT* planner became the exclusive choice, paired with the vehicle body with the highest acceleration. This pattern suggests that while the RRT* planner demands more resources, 
it stands out as the most efficient option for optimizing average speed in the task.
Our analysis further confirms that to minimize computational requirements in AV design, lidar sensors emerge as the preferred choice due to their perception algorithms requiring fewer operations per second. Conversely, to reduce mass or cost, camera sensors are preferred due to their lighter weight and lower price compared to lidars. However, designs addressing the most complex task always include lidar sensors. This underscores the superior capability of lidar-equipped sensor pipelines due to their lower FNR and FPR across a wider range of class configurations.
\section{Conclusion}
\label{sec:conclusion}
This paper introduced a framework for designing mobile robots tailored to specific tasks by selecting hardware and software components. 
The choice comprises various elements including robot bodies, sensors, perception algorithms, sensor mounting configurations, motion planning algorithms, and computing units. 
We delved into the decision-making aspect of mobile robots by exploring what information a motion planner requires from the perception system. 
We introduced occupancy queries for sampling-based motion planners, allowing one to identify the necessary perception requirements based on prior knowledge of object classes, their dynamics, and shapes within the environment.
With the obtained perception requirements and the perception performance of a sensor combined with a detection algorithm, abstracted into FNRs and FPRs metrics, we formulated the sensor selection and placement problem and solved it as a weighted set cover problem using an \gls{ilp} approximation.
Our case study on designing an \gls{av} for urban driving scenarios revealed that enhanced task complexity, in terms of scenario variety or nominal speeds, necessitates more resources for the robot's design. We demonstrated how restricting prior knowledge of object configurations within scenarios can simplify designs and reduce resource requirements. Moreover, motion planners that generate broader distributions of occupancy queries or require longer planning horizons lead to increased task performance and perception requirements, necessitating more advanced and costly sensors and perception algorithms for the robot's design.
The findings highlight that the preference for specific sensors is influenced by the prioritization of resources. For designs prioritizing lower costs and weight, camera sensors are favored. Conversely, when minimizing power consumption 
and computing resources, lidar sensors are the preferred choice. Overall, lidar sensors exhibit superior perception performance and coverage, proving to be essential for handling complex tasks.
In future work, we aim to integrate additional agent architectures and motion planners beyond sampling-based. Additionally, rather than using upper bounds of FNRs and FPRs to determine object detection, we plan to implement filtering and sensor fusion techniques that incorporate considerations of time and uncertainty into the detection and sensor selection process. Moreover, we plan to conduct expanded case studies that include a variety of tasks and robots, not limited to AVs, and utilize state-of-the-art perception and decision-making software.
\bibliographystyle{IEEEtran}
\bibliography{references}
\begin{IEEEbiography}[{\includegraphics[width=1in,height=1.25in,clip,keepaspectratio]{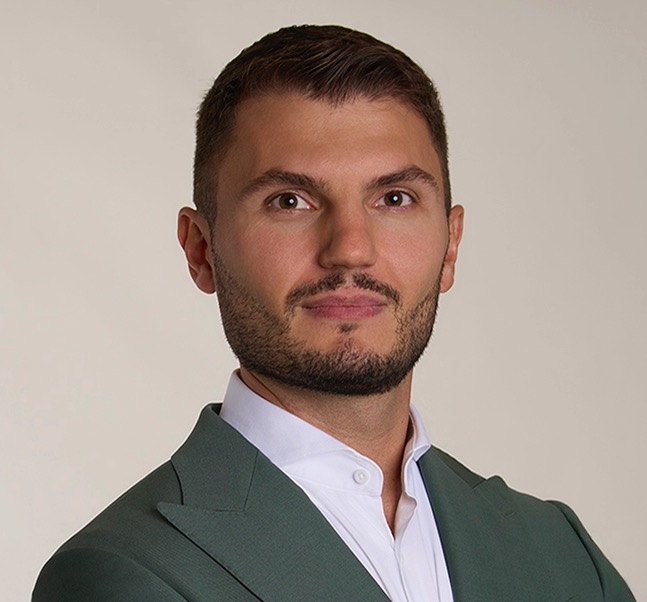}}]{Dejan Milojevic} 
is a postdoctoral researcher in Prof. Emilio Frazzoli's group at ETH Zürich. 
He received his Ph.D. in robotics, affiliated with both ETH Zürich and Empa. He holds a BSc. and MSc. in Mechanical Engineering from ETH Zürich. He has conducted research at Stanford University under Prof. Marco Pavone and has worked as a software engineer for Vay in Berlin, Germany. His research interests include co-design, sensor selection, perception, and decision-making in robotics.
\end{IEEEbiography}
\begin{IEEEbiography}[{\includegraphics[width=1in,height=1.25in,clip,keepaspectratio]{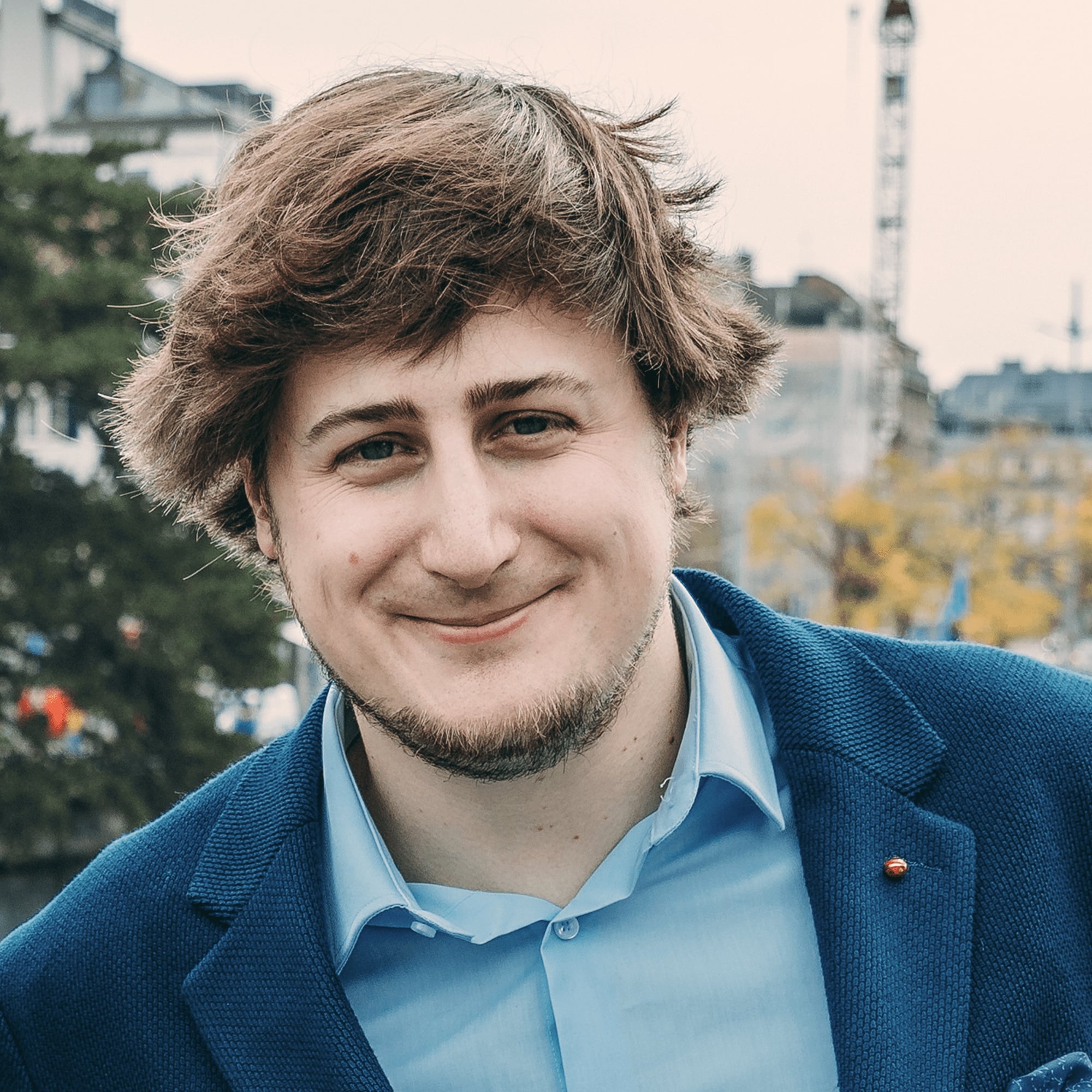}}]{Gioele Zardini} 
is the Rudge (1948) and Nancy Allen Assistant Professor at Massachusetts Institute of Technology. 
He is a PI in the Laboratory for Information and Decision Systems (LIDS), the Department of Civil and Environmental Engineering (CEE), and an affiliate faculty with the Institute for Data, Systems and Society (IDSS). 
From January to June 2024 he was a postdoctoral scholar at Stanford University, working with Prof. Marco Pavone, sponsored by NASA. He obtained his Ph.D. in Prof. Emilio Frazzoli's group at ETH Zürich. 
He received his BSc. and MSc. in Mechanical Engineering with focus in Robotics, Systems and Control from ETH Zürich in 2017 and 2019, respectively. Before joining MIT, he spent time in Singapore at nuTonomy (then Aptiv, now Motional), at Stanford University, in Marco Pavone's Autonomous Systems Lab.

\end{IEEEbiography}
\begin{IEEEbiography}[{\includegraphics[width=1in,height=1.25in,clip,keepaspectratio]{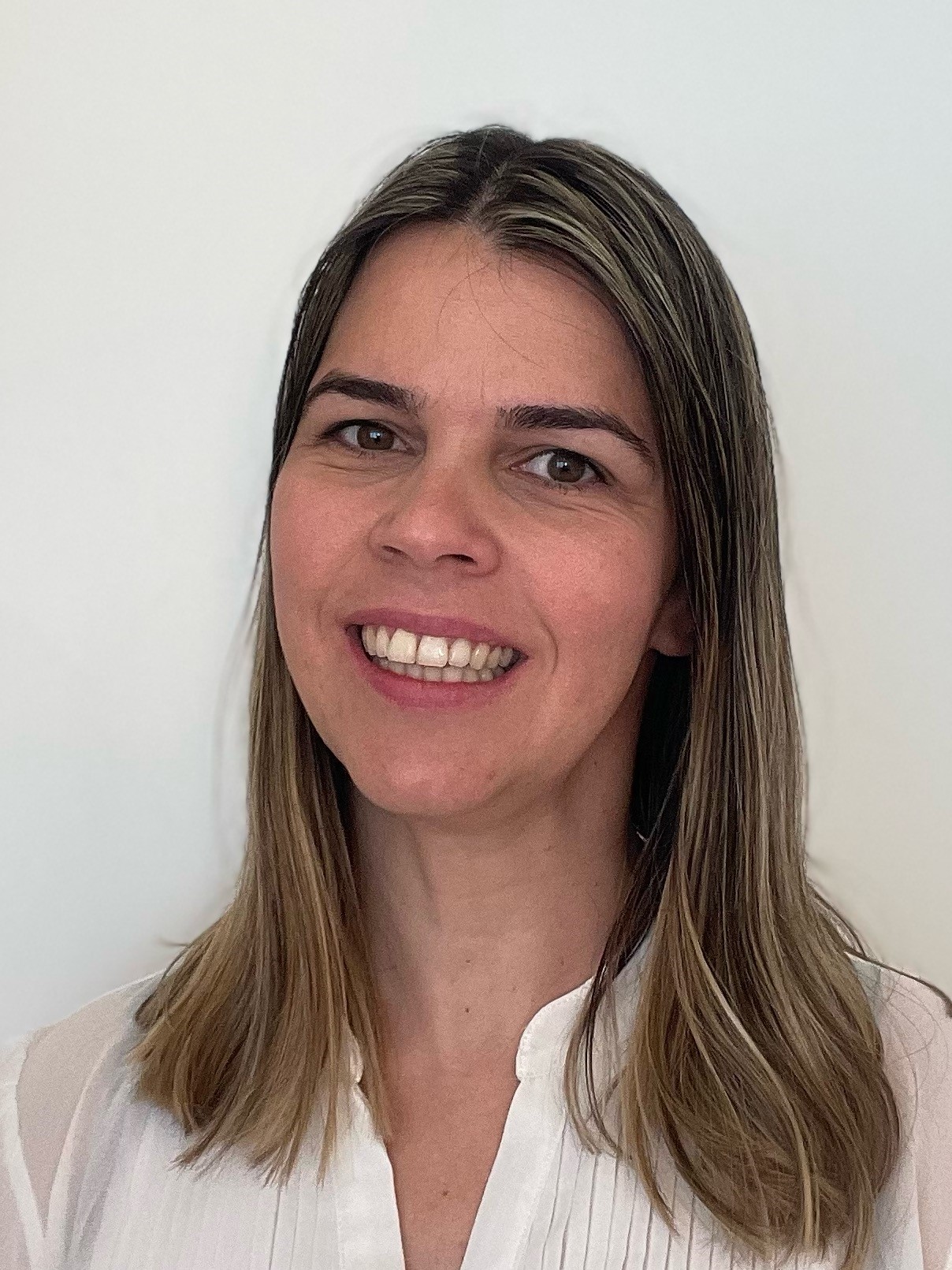}}]{Miriam Elser} 
leads the Vehicle Systems Group at the Chemical Energy Carriers and Vehicle Systems Laboratory at Empa. She holds a Master degree in Physics from the University of Milan and completed her Ph.D. in Atmospheric Environmental Science at ETH Zürich and the Paul Scherrer Institute in Switzerland. Miriam's current research focuses on future road mobility, with an emphasis on developing decarbonization strategies for road vehicles, validating and integrating new technologies such as autonomous driving.
\end{IEEEbiography}
\begin{IEEEbiography}[{\includegraphics[width=1in,height=1.25in,clip,keepaspectratio]{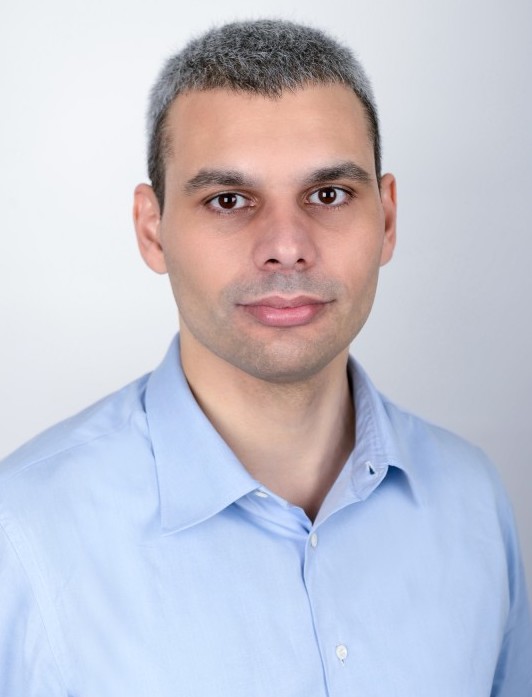}}]{Andrea Censi} 
is the deputy director of the Dynamic Systems and Control chair at ETH Zürich, director of the Duckietown Foundation, and  founder of Zupermind.
He obtained a M.Eng. degree in Control and Robotics from the University of Rome, ``Sapienza'', and a Ph.D. from California Institute of Technology. He has been a research scientist at the Massachusetts Institute of Technology, and the Director of Research at Aptiv Autonomous Mobility (now Motional).
He has been the recipient of NSF and AFRL awards.
\end{IEEEbiography}
\begin{IEEEbiography}[{\includegraphics[width=1in,height=1.25in,clip,keepaspectratio]{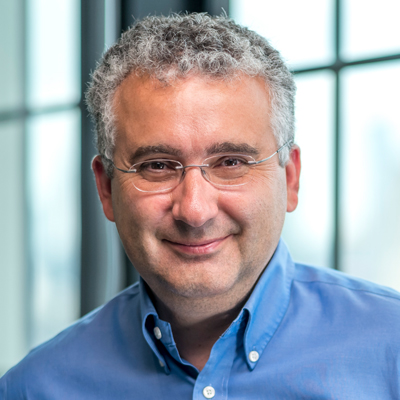}}]{Emilio Frazzoli} is a Professor of Dynamic Systems and Control at ETH Zürich. 
Until March 2021, he was Chief Scientist of Motional, the latest embodiment of nuTonomy, the startup he founded with Karl Iagnemma in 2013.
He received the Laurea degree in aerospace engineering from the University of Rome, ``Sapienza'', in 1994, and the Ph.D. degree in Aeronautics and Astronautics from MIT in 2001.
Before joining ETH Zürich in 2016, he held faculty positions at UIUC, UCLA, and MIT.
His current research interests focus primarily on autonomous vehicles, mobile robotics, and transportation systems.
He was the recipient of a NSF CAREER award in 2002, the IEEE George S. Axelby award in 2015, the IEEE Kiyo Tomiyasu award in 2017, the RSS Test of Time award in 2022, and is an IEEE Fellow since 2019.
\end{IEEEbiography}

\end{document}